\theoremstyle{plain}
\newtheorem{theorem}{Theorem}[section]
\newtheorem{proposition}[theorem]{Proposition}
\newtheorem{lemma}[theorem]{Lemma}
\newtheorem{corollary}[theorem]{Corollary}
\theoremstyle{definition}
\theoremstyle{remark}
\theoremstyle{plain}
\newenvironment{manualproposition}[1]{%
  \manualpropositioninner
}{\endmanualpropositioninner}
\newenvironment{manualcorollary}[1]{%
  \manualcorollaryinner
}{\endmanualcorollaryinner}
\newcommand{\bs}{\boldsymbol}
\newcommand{\udl}{\underline}
\newcommand{\tbcl}{\cellcolor[HTML]{E0FAE0}}
\DeclareRobustCommand\onedot{\futurelet\@let@token\@onedot}
\def\@onedot{\ifx\@let@token.\else.\null\fi\xspace}
\def\eg{\emph{e.g}\onedot} 
\def\ie{\emph{i.e}\onedot}
\def\eqref#1{Eq.~(\ref{#1})}
\def\bm{{\boldsymbol \mu}}
\def\bx{{\boldsymbol x}}
\def\by{{\boldsymbol y}}
\def\bz{{\boldsymbol z}}
\def\beps{{\boldsymbol \epsilon}}
\def\bth{{\boldsymbol \theta}}
\def\bE{\mathbb{E}}
\def\td{\text{d}}
\def\bsf{{\boldsymbol f}}
\def\bw{{\boldsymbol w}}
\def\dx{\text{d}{\boldsymbol x}}
\def\dt{\text{d}t}
\def\sk{{\text{skip}}}
\def\Ts{{T_{\text{skip}}}}
\def\Ns{{N_{\text{skip}}}}
\DeclareMathOperator{\diag}{diag}
\DeclareMathOperator{\cov}{Cov}
\DeclareMathOperator{\sde}{SDE}
\DeclareMathOperator{\ode}{ODE}
\DeclareMathOperator{\tv}{TV}
\begin{document}

\twocolumn[
\icmltitle{Boost-and-Skip: A Simple Guidance-Free Diffusion for Minority Generation}



\icmlsetsymbol{equal}{*}

\begin{icmlauthorlist}
\icmlauthor{Soobin Um}{equal,sch}
\icmlauthor{Beomsu Kim}{equal,sch}
\icmlauthor{Jong Chul Ye}{sch}
\end{icmlauthorlist}

\icmlaffiliation{sch}{Graduate School of AI, Korea Advanced Institute of Science and Technology (KAIST), Daejeon, Republic of Korea}

\icmlcorrespondingauthor{Jong Chul Ye}{jong.ye@kaist.ac.kr}

\icmlkeywords{Diffusion models, Minority generation, Stochastic sampling}

\vskip 0.3in
]



\printAffiliationsAndNotice{\icmlEqualContribution} 

\begin{abstract}
Minority samples are underrepresented instances located in low-density regions of a data manifold, and are valuable in many generative AI applications, such as data augmentation, creative content generation, etc. Unfortunately, existing diffusion-based minority generators often rely on computationally expensive guidance dedicated for minority generation. To address this, here we present a simple yet powerful guidance-free approach called \emph{Boost-and-Skip} for generating minority samples using diffusion models. The key advantage of our framework requires only two minimal changes to standard generative processes: (i) variance-boosted initialization and (ii) timestep skipping. We highlight that these seemingly-trivial modifications are supported by solid theoretical and empirical evidence, thereby effectively promoting emergence of underrepresented minority features. Our comprehensive experiments demonstrate that Boost-and-Skip greatly enhances the capability of generating minority samples, even rivaling guidance-based state-of-the-art approaches while requiring significantly fewer computations. Code is available at \url{https://github.com/soobin-um/BnS}.
\end{abstract}

\section{Introduction}
\label{sec:intro}

Diffusion models are a prominent class of modern generative AI, known for their ability to generate high-quality content across various data modalities~\citep{ho2020denoising, ho2022video, zhang2023survey}. Unlike traditional frameworks like GANs~\citep{goodfellow2014generative}, diffusion models are notably robust, effectively learning the underlying data distribution even for rare or underrepresented examples in the training data~\citep{sehwag2022generating}. This inherent advantage has been readily adopted by researchers, leading to significant progress in \emph{minority generation} -- the task of generating minority samples that reside in low-density regions of a data manifold\footnote{Formally, the task of minority generation is expressible as drawing instances from ${\cal S}_{\epsilon} \coloneqq \{ {\bz} \in {\cal M}: p_{\bth}({\bx}) < \epsilon \}$, where ${\cal M}$ represents the data manifold, and $p_{\bth}$ denotes the implicit density learned by the diffusion model. Here $\epsilon$ is a small positive constant.}~\citep{sehwag2022generating, um2023don, um2024self, um2024minorityprompt}. The distinctive characteristics of minority instances make them important in various applications, including medical diagnosis~\citep{um2023don}, anomaly detection~\citep{du2022vos, du2023dream}, and creative AI~\citep{rombach2022high, han2022rarity}.

\begin{figure}[t!]
    \centering
    \includegraphics[width=1.0\columnwidth]{./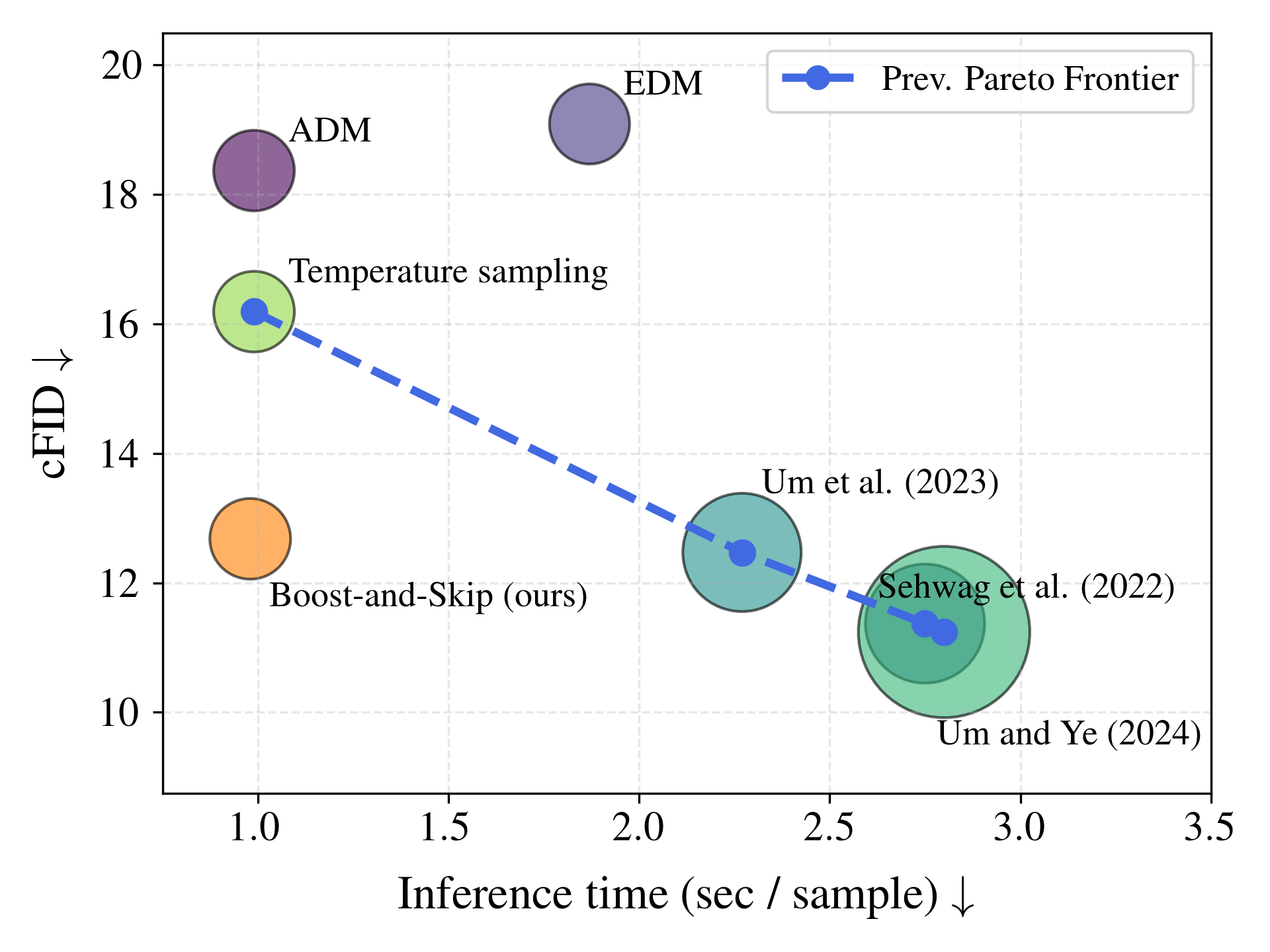}
    \vspace{-0.4cm}
    \caption{
    \textbf{Trade-off between minority generation performance and complexity on ImageNet $\mathbf{64\times64}$.}
    The area of each bubble corresponds to peak memory consumption.
    cFID~\citep{parmar2022aliased} calculations are based on real minority data (as in~\citet{um2023don, um2024self}), meaning that lower values indicate generation of more realistic minority samples.
    Note that our framework achieves competitive minority generation performance comparable to guided samplers~\citep{sehwag2022generating, um2023don, um2024self} while maintaining minimal complexity, thus advancing beyond the existing Pareto Frontier.
    See~\cref{tab:complexity} for detailed metric values.
    }
    \label{fig:bubble}
    \vspace{-0.6cm}
\end{figure}

Existing high-performance minority generators primarily rely upon guided sampling~\citep{sehwag2022generating, um2023don, um2024self, um2024minorityprompt}. For instance, the authors in~\citet{sehwag2022generating, um2023don} employ classifier guidance~\citep{dhariwal2021diffusion} to steer generation toward low-density regions using separately trained classifiers. A more recent approach by~\citet{um2024self} mitigates this reliance on external components (such as classifiers) by developing a self-contained sampler that derives low-density guidance solely from a pretrained diffusion model. However, this method incurs significant computational overhead due to its reliance on backpropagation through the diffusion network to compute the guidance term. Although~\citet{um2024minorityprompt} introduced a refined guidance term, specifically for text-to-image generation~\citep{nichol2021glide}, the computational bottleneck persists.

In this work, we depart from the paradigm of guided sampling and develop a simple and computationally efficient technique for minority generation. To this end, we first investigate potential approaches for guidance-free\footnote{By \emph{guidance-free}, we refer to approaches that do not incorporate a dedicated guidance term specifically designed to promote minority sample generation. Note that this does not preclude the use of conventional guidance mechanisms, such as classifier-free guidance~\citep{ho2022classifier}.} minority sampling and highlight their pitfalls. We then present \emph{Boost-and-Skip}, our novel guidance-free approach that sidesteps the previous issues. At a high level, our framework enables \emph{minority-favored initialization} that encourages emergence of underrepresented features throughout generative processes without the help of additional low-density guidance. Boost-and-Skip accomplishes this through two simple modifications to standard stochastic generative processes like DDPM~\citep{ho2020denoising}.

Specifically, we propose to initiate stochastic generation with variance-\emph{boosted} noise instead of standard Gaussian noise to encourage initializations in low-density regions. Our second yet critical modification involves \emph{skipping} several of the earliest timesteps to enhance the impact of low-density initialization from the first modification. 
We demonstrate, both theoretically and empirically, that these two modifications together lead to significantly improved minority generation, whereas each modification alone only yields marginal gains. In particular, we invoke stochastic contraction theory~\citep{pham2008analysis, pham2009contraction, chung2022come} to show that the contracting nature of the stochastic generative process gradually corrects unwanted spurious information introduced by the noise amplification, thereby contributing to the generation of high-quality minority samples.

To demonstrate the empirical benefits of our approach, we conducted extensive experiments across various real-world benchmarks. Importantly, our sampling technique achieves competitive minority generation performance compared to prior works~\citep{sehwag2022generating, um2023don, um2024self}, while offering substantially reduced computational costs. For instance, on ImageNet $64 \times 64$, our minority sampler requires 65\% less wall-clock time and 4.5 times lower peak memory consumption than the current state-of-the-art method by~\citet{um2024self}. Moreover, thanks to the simplicity, our method is highly scalable. See~\cref{fig:bubble} for a visual illustration of these benefits and~\cref{tab:complexity} for detailed metric values. To further highlight the practical significance of our sampler, we demonstrate its effectiveness in a potential downstream application, specifically its use in data augmentation for classification tasks.

Our contributions are summarized as follows:
\begin{compactitem}
    \item We propose \emph{Boost-and-Skip}, a novel guidance-free approach that introduces two simple yet effective modifications to standard stochastic generative processes: (i) variance-\emph{boosted} initialization and (ii) timestep \emph{skipping}.
    \item We provide both theoretical insights and empirical evidence to validate the effectiveness of the two modifications, which together lead to significantly improved minority generation performance.
    \item We empirically show that our approach achieves competitive performance with substantially lower computational costs compared to state-of-the-art methods. 
\end{compactitem}


\begin{figure*}[t]
\centering
\begin{subfigure}{0.162\linewidth}
\includegraphics[width=\linewidth]{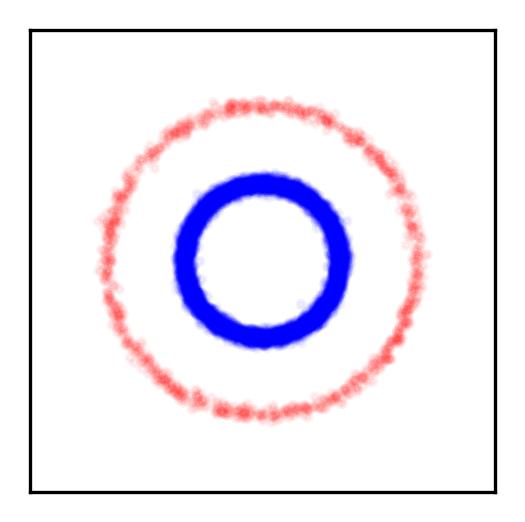}%
\vspace*{-2mm}
\caption{\small\parbox[t]{.59\linewidth}{Ground truth\\\strut}}%
\end{subfigure} \hfill %
\begin{subfigure}{0.162\linewidth}
\includegraphics[width=\linewidth]{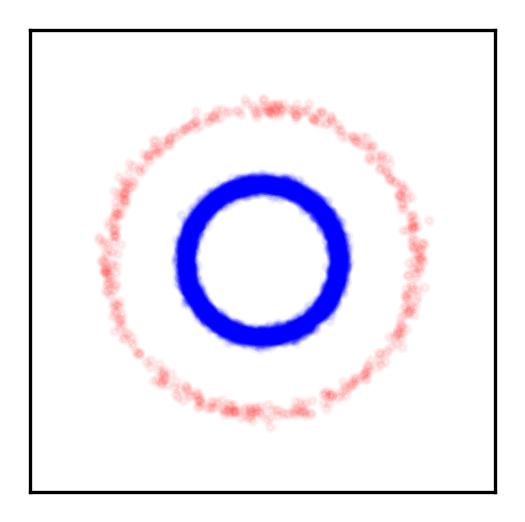}%
\vspace*{-2mm}
\caption{\small\parbox[t]{.45\linewidth}{Diffusion\\\strut}}%
\end{subfigure} \hfill %
\begin{subfigure}{0.162\linewidth}
\includegraphics[width=\linewidth]{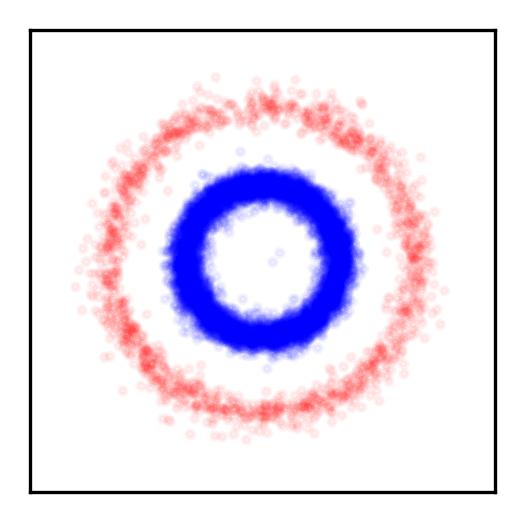}%
\vspace*{-2mm}
\caption{\small\parbox[t]{.6\linewidth}{Temperature\\sampling\strut}}%
\end{subfigure} \hfill %
\begin{subfigure}{0.162\linewidth}
\includegraphics[width=\linewidth]{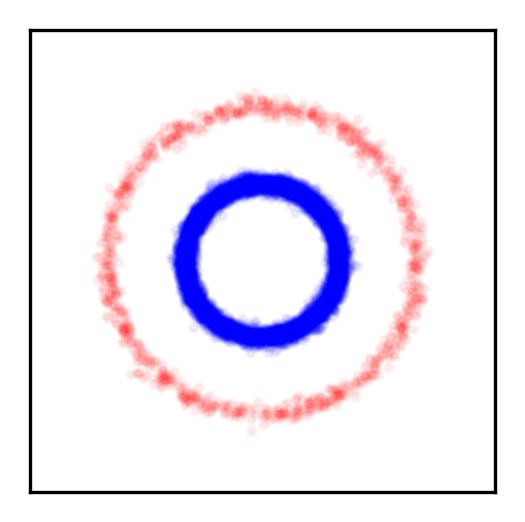}%
\vspace*{-2mm}
\caption{\small\parbox[t]{.5\linewidth}{Boost only\\\strut}}%
\end{subfigure} \hfill %
\begin{subfigure}{0.162\linewidth}
\includegraphics[width=\linewidth]{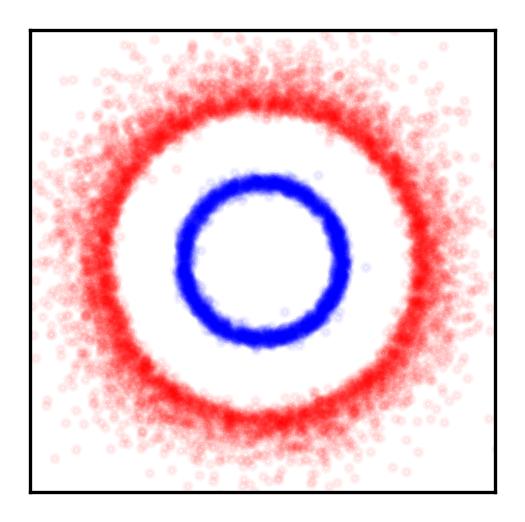}%
\vspace*{-2mm}
\caption{\small\parbox[t]{.45\linewidth}{w/o SDE\\\strut}}%
\end{subfigure} \hfill %
\begin{subfigure}{0.162\linewidth}
\includegraphics[width=\linewidth]{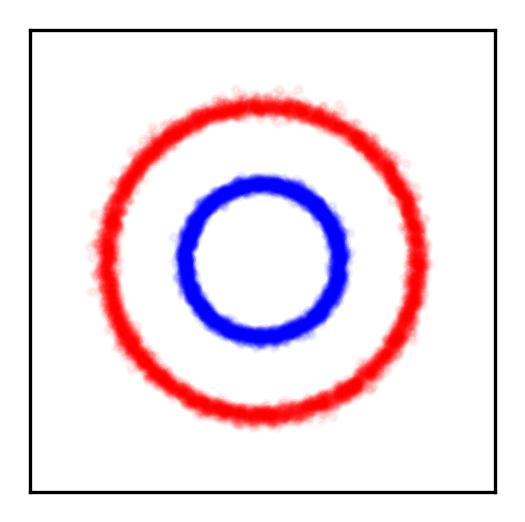}%
\vspace*{-2mm}
\caption{\small\parbox[t]{.55\linewidth}{B\&S (Ours)\\\strut}}%
\end{subfigure}
\vspace*{-6mm} %
\caption{\label{figToyExample}
A 2D distribution with two concentric circles indicated by blue and red.
Red circle represents minority samples, being sampled $\times 10$ less compared to the blue circle.
\textbf{(a)} Ground truth samples.
\textbf{(b)} Standard diffusion sampling assigns even less mass to minority samples.
\textbf{(c)} Temperature sampling~\citep{ackley1985learning} generates off-manifold samples, due to errors accumulated through sampling trajectories (see~\cref{figTrajAnalysis} for details).
\textbf{(d)} Boosting without skipping closely recovers the data distribution yet does not promote minority features over the ground truth.
\textbf{(e)} Without stochasticity (\eg, using PF-ODE), the proposed modifications fail to contract truncation error, leading to off-manifold samples.
\textbf{(f)} Boost-and-Skip (B\&S) generates minority samples without falling off the data manifold.
} %
\vspace*{-3mm} %
\end{figure*}

\section{Background}
\label{sec:background}

Diffusion models are characterized by the two stochastic differential equations (SDEs): the forward and reverse SDEs~\citep{song2020score}. Given a $d$-dimensional noise-perturbed data space $\bx(t) \in \mathbb{R}^d$ with $t \in [0, T]$, the forward SDE is conventionally expressed as:
\begin{align}
\label{eq:fSDE_general}
    \td\bx = \bsf( \bx, t ) \td t + g(t) \td \bw,
\end{align}
where $\bsf(\cdot, \cdot): \mathbb{R}^d \rightarrow \mathbb{R}^d$ is a drift term, and $g(\cdot): \mathbb{R} \rightarrow \mathbb{R}$ indicates a (scalar) diffusion coefficient coupled with a standard Wiener process $\bw \in \mathbb{R}^d$. With properly chosen $\bsf(\cdot, \cdot)$ and $g(\cdot)$, the forward SDE describes a progressive destruction of the target data distribution $\bx(0) \sim p_0$ upto a fully noised one $\bx(T) \sim p_T$ that one can easily sample from (\eg, a Gaussian distribution).

The reverse SDE runs backward from the noised distribution $\bx(T) \sim p_T$ down to the data distribution $\bx(0) \sim p_0$, formally written as~\citep{song2020score}:
\begin{align}
\label{eq:rSDE_general}
    \dx = \big[ \bsf (\bx, t) - g(t)^2 \nabla_{\bx} \log p_t(\bx) \big] \dt + g(t) \td \tilde{\bw},
\end{align}
where $\tilde{\bw}$ is a standard Wiener process with backward time flow (from $T$ to $0$). The score function $\nabla_{\bx} \log p_t(\bx)$ is commonly approximated by a neural network ${\bs s}_{\bth} (\bx, t)$ via denoising score matching~\citep{vincent2011connection, song2020score}.

One prominent instance of diffusion processes is variance-preserving (VP) SDE~\citep{song2020score}:
\begin{align}
\label{eq:fVPSDE}
    \dx = -\frac{1}{2} \beta(t) \bx \dt + \sqrt{\beta(t)} \td {\bs w},
\end{align}
where $\beta(t)$ is a positive function which integrates to infinity to ensure $p_T \approx {\cal N}({\bs 0}, {\bs I})$. VP-SDE conditioned on an initial point $\bx(0)$ can be characterized by a Gaussian forward diffusion kernel \citep{song2020score}:
\begin{align}
    p_{0t}(\bx(t) | \bx(0)) = {\cal N} \big(\bx(t) | \alpha(t)  \bx(0), (1 - \alpha(t)^2) {\bs I}  \big),
\end{align}
where $\alpha(t)$ is defined as
\begin{align}\label{eq:alpha}
\alpha(t) \coloneqq e^{-\frac{1}{2} \int^t_0 \beta(s) \td s}.
\end{align}
The associated reverse process of VP-SDE reads:
\begin{align}
\label{eq:rVPSDE}
    \dx = \left[ -\frac{1}{2}  \beta(t) \bx  - \beta(t) \nabla_{\bx} \log p_t(\bx) \right] \dt + \sqrt{\beta(t)} \td \tilde{\bw}.
\end{align}
Simulating~\eqref{eq:rVPSDE} with a score model ${\bs s}_{\bth} (\bx, t)$ (trained on the VP-SDE forward kernel) generates sample trajectories whose marginal distributions approximate $\{p_t(\bx)\}_{t=0}^T$~\citep{song2020score}; this is often called an empirical reverse VP-SDE.

\noindent \textbf{Discrete VP-SDE (DDPM).} Consider $N$ discrete points linearly distributed across timesteps $t \in [0,T]$. A discretized expression of the forward SDE in~\eqref{eq:fVPSDE} is~\citep{song2020score}:
\begin{align}
\label{eq:fSDE_ddpm}
    \bx_i = \sqrt{1 - \beta_i} \bx_{i-1} + \sqrt{\beta_i} \bz_i, \quad i = 1, \dots, N,
\end{align}
where $\bz_i \sim {\cal N} ({\bs 0}, {\bs I})$. $\{ \beta_i \}_{i=1}^N$ denotes a discretized sequence of $\beta(t)$. Unfolding \eqref{eq:fSDE_ddpm} across discrete timesteps $i$ yields one-shot perturbation: $\bx_i = \sqrt{\bar{\alpha}_i} \bx_0 + \sqrt{1 - \bar{\alpha}_i} \bz$, 
where $\bz \sim {\cal N} ({\bs 0}, {\bs I})$, and $\bar{\alpha}_i \coloneqq \prod_{j=1}^{i} \alpha_j$ for $\alpha_i \coloneqq 1 - \beta_i$. By discretizing \eqref{eq:rSDE_general}, the associated reverse process can be written as~\citep{song2020score}:
\begin{align}
\label{eq:rSDE_ddpm}
    \bx_{i-1} = \frac{1}{\sqrt{\alpha_i}} \big\{ \bx_i + (1 - \alpha_i) \nabla_{\bx} \log p_i(\bx) \big\} + \sqrt{1 - \alpha_i} \bz,
\end{align}
where $i \in \{1, \dots, N\}$ and $\nabla_{\bx} \log p_i(\bx)$ represents the score function for discrete timesteps. Data generation is now performed by first sampling from $\bx_N \sim {\cal N}({\bs 0}, {\bs I})$ and simulating~\eqref{eq:rSDE_ddpm} down to $\bx_0$ with the learned score network ${\bs s}_{\bth} (\bx, i) \approx \nabla_{\bx} \log p_i(\bx)$.

\vspace{-2mm}
\section{Method}
\label{sec:method}

\subsection{Towards guidance-free minority sampler}
\label{subsec:gf}

Our framework starts by investigating potential approaches to minority-focused generation without relying on low-density guidance. One viable method is \emph{temperature sampling}~\citep{ackley1985learning}, a method commonly used in traditional likelihood-based frameworks~\citep{ackley1985learning, kingma2018glow}. In the context of diffusion models, temperature sampling can be implemented by scaling the score function along the generation trajectory~\citep{dhariwal2021diffusion}. For instance, in the empirical reverse VP-SDE that employs a pretrained score function ${\bs s}_{\bth} (\bx, t)$, this translates to:
\begin{align}
\label{eq:rSDE_ddpm_ts}
    \dx = \left[ -\frac{1}{2}  \beta(t) \bx  - \beta(t) \frac{{\bs s}_\bth (\bx, t)}{\tau} \right] \dt + \sqrt{\beta(t)} \td \tilde{\bw},
\end{align}
where $\tau$ is the \emph{temperature} parameter. Since ${\bs s}_\bth (\bx, t) / \tau \approx \nabla_\bx \log p_t(\bx)^{1/\tau}$, this sampler has been regarded as producing trajectories along $\{ \frac{1}{Z_t} p_t(\bx)^{1/\tau} \}_{t=0}^T$~\citep{dhariwal2021diffusion}, where $Z_t$ is a normalization constant. In this context, choosing $\tau > 1$ (\ie, high-temperature) is expected to favor generation in low-density regions.

However, we argue that this naive application of high-temperature sampling is ineffective within the diffusion model framework. In fact, the marginal densities of trajectories generated by~\eqref{eq:rSDE_ddpm_ts} are distinct from $\{ \frac{1}{Z_t} p_t(\bx)^{1/\tau} \}_{t=0}^T$, potentially leading to unwanted generation results. A formal description is provided below, with proof in~\cref{subsec:proof_ts_flaw}.
\begin{proposition}
\label{prop:ts_flaw}
    Consider the temperature-scaled reverse VP-SDE in~\eqref{eq:rSDE_ddpm_ts}. Assuming the score function is optimal, \ie, ${\bs s}_\bth (\bx, t) = \nabla_\bx \log p_t(\bx)$, the marginal densities of samples generated by this SDE are not equal to $\{ \frac{1}{Z_t}  p_t(\bx)^{1/\tau} \}_{t=0}^T$ in general.
\end{proposition}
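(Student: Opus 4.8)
The plan is to exhibit the discrepancy by comparing the Fokker--Planck equation associated with the temperature-scaled reverse SDE in \eqref{eq:rSDE_ddpm_ts} against the Fokker--Planck equation that the putative target family $\{q_t^\tau(\bx) \coloneqq \frac{1}{Z_t}p_t(\bx)^{1/\tau}\}$ would have to satisfy, and showing these cannot coincide for a generic $p_t$. First I would write down the reverse-time Fokker--Planck (Kolmogorov forward) equation governing the marginals $\tilde p_t$ of the process \eqref{eq:rSDE_ddpm_ts}: with drift $\bsf_\tau(\bx,t) = -\tfrac12\beta(t)\bx - \beta(t)\nabla_\bx\log p_t(\bx)/\tau$ and diffusion coefficient $g(t)^2 = \beta(t)$, the marginals evolve (running backward in $t$) according to $-\partial_t \tilde p_t = -\nabla\cdot(\bsf_\tau \tilde p_t) + \tfrac12\beta(t)\Delta \tilde p_t$. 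The key algebraic observation is that $q_t^\tau$ is \emph{not} a stationary-consistent solution of this PDE even at a single fixed time: one checks that the only scaled-score SDE whose marginals are $q_t^\tau$ is the one with diffusion coefficient $g(t)^2/\tau$ (equivalently, noise $\sqrt{\beta(t)/\tau}\,\td\tilde\bw$), because the reverse SDE that genuinely transports $q_t^\tau$ must have its noise level matched to the score scaling — scaling only the drift breaks the balance between the score term and the Laplacian term in the Fokker--Planck equation.

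Concretely, the cleanest route is a \emph{contradiction-at-a-point} argument. Suppose, for contradiction, that the marginals of \eqref{eq:rSDE_ddpm_ts} equal $q_t^\tau$ for all $t$. Then plugging $\tilde p_t = q_t^\tau = e^{-\log Z_t}p_t^{1/\tau}$ into the Fokker--Planck equation above and simplifying (using $\nabla q_t^\tau = \tfrac{1}{\tau}q_t^\tau \nabla\log p_t$ and $\Delta q_t^\tau = q_t^\tau[\tfrac{1}{\tau}\Delta\log p_t + \tfrac{1}{\tau^2}\|\nabla\log p_t\|^2]$), I would collect terms and isolate the identity that $p_t$ would need to satisfy. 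The residual will contain a term proportional to $\big(\tfrac{1}{\tau} - \tfrac{1}{\tau^2}\big)\beta(t)\,q_t^\tau\,\|\nabla_\bx\log p_t(\bx)\|^2$, which is the telltale mismatch: it vanishes only when $\tau = 1$ (or in the measure-zero degenerate case where $\nabla\log p_t\equiv 0$, i.e.\ $p_t$ uniform, which is impossible for a genuine VP-SDE marginal on $\mathbb{R}^d$). Hence for $\tau\neq 1$ the two families cannot agree, establishing the claim. An equivalent and perhaps more transparent framing: by Anderson's reverse-time SDE theorem the process whose marginals are exactly $q_t^\tau$ is the reverse SDE associated to the forward process with diffusion $\sqrt{\beta(t)/\tau}$, which has noise coefficient $\sqrt{\beta(t)/\tau} \neq \sqrt{\beta(t)}$; since \eqref{eq:rSDE_ddpm_ts} keeps the original noise coefficient $\sqrt{\beta(t)}$, its marginals differ — one can make this rigorous by noting a diffusion's marginal law determines (together with the drift) the diffusion coefficient via the quadratic variation / second-order term of the generator, so two SDEs with the same marginals and the same state space must share the same diffusion coefficient.

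I would present the Fokker--Planck version as the main proof since it is self-contained and elementary, and relegate the Anderson-theorem reformulation to a remark. A subtle point worth addressing carefully is what ``in general'' means in the statement: I should phrase the conclusion as ``there exist data distributions $p_0$ (indeed, essentially all non-degenerate ones) for which the marginals differ,'' and the $\|\nabla\log p_t\|^2$ residual term makes this precise — it is a nonzero function of $\bx$ whenever $p_t$ is non-constant, so no cancellation with the other (drift-divergence and time-derivative) terms is possible except on a null set. The main obstacle is bookkeeping: carefully expanding $\partial_t q_t^\tau$ (which picks up a $-\dot{(\log Z_t)}\,q_t^\tau$ contribution) and the spatial derivatives of $q_t^\tau$ without sign errors, and then arguing cleanly that the leftover $\big(\tfrac1\tau-\tfrac1{\tau^2}\big)$-term genuinely cannot be absorbed. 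A convenient shortcut that avoids tracking $Z_t$ altogether is to test the claim at a single well-chosen instant $t_0$ and a single point $\bx_0$ where $\nabla\log p_{t_0}(\bx_0)\neq 0$, reducing the whole argument to verifying that one scalar identity fails for $\tau\neq1$.
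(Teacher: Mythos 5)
Your route is sound and genuinely different from the paper's. The paper argues via the \emph{integrated} consistency condition: it observes that if the marginals were $\{\frac{1}{Z_t}p_t^{1/\tau}\}$, then $\frac{1}{Z_t}p_t(\bx)^{1/\tau}$ would have to equal $\int p_{0t}(\bx|\bx_0)\frac{1}{Z}p_0(\bx_0)^{1/\tau}\,\td\bx_0$, and then refutes this with a fully explicit counterexample ($p_0$ a Dirac mass, so both sides are Gaussians whose variances $\tau(1-\alpha(t))$ vs.\ $(1-\alpha(t))$ visibly disagree for $\tau\neq 1$). You instead work at the level of the Fokker--Planck equation and isolate the pointwise obstruction. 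Your key identity is correct: substituting $q_t^\tau=\frac{1}{Z_t}p_t^{1/\tau}$ into the reverse-time Fokker--Planck equation and using $\partial_t p_t$ from the forward equation, the $\bx\cdot\nabla\log p_t$ and $\Delta\log p_t$ terms cancel and one is left needing $\frac{1}{2}\beta(t)\big(\frac{1}{\tau}-\frac{1}{\tau^2}\big)\|\nabla_\bx\log p_t(\bx)\|^2$ to equal an $\bx$-independent quantity (namely $\partial_t\log Z_t$ plus a constant multiple of $\beta(t)d$). Your approach buys generality — it shows failure for essentially every non-degenerate $p_0$ and quantifies the mismatch — and it also supplies the justification for the consistency condition that the paper asserts without proof ("Then, we should have\dots"); the paper's approach buys concreteness and avoids all PDE bookkeeping.

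Two caveats. First, your stated Fokker--Planck equation has a sign slip in the drift term (the reverse-time marginals satisfy $\partial_t\tilde p_t=-\nabla\cdot(\bsf_\tau\tilde p_t)-\frac{1}{2}\beta(t)\Delta\tilde p_t$; a useful sanity check is that $\tau=1$ must reproduce the forward equation for $p_t$). Second, the "single point" shortcut is not quite sufficient as stated: the residual term does not need to vanish, it needs to match the $\bx$-independent quantity $\partial_t\log Z_t+\frac{1}{2}\beta(t)d(1-\frac{1}{\tau})$, so the correct contradiction is that $\|\nabla_\bx\log p_t(\bx)\|^2$ is \emph{non-constant in $\bx$} (compare two points, or take $p_t$ Gaussian), not merely nonzero at one point. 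The Anderson-theorem remark is also shakier than the main argument — the family $\{\frac{1}{Z_t}p_t^{1/\tau}\}_t$ is not the marginal family of the $\sqrt{\beta(t)/\tau}$-forward process either, since tempering does not commute with the heat semigroup — but since you relegate it to a remark, this does not affect the proof.
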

One could argue that as long as high-quality minority instances are effectively generated, it is not strictly necessary for trajectory samples to adhere to the expected marginal densities $\{ \frac{1}{Z_t}  p_t(\bx)^{1/\tau} \}_{t=0}^T$. However, temperature sampling presents a critical practical limitation: it often suffers from significant errors in score function estimation along the sampling trajectories, potentially leading to degraded sample quality. This is a direct consequence of the score function's scaled amplitude $(1 / \tau)$, which causes intermediate samples $\bx_t$ to deviate from noisy data manifold ${\cal M}_t$ on which the diffusion model is trained to denoise. These observations are consistent with the limited performance of temperature sampling reported in~\citet{dhariwal2021diffusion}. See~\cref{figTrajAnalysis} for our empirical analyses on this point.

Another possible solution to guidance-free minority generation is to employ the \emph{truncation trick}~\citep{brock2018large, karras2019style} popularly employed in GANs~\citep{goodfellow2014generative}. This technique involves sampling the latent vector from a \emph{truncated} normal distribution, which is often implemented via a variance-scaled Gaussian~\citep{brock2018large}. However, as detailed in~\cref{subsec:rationale}, a naive application of this technique to diffusion models yields only marginal improvements in minority sample generation; see~\cref{tab:ablation_bo} for empirical results.

\subsection{Boost-and-Skip: minority-focused generation with two simple tweaks}
\label{subsec:bns}

In this section, we present \emph{Boost-and-Skip}, a novel guidance-free approach that sidesteps the practical limitation discussed in the previous section. A key benefit of Boost-and-Skip is its simplicity combined with significant improvements and theoretical grounding. Boost-and-Skip involves two core modifications to the standard stochastic reverse diffusion process: (i) \emph{boosting} the variance of the initial noise; and (ii) \emph{skipping} several early timesteps.

\textbf{Variance-boosting.} Consider an empirical reverse VP-SDE implemented using the score model ${\bs s}_{\bth} (\bx, t)$:
\begin{align}
\label{eq:rVPSDE_emp}
    \dx = \left[ -\frac{1}{2}  \beta(t) \bx  - \beta(t) {\bs s}_{\bth} (\bx, t) \right] \dt + \sqrt{\beta(t)} \td \tilde{\bw}.
\end{align}
In contrast to the common ignition practice that employs $\bx(T) \sim {\cal N} ({\bs 0}, {\bs I})$, we propose a $\gamma$-scaled initialization:
\begin{align}
    \hat{\bx}(T) \sim {\cal N} ({\bs 0}, \gamma^2 {\bs I}), \quad \gamma > 1,
\end{align}
where the range of $\gamma$ is selected to enhance minority sample generation. Intuitively, this modification encourages initializations from low-density regions of the terminal distribution $p_T$, potentially facilitating the generation of underrepresented samples in the data distribution $p_0$. 

However, our first modification alone, which is reminiscent of the truncation trick (used in~\citet{brock2018large}), provides limited improvement. We found that this is particularly pronounced in diffusion models with a negligible terminal signal-to-noise ratio (SNR), \ie, $\alpha(T) \approx 0$ for large $T$ in \eqref{eq:alpha}. This is because, with a near-zero terminal SNR, the influence of the low-density-encouraged initial point $\hat{\bx}(T)$ diminishes due to multiplication by negligible $\alpha(T)$, resulting in a marginal effect on the generative process. We will later provide theoretical justification of this claim.

\textbf{Timestep-skipping.} We address the vanishing impact problem by skipping several of the earliest timesteps. More specifically, we start simulations of~\eqref{eq:rVPSDE_emp} from:
\begin{align}
    T_{\text{skip}} \coloneqq T - \Delta_{\text{skip}},
\end{align}
where $\Delta_{\text{skip}}$ represents the amount of skipping, which is selected to ensure non-negligible $\alpha(T_{\text{skip}})$. We found that integrating the timestep skipping with variance boosting results in a significant synergistic improvement in minority sample generation.

While employing the timestep skipping alone could often yield some performance gains, we emphasize that these improvements are limited and inferior to the combined approach; see~\cref{tab:ablation_so} for details.

\noindent \textbf{Validation on toy data.} In Figure \ref{figToyExample}, we provide a sanity check of Boost-and-Skip on a two-dimensional toy dataset comprised of two concentric circles. There, we verify that variance-boosting and timestep-skipping with reverse-SDE (Figure \ref{figToyExample}(f)) is the only combination which generates on-manifold minority samples -- ablating boosting, skipping, or SDE leads to inferior results. In the following section, we provide theoretical intuition as to why all three components are necessary for successful minority generation.

\vspace{-2mm}
\subsection{Rationale behind Boost-and-Skip}
\label{subsec:rationale}

Now we delve into the underlying principles of Boost-and-Skip, elucidating why and how it is effective for minority generation. We present two key mechanisms: (i) low-density emphasis and (ii) rectification through contraction.

\noindent \textbf{Low-density emphasis.} We argue that Boost-and-Skip encourages sampling from low-density instances by amplifying their probability densities. To see this, we consider a simple (yet non-trivial) scenario where $p_0$ follows a multivariate Gaussian distribution.

\begin{proposition}
\label{prop:gaussian_SDE}
    Let the data distribution be $\bx(0) \sim {\cal N} ({\bs \mu}_0, {\bs \Sigma}_0)$, and assume the optimal score function ${\bs s}_\bth (\bx, t) = \nabla_\bx \log p_t(\bx)$ trained on $p_0$ via the forward SDE in~\eqref{eq:fVPSDE}. Suppose the reverse SDE in~\eqref{eq:rVPSDE_emp} is initialized with $\hat{\bx}(T_{\text{skip}}) \sim {\cal N}(\hat{{\bs \mu}}_{T_{\text{skip}}}, \hat{{\bs \Sigma}}_{T_{\text{skip}}} ) $. Then, the resulting generated distribution corresponds to $\hat{\bx}(0) \sim {\cal N} (\hat{{\bs \mu}}_0, \hat{{\bs \Sigma}}_0)$, where
    \begin{align}
        \hat{{\bs \mu}}_0 &\coloneqq {\bs \mu}_0 + \alpha(T_{\text{skip}}){\bs \Sigma}_0 {\bs \Sigma}_{T_{\text{skip}}}^{-1} (\hat{{\bs \mu}}_{T_{\text{skip}}} - {\bs \mu}_{T_{\text{skip}}}), \label{eq:prop_gaussian_mu0hat} \\
        \hat{{\bs \Sigma}}_0 &\coloneqq {\bs \Sigma}_0 + \alpha(T_{\text{skip}})^2 {\bs \Sigma}_0^2 {\bs \Sigma}_{T_{\text{skip}}}^{-2}(\hat{{\bs \Sigma}}_{T_{\text{skip}}} - {\bs \Sigma}_{T_{\text{skip}}}). \label{eq:prop_gaussian_Sigma0hat}
    \end{align}
    Here, ${\bs \mu}_{T_{\text{skip}}}$ and ${\bs \Sigma}_{T_{\text{skip}}}$ are defined as:
    \begin{align}
        {\bs \mu}_{T_{\text{skip}}} & \coloneqq \alpha(T_{\text{skip}}) {\bs \mu}_0, \label{eq:mu_Ts} \\
        {\bs \Sigma}_{T_{\text{skip}}} & \coloneqq {\bs I} + \alpha(T_{\text{skip}})^2 ( {\bs \Sigma}_0 - {\bs I}  ). \label{eq:Sigma_Ts}
    \end{align}
\end{proposition}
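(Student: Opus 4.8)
The plan is to exploit the fact that Gaussianity is preserved under both the forward VP-SDE and the (optimally-scored) reverse VP-SDE, so the entire computation reduces to tracking a mean vector and covariance matrix through a linear SDE. First I would record the forward marginals: since $\bx(0) \sim {\cal N}({\bs \mu}_0, {\bs \Sigma}_0)$ and the forward kernel is $p_{0t}(\bx(t)\mid\bx(0)) = {\cal N}(\alpha(t)\bx(0), (1-\alpha(t)^2){\bs I})$, marginalizing gives $\bx(t) \sim {\cal N}({\bs \mu}_t, {\bs \Sigma}_t)$ with ${\bs \mu}_t = \alpha(t){\bs \mu}_0$ and ${\bs \Sigma}_t = (1-\alpha(t)^2){\bs I} + \alpha(t)^2 {\bs \Sigma}_0 = {\bs I} + \alpha(t)^2({\bs \Sigma}_0 - {\bs I})$; evaluating at $T_{\text{skip}}$ gives precisely~\eqref{eq:mu_Ts} and~\eqref{eq:Sigma_Ts}. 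Because the score is optimal, $\nabla_\bx \log p_t(\bx) = -{\bs \Sigma}_t^{-1}(\bx - {\bs \mu}_t)$, so the reverse SDE~\eqref{eq:rVPSDE_emp} becomes a linear (affine) SDE in $\bx$.

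Next I would solve this linear reverse SDE with the \emph{nonstandard} initialization $\hat{\bx}(T_{\text{skip}}) \sim {\cal N}(\hat{\bs\mu}_{T_{\text{skip}}}, \hat{\bs\Sigma}_{T_{\text{skip}}})$. Linearity guarantees $\hat{\bx}(t)$ stays Gaussian, ${\cal N}(\hat{\bs\mu}_t, \hat{\bs\Sigma}_t)$, so I only need ODEs for $\hat{\bs\mu}_t$ and $\hat{\bs\Sigma}_t$. The cleanest route is to observe that the reverse SDE driven by the \emph{true} score maps the forward marginal ${\cal N}({\bs\mu}_{T_{\text{skip}}}, {\bs\Sigma}_{T_{\text{skip}}})$ exactly back to ${\cal N}({\bs\mu}_0, {\bs\Sigma}_0)$; writing the deviation $\hat{\bs\mu}_t - {\bs\mu}_t$ and $\hat{\bs\Sigma}_t - {\bs\Sigma}_t$ and using the affine structure, these deviations evolve \emph{homogeneously} (the noise term and the $-{\bs\Sigma}_t^{-1}{\bs\mu}_t$ offset cancel against the reference trajectory). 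Concretely I expect the mean deviation to satisfy a linear homogeneous ODE whose solution multiplies the initial deviation by $\alpha(t)/\alpha(T_{\text{skip}})$ together with a ${\bs\Sigma}$-dependent factor; integrating from $T_{\text{skip}}$ down to $0$ and collecting terms should yield the factor $\alpha(T_{\text{skip}}){\bs\Sigma}_0{\bs\Sigma}_{T_{\text{skip}}}^{-1}$ in~\eqref{eq:prop_gaussian_mu0hat}. An equivalent and perhaps faster derivation: use the known Gaussian posterior/transition formulas for VP diffusions — since forward is $\bx(T_{\text{skip}}) = \alpha(T_{\text{skip}})\bx(0) + \sqrt{1-\alpha(T_{\text{skip}})^2}\,\bz$ with jointly Gaussian $(\bx(0),\bx(T_{\text{skip}}))$, the optimal reverse process is exactly the conditional ${\cal N}(\bx(0)\mid \bx(T_{\text{skip}}))$, whose mean is ${\bs\mu}_0 + \alpha(T_{\text{skip}}){\bs\Sigma}_0{\bs\Sigma}_{T_{\text{skip}}}^{-1}(\bx(T_{\text{skip}}) - {\bs\mu}_{T_{\text{skip}}})$ and whose covariance is the Schur complement ${\bs\Sigma}_0 - \alpha(T_{\text{skip}})^2{\bs\Sigma}_0{\bs\Sigma}_{T_{\text{skip}}}^{-1}{\bs\Sigma}_0$. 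Feeding $\hat{\bx}(T_{\text{skip}}) \sim {\cal N}(\hat{\bs\mu}_{T_{\text{skip}}}, \hat{\bs\Sigma}_{T_{\text{skip}}})$ through this affine conditional map and using the law of total mean/variance gives $\hat{\bs\mu}_0$ directly, and $\hat{\bs\Sigma}_0 = (\text{Schur complement}) + \alpha(T_{\text{skip}})^2{\bs\Sigma}_0{\bs\Sigma}_{T_{\text{skip}}}^{-1}\hat{\bs\Sigma}_{T_{\text{skip}}}{\bs\Sigma}_{T_{\text{skip}}}^{-1}{\bs\Sigma}_0$, which rearranges to~\eqref{eq:prop_gaussian_Sigma0hat} after substituting the Schur complement $= {\bs\Sigma}_0 - \alpha(T_{\text{skip}})^2{\bs\Sigma}_0^2{\bs\Sigma}_{T_{\text{skip}}}^{-1}$ (valid since all matrices here are simultaneously diagonalizable / commute as polynomials in ${\bs\Sigma}_0$).

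Finally I would verify the sanity check $\hat{\bs\mu}_{T_{\text{skip}}} = {\bs\mu}_{T_{\text{skip}}}, \hat{\bs\Sigma}_{T_{\text{skip}}} = {\bs\Sigma}_{T_{\text{skip}}} \Rightarrow \hat{\bs\mu}_0 = {\bs\mu}_0, \hat{\bs\Sigma}_0 = {\bs\Sigma}_0$, which is immediate from~\eqref{eq:prop_gaussian_mu0hat}--\eqref{eq:prop_gaussian_Sigma0hat}, and also the limit $\alpha(T_{\text{skip}}) \to 0$ collapsing $\hat{\bx}(0)$ to $p_0$ — this is exactly the ``vanishing impact'' phenomenon the paper wants to formalize. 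The main obstacle I anticipate is purely bookkeeping rather than conceptual: carefully propagating the covariance through the affine conditional map and confirming that the commutativity of ${\bs\Sigma}_0, {\bs\Sigma}_{T_{\text{skip}}}$, and $\alpha(T_{\text{skip}})^2{\bs I}$ lets one treat ${\bs\Sigma}_{T_{\text{skip}}}^{-1}$ as a genuine scalar-like factor so that the Schur complement simplifies to the stated closed form; if one instead solves the covariance Riccati ODE for the reverse SDE directly, the obstacle shifts to showing that ODE integrates in closed form, which again works only because everything commutes. I would therefore lead with the conditional-Gaussian argument and relegate the SDE-integration viewpoint to a remark.
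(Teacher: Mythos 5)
Your proposal is correct, and your preferred route is genuinely different from the paper's. The paper proves the result by brute force on the linear reverse SDE: it writes the score as $-\bs{\Sigma}_t^{-1}(\bx-\bs{\mu}_t)$, derives the moment ODEs for the mean and covariance of the reversed process, and integrates them explicitly using transition matrices built from the eigendecomposition of $\bs{\Sigma}_0$ (your ``secondary'' deviation-ODE route is essentially this computation, reorganized). Your primary route instead identifies the reverse-SDE transition kernel from $T_{\text{skip}}$ to $0$ with the Gaussian posterior $p(\bx(0)\mid\bx(T_{\text{skip}}))$ of the forward joint, and then pushes ${\cal N}(\hat{\bs\mu}_{T_{\text{skip}}},\hat{\bs\Sigma}_{T_{\text{skip}}})$ through this affine Gaussian channel via the laws of total mean and covariance; I checked that this reproduces \eqref{eq:prop_gaussian_mu0hat}--\eqref{eq:prop_gaussian_Sigma0hat} exactly, and it is considerably shorter. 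What it buys in brevity it pays for in one nontrivial step you currently assert without justification: that the exact reverse SDE, \emph{conditioned on its starting point}, samples from the Bayesian conditional. This follows from Anderson's time-reversal theorem (the reverse SDE initialized at $p_{T_{\text{skip}}}$ has the same joint law as the forward process, so its Markov kernel equals $p(\bx(0)\mid\bx(T_{\text{skip}}))$ for $p_{T_{\text{skip}}}$-a.e.\ starting point, extended everywhere by continuity of the Gaussian kernel), but it is emphatically \emph{not} a consequence of score optimality alone --- the PF-ODE with the same optimal score has a deterministic kernel and yields the different answer of Proposition~\ref{prop:gaussian_ODE} --- so this step should be stated and cited explicitly. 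Two minor bookkeeping points: the law of total covariance gives the middle term as $\alpha(T_{\text{skip}})^2\bs{\Sigma}_0\bs{\Sigma}_{T_{\text{skip}}}^{-1}\hat{\bs\Sigma}_{T_{\text{skip}}}\bs{\Sigma}_{T_{\text{skip}}}^{-1}\bs{\Sigma}_0$, which matches the stated $\bs{\Sigma}_0^2\bs{\Sigma}_{T_{\text{skip}}}^{-2}\hat{\bs\Sigma}_{T_{\text{skip}}}$ only when $\hat{\bs\Sigma}_{T_{\text{skip}}}$ commutes with $\bs{\Sigma}_0$ ($\hat{\bs\Sigma}_{T_{\text{skip}}}$ is not a polynomial in $\bs{\Sigma}_0$ in general, though the paper's own derivation makes the same implicit assumption, and it is harmless for the intended instantiation $\hat{\bs\Sigma}_{T_{\text{skip}}}=\gamma^2\bs{I}$); and your first displayed cross-term sign in the statement of the conditional mean is fine as written.
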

See~\cref{subsec:proof_gaussian_SDE} for the proof. In the considered Gaussian setting, Boost-and-Skip can be instantiated by setting $\hat{{\bs \Sigma}}_{T_{\text{skip}}} = \gamma^2 {\bs I}$ and $T_{\text{skip}} < T$. Note that when $\hat{\bs \Sigma}_{T_{\text{skip}}} - {\bs \Sigma}_{T_{\text{skip}}} \succ 0$, the initialization contributes to \emph{amplify} the variance of the resulting generated distribution $\hat{\bs \Sigma}_0$ compared to the original ${\bs \Sigma}_0$ (see~\eqref{eq:prop_gaussian_Sigma0hat}). This indicates that our approach can lead to probability increases in low-density regions of $p_0$, \ie, the effect of low-density emphasis.

A notable point here is that when $T_{\text{skip}} \approx T$, corresponding to the case where only boosting (\ie, our first modification) is applied, the low-density emphasis impact does not manifest. This is because $\lim_{T_{\text{skip}} \rightarrow T}   \alpha(T_{\text{skip}}) = \alpha(T) \approx 0$ leads to the recovery of the original data distribution $(\hat{\bs \mu}_0, \hat{\bs \Sigma}_0) \approx ({\bs \mu}_0, {\bs \Sigma}_0)$; see Eqs.~(\ref{eq:prop_gaussian_mu0hat}) and~(\ref{eq:prop_gaussian_Sigma0hat}) for details. This highlights the necessity of incorporating time-skipping for effective minority generation, and also explains why the truncation trick (used in~\citet{brock2018large}) could be insufficient in the diffusion model context. The key mechanism of Boost-and-Skip can be interpreted from a signal processing viewpoint. See~\cref{subsec:sp} for detailed analyses on this perspective.

To show that the condition for the low-density emphasis effect, \ie, $\hat{\bs \Sigma}_{T_{\text{skip}}} - {\bs \Sigma}_{T_{\text{skip}}} \succ 0$, can be satisfied in practice, we provide a corollary that characterizes the range of $T_{\text{skip}}$ over which the variance amplification impact occurs:

\begin{figure*}[t]
\centering
\begin{subfigure}{0.49\linewidth}
\includegraphics[width=\linewidth]{./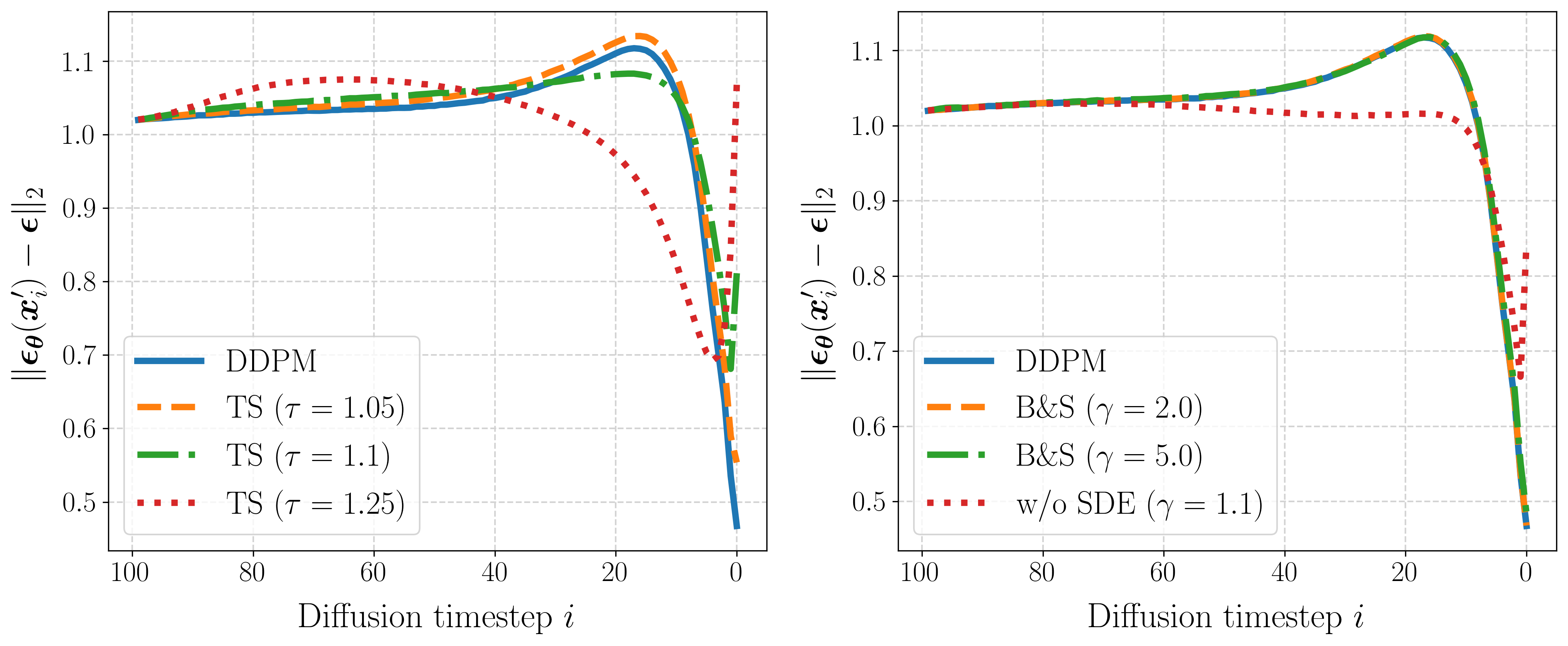}%
\caption{Trajectories of estimation error}\label{figTrajAnalysis_error}
\end{subfigure} %
\hfill
\begin{subfigure}{0.49\linewidth}
\includegraphics[width=\linewidth]{./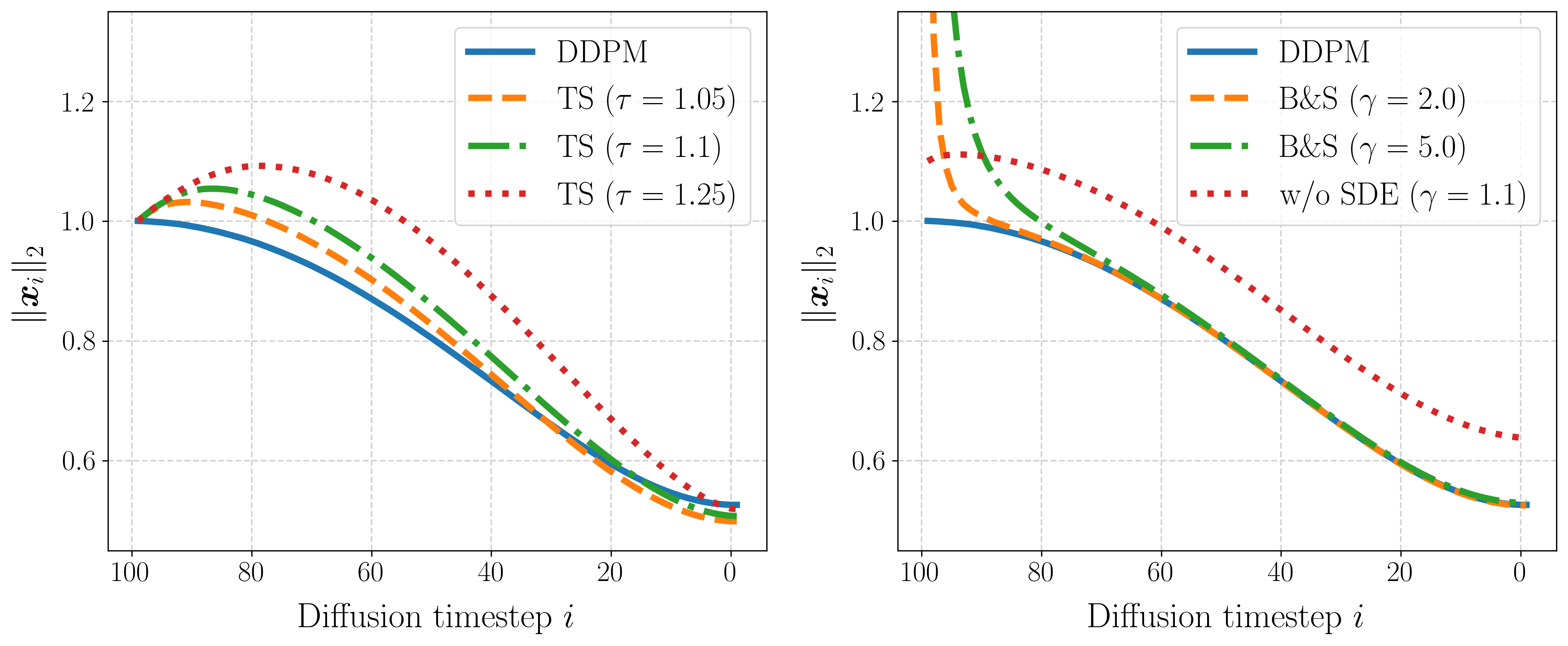}%
\caption{Norm trajectories}\label{figTrajAnalysis_norm}
\end{subfigure} %
\vspace*{-2.0mm} %
\caption{
\label{figTrajAnalysis}
\textbf{Trajectory analysis across various samplers.} ``TS'' refers to temperature sampling~\citep{ackley1985learning}, and ``w/o SDE'' indicates our approach without stochastic sampling, \ie, using PF-ODE. ``B\&S'' is ours, \emph{Boost-and-Skip}. $\tau$ is the temperature parameter used to scale the score function (\eg, ${\bs s}_{\bth}(\bx_i, i) / \tau$), and $\gamma$ controls the boosted initialization strength.
\textbf{(a)} Noise estimation errors across discrete timestep $i$, where ${\bx}'_i \coloneqq \sqrt{\bar{\alpha}_i} \hat{\bx}_{0|i} + \sqrt{1 - \bar{\alpha}_i} \beps$. Here $\hat{\bx}_{0|i}$ represents a denoised sample from $\bx_i$, \ie, the posterior mean of $\bx_i$. While temperature sampling fails even with slight variations in $\tau$, Boost-and-Skip performs as well as DDPM, demonstrating its robustness to imperfect score models.
\textbf{(b)} Norm of intermediate samples $\bx_i$ across timestep $i$. We see that norm trajectories of our approach rapidly converge to that of DDPM, exhibiting the contraction effect claimed in~\cref{subsec:rationale}. In contrast, trajectories of temperature sampling diverge from the DDPM curve, further revealing its pathology. See~\cref{figTrajSamples} for visualizations of intermediate samples over trajectories.
}
\end{figure*}

\begin{corollary}
\label{corollary}
    Suppose ${\bs \Sigma}_0 = \sigma_0^2 {\bs I}$ and $\hat{\bs \Sigma}_{T_{\text{skip}}} = \gamma^2 {\bs I}$, and define the quantity (if it exists)
    \begin{align*}
        \kappa \coloneqq \sqrt{(\gamma^2 - 1)/(\sigma_0^2 - 1)}.
    \end{align*}
    The variance-amplification effect of $\hat{\bs \Sigma}_0$ occurs iff
    \begin{align*}
    T_{\text{skip}} \in
    \begin{cases}
        \varnothing & \text{if } \gamma \leq 1 \leq \sigma_0, \\
        (\alpha^{-1}(\kappa),\infty) & \text{if } 1 < \gamma,\sigma_0, \\
        [0,\alpha^{-1}(\kappa)) & \text{if } 1 > \gamma,\sigma_0, \\
        [0,\infty) & \text{if } \sigma_0 \leq 1 \leq \gamma \text{ and } (\sigma_0,\gamma) \neq \mathbf{1},
    \end{cases}
    \end{align*}
    where we define $\alpha^{-1}(\kappa) \coloneqq 0$ when $\kappa > 1$.
\end{corollary}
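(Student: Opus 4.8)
The plan is to collapse the matrix statement of \cref{corollary} into a single scalar inequality via \cref{prop:gaussian_SDE}, and then to read off the admissible set of $T_{\text{skip}}$ by inverting the monotone function $\alpha(\cdot)$.

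First I would specialize \cref{prop:gaussian_SDE} to $\bs\Sigma_0 = \sigma_0^2 \bs I$ and $\hat{\bs\Sigma}_{T_{\text{skip}}} = \gamma^2 \bs I$. Writing $a \coloneqq \alpha(T_{\text{skip}})^2$ for brevity, \eqref{eq:Sigma_Ts} gives $\bs\Sigma_{T_{\text{skip}}} = (1 + a(\sigma_0^2-1))\,\bs I$, which is positive definite since $1 + a(\sigma_0^2-1) = (1-a) + a\sigma_0^2 > 0$ for $a \in (0,1]$ and $\sigma_0 > 0$. Substituting into \eqref{eq:prop_gaussian_Sigma0hat}, every matrix there is a scalar multiple of $\bs I$, so $\hat{\bs\Sigma}_0 - \bs\Sigma_0$ equals
\[
\frac{a\,\sigma_0^4}{(1 + a(\sigma_0^2-1))^2}\,\bigl(\gamma^2 - 1 - a(\sigma_0^2-1)\bigr)\,\bs I .
\]
Since $\alpha(T_{\text{skip}}) > 0$ for every finite $T_{\text{skip}}$ by \eqref{eq:alpha} (hence $a > 0$) and $\sigma_0 > 0$, the scalar prefactor is strictly positive; therefore the variance-amplification effect $\hat{\bs\Sigma}_0 \succ \bs\Sigma_0$ holds if and only if $\gamma^2 - 1 > a(\sigma_0^2-1)$, i.e. if and only if $\alpha(T_{\text{skip}})^2(\sigma_0^2-1) < \gamma^2-1$. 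Call this inequality $(\star)$.

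Next I would record that, by \eqref{eq:alpha}, $\alpha:[0,\infty)\to(0,1]$ is continuous and strictly decreasing with $\alpha(0)=1$ and $\alpha(t)\to 0$, so it has a strictly decreasing inverse $\alpha^{-1}:(0,1]\to[0,\infty)$ with $\alpha^{-1}(1)=0$; the stated convention $\alpha^{-1}(\kappa)\coloneqq 0$ for $\kappa>1$ merely encodes that $\alpha$ never exceeds $1$. I would then solve $(\star)$ for $\alpha(T_{\text{skip}})\in(0,1]$ by the signs of $\sigma_0^2-1$ and $\gamma^2-1$: (i) if $\gamma\le 1\le\sigma_0$, the left side of $(\star)$ is $\ge 0$ and the right side $\le 0$, and equality (forced only if $\gamma=\sigma_0=1$, since $a>0$) still makes $(\star)$ fail, so the admissible set is $\varnothing$; (ii) if $\sigma_0\le 1\le\gamma$ with $(\sigma_0,\gamma)\neq\mathbf 1$, the signs reverse and $(\star)$ holds for all $\alpha(T_{\text{skip}})\in(0,1]$, i.e. for all $T_{\text{skip}}\in[0,\infty)$; (iii) if $1<\gamma,\sigma_0$, dividing by $\sigma_0^2-1>0$ turns $(\star)$ into $\alpha(T_{\text{skip}})^2<\kappa^2$, equivalently $\alpha(T_{\text{skip}})<\kappa$, which by monotonicity of $\alpha^{-1}$ (and the convention when $\kappa>1$) is $T_{\text{skip}}\in(\alpha^{-1}(\kappa),\infty)$; (iv) if $1>\gamma,\sigma_0$, dividing by $\sigma_0^2-1<0$ flips $(\star)$ to $\alpha(T_{\text{skip}})^2>\kappa^2$, equivalently $\alpha(T_{\text{skip}})>\kappa$, which is $T_{\text{skip}}\in[0,\alpha^{-1}(\kappa))$ (empty when $\kappa\ge 1$). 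Collecting (i)--(iv) yields the claimed dichotomy; one also notes in (iii)--(iv) that $\kappa$ is well defined exactly when $\sigma_0^2-1$ and $\gamma^2-1$ share a sign, matching the ``if it exists'' caveat.

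The computations are elementary, so the real work is bookkeeping. The main obstacle I anticipate is two-fold: justifying cleanly that the semidefinite inequality $\hat{\bs\Sigma}_0\succ\bs\Sigma_0$ reduces to the scalar $(\star)$ --- which hinges on $\alpha(T_{\text{skip}})>0$ and on $\bs\Sigma_{T_{\text{skip}}}\succ 0$ --- and verifying that the four branches partition all $(\gamma,\sigma_0)$ and that their open/half-open endpoints match precisely what $\alpha^{-1}$ together with the convention $\alpha^{-1}(\kappa)\coloneqq 0$ delivers, including the degenerate point $\gamma=\sigma_0=1$ and the regimes $\kappa\ge 1$ in which the interval collapses to all of $[0,\infty)$ or to $\varnothing$.
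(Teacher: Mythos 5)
Your proposal is correct and follows essentially the same route as the paper's proof: reduce the matrix condition to the scalar inequality $\gamma^2 - 1 > \alpha(T_{\text{skip}})^2(\sigma_0^2-1)$ and then do the sign-based case analysis with the monotonicity of $\alpha$. The only difference is that you explicitly derive that scalar reduction from \eqref{eq:prop_gaussian_Sigma0hat} (checking positivity of the prefactor), whereas the paper simply asserts it; this is a welcome bit of extra rigor, not a divergence in method.
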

In~\cref{subsec:ld_emphasis}, we provide an illustration that exhibits the behavior of $\hat{\bs \Sigma}_0 \coloneqq \hat{\sigma}_0^2 {\bs I}$ under the conditions specified in Corollary~\ref{corollary}; see~\cref{fig:hat_sigma_0_vs_T_s} therein.

\begin{figure}[!t]
    \centering
    \includegraphics[width=1.0\columnwidth]{./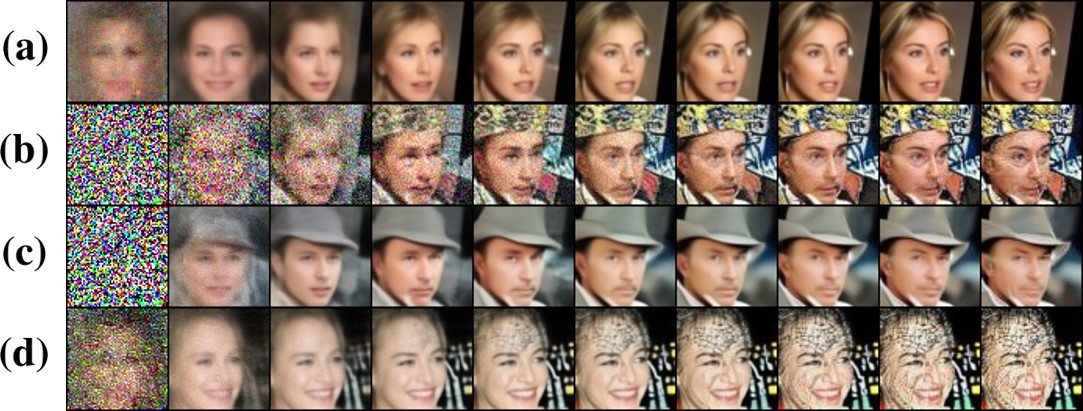}
    \vspace*{-5mm} %
    \caption{
    \textbf{Denoised intermediate samples along generative trajectories on CelebA~\citep{liu2015faceattributes}.}
    \textbf{(a)} DDPM.
    \textbf{(b)} Temperature sampling with $\tau = 1.1$.
    \textbf{(c)} Ours with $\gamma = 5.0$.
    \textbf{(d)} Ours with $\gamma = 1.1$ and PF-ODE.
    From left to right, discrete timestep $i$ decreases from $i=100$ to $0$. Leveraging the contracting nature of stochastic generation, Boost-and-Skip reduces excessive noise introduced by boosted initialization during early stages of generation.
    }
    \label{figTrajSamples}
    \vspace*{-4mm} %
\end{figure}

\noindent \textbf{Rectification via contraction.} A potential concern is whether the amplified noise components due to variance-boosted initialization may impede high-quality generation. To address this, we invoke stochastic contraction theory~\citep{pham2008analysis, pham2009contraction}, a principle that is often used to describe error-rectifying behaviors of stochastic diffusion generative processes~\citep{chung2022come, xu2023restart}. Specifically under the discrete VP-SDE setting in~\cref{eq:rSDE_ddpm}, we establish a general theoretical result showing that, for an arbitrary distribution $p_0$, the error introduced by the boosted initialization decays exponentially as stochastic sampling progresses. See below for a formal description of the claim.
\begin{proposition}
\label{prop:ddpm_contraction}
Consider an empirical version of the discrete VP-SDE in~\eqref{eq:rSDE_ddpm}:
\begin{align*}
    \bx_{i-1} = \frac{1}{\sqrt{\alpha_i}} \big\{ \bx_i + (1-\alpha_i) s_{\bth}(\bx_i, i) \big\} + \sqrt{1 - \alpha_i} \bz,
\end{align*}
where $i \in \{1, \dots, N\}$. Assume the optimal score function, \ie, $s_{\bth}(\bx, i) = \nabla_{\bx} \log p_i (\bx)$, which is trained on an arbitrary data distribution $p_0$. Consider two sample trajectories $\{ \bx_{i} \}_{i=0}^N$ and $\{ \hat{\bx}_{i} \}_{i=0}^{\Ns}$ where $\Ns < N$, which are initialized with distinct distributions: $\bx_N \sim {\cal N}({\bs 0}, {\bs I})$ and $\hat{\bx}_{\Ns} \sim {\cal N}({\bs 0}, \gamma^2 {\bs I})$. Assuming that $\{ \bx_{i} \}_{i=0}^N$ is a bounded process such that $\| \bx_i \|_2 < B$ (as in~\citet{xu2023restart}), the expected error between samples from these two trajectories at step $i \in \{0, \dots, \Ns - 1\}$ is given by:
\begin{align}
    \mathbb{E}[ \| \bx_i - \hat{\bx}_i \|_2^2] \le \frac{2C}{1 - \lambda^2} + \lambda^{2(\Ns - i)}(B^2 + \gamma^2 d),
\end{align}
where $\lambda$ denotes the contraction rate:
\begin{align}
    \lambda \coloneqq \max_{j \in \{ i+1, \dots, \Ns\}} \sqrt{\alpha_j}\left( \frac{1 - \bar{\alpha}_{j-1}}{1 - \bar{\alpha}_{j}} \right),
\end{align}
and $C \coloneqq d(1 - \bar{\alpha}_{\Ns})$.
\end{proposition}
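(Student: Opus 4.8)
The plan is to treat the two (independently driven) chains as a discrete-time stochastic contraction: isolate the linear, strictly contractive part of the one-step reverse map, derive a one-step recursion for $e_i \coloneqq \mathbb{E}[\|\bx_i - \hat\bx_i\|_2^2]$, and unroll it as a geometric series to obtain the stated bound.

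First I would rewrite the reverse update. Write $\bx_{i-1} = F_i(\bx_i) + \sqrt{1-\alpha_i}\,\bz_i$ with deterministic part $F_i(\bx) \coloneqq \frac{1}{\sqrt{\alpha_i}}\{\bx + (1-\alpha_i)\nabla_{\bx}\log p_i(\bx)\}$. By Tweedie's identity the optimal score is $\nabla_{\bx}\log p_i(\bx) = (\sqrt{\bar{\alpha}_i}\,\mathbb{E}[\bx_0 \mid \bx_i = \bx] - \bx)/(1-\bar{\alpha}_i)$, so substituting and simplifying gives $F_i(\bx) = \lambda_i\,\bx + \frac{\sqrt{\bar{\alpha}_{i-1}}(1-\alpha_i)}{1-\bar{\alpha}_i}\,\mathbb{E}[\bx_0 \mid \bx_i = \bx]$, where $\lambda_i \coloneqq \sqrt{\alpha_i}(1-\bar{\alpha}_{i-1})/(1-\bar{\alpha}_i)$ is precisely the per-step quantity defining $\lambda$, and an elementary inequality shows $\lambda_i < 1$.

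Next I would derive the recursion. Since the driving noises $\bz_i,\hat\bz_i$ of the two trajectories are mutually independent, mean zero, and independent of $\bx_i,\hat\bx_i$, all cross terms vanish and $e_{i-1} = \mathbb{E}[\|F_i(\bx_i) - F_i(\hat\bx_i)\|_2^2] + 2(1-\alpha_i)d$. Because $\{\bar{\alpha}_j\}$ is decreasing and $\bar{\alpha}_j \le \alpha_j$, we have $1-\alpha_j \le 1-\bar{\alpha}_j \le 1-\bar{\alpha}_{\Ns}$ for every $j \le \Ns$, so the additive term is $\le 2C$. The central estimate is the contraction $\mathbb{E}[\|F_i(\bx_i) - F_i(\hat\bx_i)\|_2^2] \le \lambda_i^2\, e_i$; granting it and using $\lambda_j \le \lambda$ for $j \in \{i+1,\dots,\Ns\}$, each step reads $e_{j-1} \le \lambda^2 e_j + 2C$. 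For the base case, independence and zero mean of $\hat\bx_{\Ns}$ give $e_{\Ns} = \mathbb{E}[\|\bx_{\Ns}\|_2^2] + \mathbb{E}[\|\hat\bx_{\Ns}\|_2^2] \le B^2 + \gamma^2 d$. Unrolling from $\Ns$ down to $i$ and bounding the partial geometric sum by $\sum_{k\ge 0}\lambda^{2k} = 1/(1-\lambda^2)$ then gives $e_i \le \lambda^{2(\Ns-i)}(B^2 + \gamma^2 d) + 2C/(1-\lambda^2)$, which is the claim.

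The main obstacle is the contraction estimate $\mathbb{E}[\|F_i(\bx_i) - F_i(\hat\bx_i)\|_2^2] \le \lambda_i^2 e_i$, i.e., that $F_i$ is $\lambda_i$-Lipschitz. Since $\nabla F_i(\bx) = \frac{1}{\sqrt{\alpha_i}}\big(I + (1-\alpha_i)\nabla^2_{\bx}\log p_i(\bx)\big)$, this is equivalent to the Hessian bound $\nabla^2_{\bx}\log p_i \preceq -\frac{1}{1-\bar{\alpha}_i} I$, i.e., to $p_i$ being $\frac{1}{1-\bar{\alpha}_i}$-strongly log-concave. This is automatic when $p_0$ is Gaussian (consistent with Proposition~\ref{prop:gaussian_SDE}) or log-concave, but fails for a general $p_0$ at moderate noise levels, where $\nabla^2\log p_i$ can acquire positive eigenvalues --- equivalently, the posterior covariance $\cov(\bx_0 \mid \bx_i)$, which governs the nonlinear $\mathbb{E}[\bx_0 \mid \bx_i]$ term of $F_i$, is large. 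To handle arbitrary $p_0$ I would invoke the boundedness hypothesis $\|\bx_i\|_2 < B$ to restrict attention to the region the chains actually visit, bound the positive part of $\nabla^2\log p_i$ (equivalently $\|\cov(\bx_0 \mid \bx_i)\|$) there, and absorb the residual slack into the additive constant, following the stochastic-contraction machinery of~\citet{pham2008analysis, pham2009contraction} and its diffusion-model specializations in~\citet{chung2022come, xu2023restart}.
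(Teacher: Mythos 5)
Your proof skeleton is, in substance, the same as the paper's. The paper invokes Theorem~1 of \citet{pham2008analysis} as a black box to obtain
$\mathbb{E}[\|\bx_i - \hat{\bx}_i\|_2^2] \le \tfrac{2C}{1-\lambda^2} + \lambda^{2(\Ns-i)}\,\mathbb{E}[\|\bx_{\Ns}-\hat{\bx}_{\Ns}\|_2^2]$
and then bounds the terminal error by $B^2+\gamma^2 d$ exactly as you do (independence of the two initializations kills the cross term; the boundedness assumption and the Gaussian initialization handle the two squared norms). The one-step recursion $e_{j-1}\le\lambda^2 e_j + 2C$ and its geometric unrolling that you write out by hand are precisely the content of the cited theorem, so up to that point your argument is a faithful, more self-contained reconstruction of the paper's route rather than a different one.

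The gap is the one you flag yourself: the contraction estimate $\mathbb{E}[\|F_i(\bx_i)-F_i(\hat{\bx}_i)\|_2^2]\le\lambda_i^2\, e_i$ is never established, and it cannot hold with the stated rate for an arbitrary $p_0$. Your own Tweedie computation gives
$\nabla F_i(\bx) = \lambda_i I + \tfrac{(1-\alpha_i)\bar{\alpha}_i}{\sqrt{\alpha_i}(1-\bar{\alpha}_i)^2}\cov(\bx_0\mid\bx_i=\bx) \succeq \lambda_i I$,
with equality only where the posterior covariance vanishes; so for any non-degenerate data distribution $F_i$ has Lipschitz constant strictly exceeding $\lambda_i$ somewhere, and $\lambda_i$ is the \emph{smallest} conceivable rate, not a valid upper bound. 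Your proposed repair --- restricting to the region the chains visit, bounding the positive part of $\nabla^2\log p_i$ there, and absorbing the excess into the additive constant --- is stated as a plan rather than carried out, and even if executed it would necessarily alter $\lambda$ and/or $C$, so it would not recover the proposition with the constants as stated. To be fair, this is exactly the hypothesis the paper's proof also leaves unverified: Pham's Theorem~1 \emph{assumes} the deterministic part of the dynamics contracts at rate $\lambda$, and the paper never checks this for the DDPM reverse map. Your writeup is therefore no less rigorous than the published argument --- indeed it is more honest about where the difficulty lies --- but neither version closes the proof for a genuinely arbitrary $p_0$.
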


See~\cref{subsec:proof_ddpm_contraction} for the proof. Notice that the error term $B^2 + \gamma^2 d$ decreases exponentially w.r.t. $N_\sk$ with contraction rate $\lambda$. In~\cref{subsec:further_ct}, we provide a more general contraction theory result that shows PF-ODE preserves (total variation) distance between distribution, whereas reverse-SDE contracts distributional distances. In other words, PF-ODE propagates initial distribution error throughout the generation process. This implies that stochasticity is critical for initial error contraction, and also explains why a naive combination of Boost-and-Skip and ODE-based samplers fail to produce high-quality minority samples.

\begin{figure*}[t!]
    \centering
    \begin{subfigure}[h]{0.3\linewidth}
    \includegraphics[width=\linewidth]{./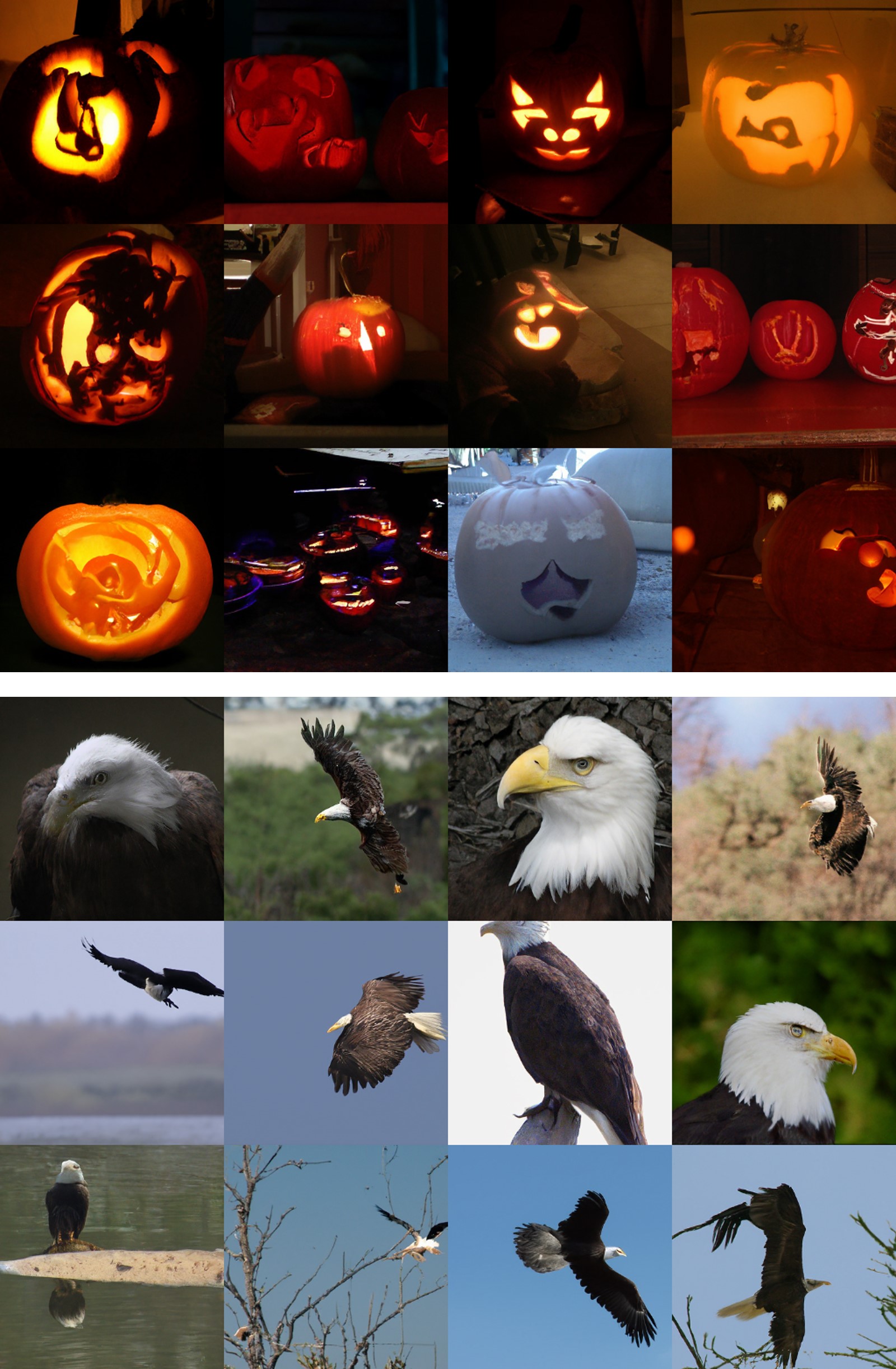}
    \caption{ADM~\citep{dhariwal2021diffusion}}
    \end{subfigure}
    \hspace{0.3mm}
    \begin{subfigure}[h]{0.3\linewidth}
    \includegraphics[width=\linewidth]{./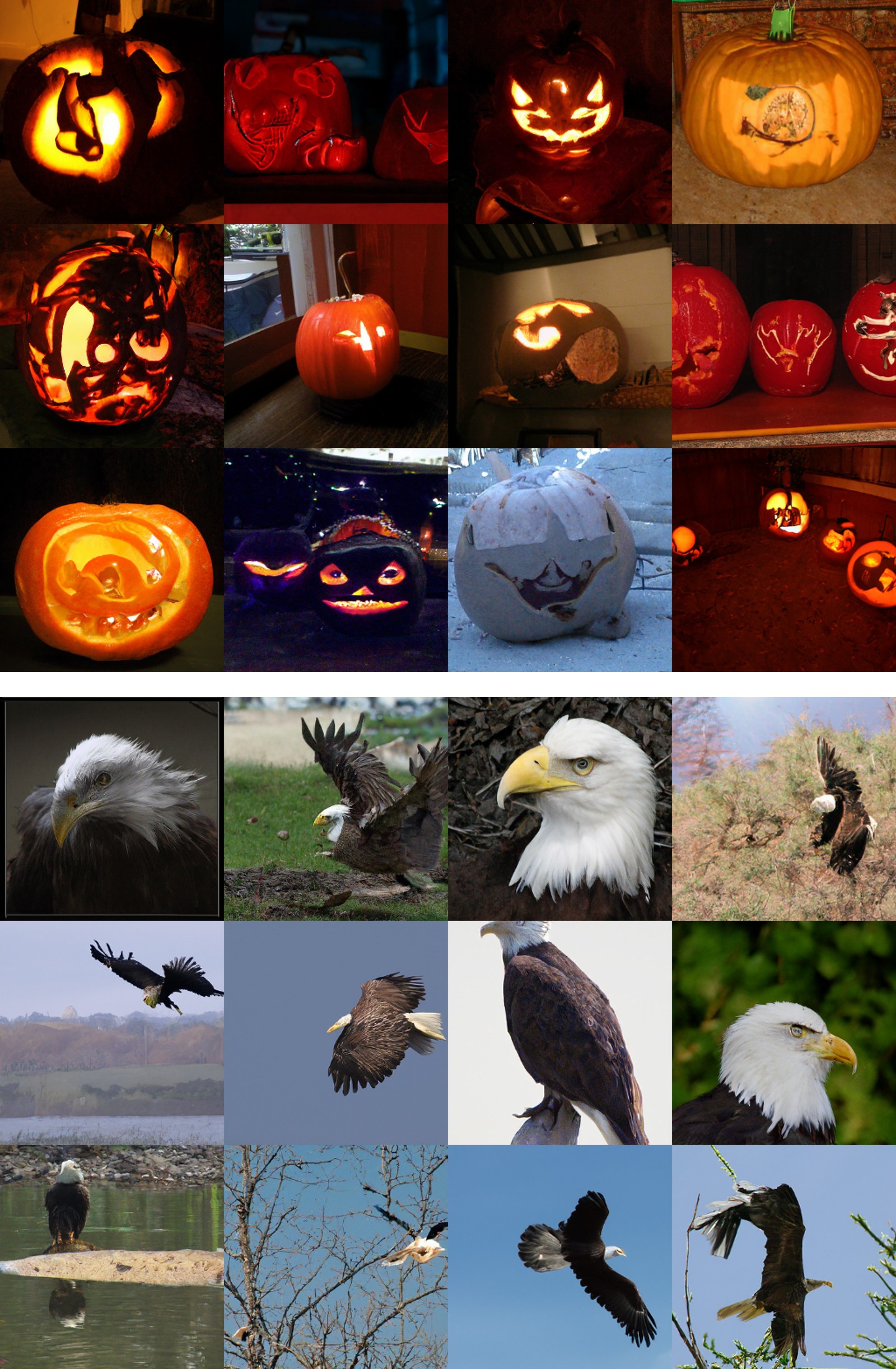}
    \caption{Temperature sampling}
    \end{subfigure}
    \hspace{0.3mm}
    \begin{subfigure}[h]{0.3\linewidth}
    \includegraphics[width=\linewidth]{./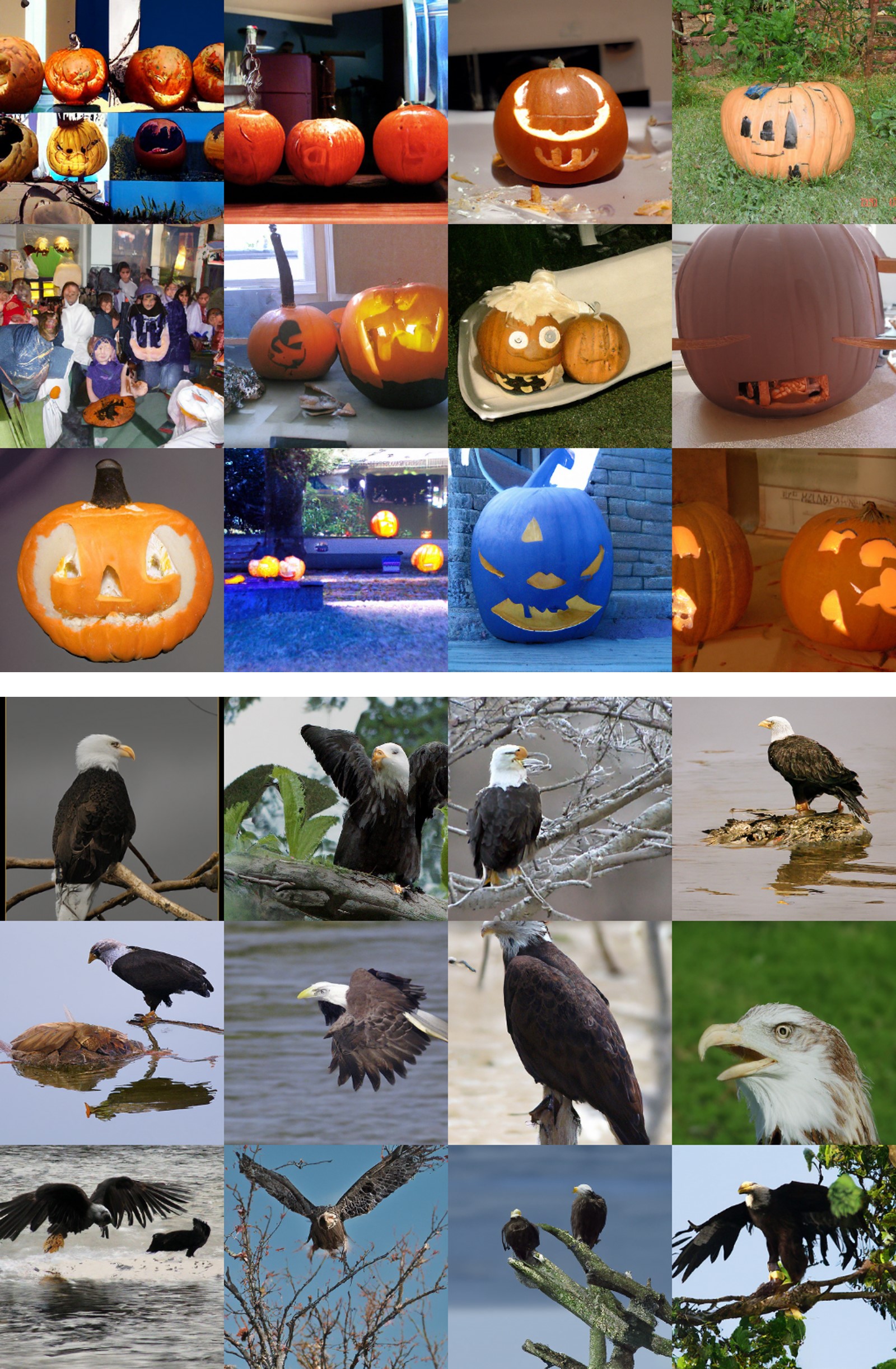}
    \caption{Boost-and-Skip (ours)}
    \end{subfigure}
    \vspace{-2.5mm}
\caption{Sample comparison on ImageNet $256 \times 256$. Generated samples from two classes are exhibited: ``jack-o'-lantern'' (top row) and ``bald eagle'' (bottom row). We share the same random seed across all three approaches.}
\label{fig:samples_in256}
\vspace{-2mm}
\end{figure*}

In practice, the score function ${\bs s}_{\bth}(\bx, t)$ is not optimal and may introduce estimation error. This is particularly relevant during early stages of our generative process, where the score model is tasked with estimating boosted-noise samples that were not encountered during training. Nonetheless, our empirical findings indicate that within reasonable ranges of $\gamma$, the contraction effect inherent in stochastic sampling rapidly corrects initial error during the early stages of generation. We highlight that this is in stark contrast to temperature sampling, which is highly sensitive to the choice of $\tau$ (\ie, the scaling parameter) and prone to collapse even with small deviations from $\tau = 1$. Figures~\ref{figTrajAnalysis} and~\ref{figTrajSamples} provide empirical analyses that investigate the robustness of our approach to imperfect score models; see details therein.

\vspace{-2mm}
\section{Experiments}
\label{sec:exp}

\textbf{Datasets and pretrained models.} Our experiments were conducted on four benchmarks settings with varying resolutions: (i) CelebA $64 \times 64$~\citep{liu2015faceattributes}; (ii) LSUN-Bedrooms $256 \times 256$~\citep{yu2015lsun}; (iii) ImageNet $64 \times 64$~\citep{deng2009imagenet}; and (iv) ImageNet $256 \times 256$. We consider unconditional generation on CelebA and LSUN-Bedrooms, while performing class-conditional generation on the ImageNet settings. The CelebA pretrained model was developed in-house, adhering to the configuration described in~\citet{um2023don}. The pretrained models for LSUN-Bedrooms and ImageNet were taken from the checkpoints provided by~\citet{dhariwal2021diffusion}.

\noindent \textbf{Baselines.} We evaluate our approach against a diverse array of baseline methods, including frameworks beyond diffusion models and minority samplers. Specifically, we include two general-purpose GAN frameworks for comparison: BigGAN~\citep{brock2018large} and StyleGAN~\citep{karras2019style}. Additionally, we incorporate prominent diffusion-based approaches with their standard sampling techniques: ADM~\citep{dhariwal2021diffusion}, LDM~\citep{rombach2022high}, EDM~\citep{karras2022elucidating}, and DiT~\citep{Peebles2022DiT}. The state-of-the-art diversity sampler, CADS~\citep{sadat2023cads}, is also included for benchmarking. For minority sampling baselines, we consider the most advanced diffusion-based approaches: (i)~\citet{sehwag2022generating}; (ii)~\citet{um2023don}; (iii) ADM-ML~\citep{um2023don} (iv)~\citet{um2024self}; and (v) temperature sampling.

{
\noindent \textbf{Evaluation Metrics.} We employ a set of quality and diversity metrics to evaluate generated samples. Specifically, we use: (i) Clean Fréchet Inception Distance (cFID)~\citep{parmar2022aliased}; (ii) Spatial FID (sFID)~\citep{nash2021generating}; and (iii) Improved Precision \& Recall~\citep{kynkaanniemi2019improved}. To evaluate the proximity to real minority data, we follow the approach used in previous works~\citet{um2023don, um2024self}. Specifically, we employ instances with the lowest likelihoods (i.e., those with the highest AvgkNN values) as reference real data for calculation of quality and diversity metrics. Additionally, to compare the capability to generate low-density instances, we adopt three uniqueness measures: (i) Average k-Nearest Neighbor (AvgkNN); (ii) Local Outlier Factor (LOF)~\citep{breunig2000lof}; and (iii) Rarity Score~\citep{han2022rarity}. For all these measures, higher value indicate that an instance is less similar to its neighborhood~\citep{sehwag2022generating, han2022rarity}. 
}

\begin{table*}[ht]
\fontsize{7.5}{7.5}\selectfont{
    \begin{subtable}[t]{0.499\textwidth}
        \centering
        \begin{tabular}{lcccc}
            \toprule[0.1em]
              \multicolumn{1}{l}{Method}  & \multicolumn{1}{c}{cFID $\downarrow$} & \multicolumn{1}{c}{sFID $\downarrow$} & \multicolumn{1}{c}{Prec $\uparrow$} & \multicolumn{1}{c}{Rec $\uparrow$} \\
              \\
              \multicolumn{5}{l}{\textbf{CelebA 64$\times$64}}\\
            \toprule[0.1em]
            \multicolumn{1}{l}{ADM~\citep{dhariwal2021diffusion}} & 75.41  & 17.11 & \textbf{0.97} & 0.23 \\
            \multicolumn{1}{l}{BigGAN~\citep{brock2018large}} & 80.58 & 16.80 & \textbf{0.97} & 0.19 \\
            \midrule[0.001em]
            \multicolumn{1}{l}{ADM-ML~\citep{um2023don}} & 51.99 & 13.40 & \underline{0.94} & 0.30 \\
            \multicolumn{1}{l}{\citet{sehwag2022generating}} & 28.25 & 10.64 & 0.82 & 0.42 \\
            \multicolumn{1}{l}{\citet{um2023don}} & 27.32 & 8.66  & 0.89 & 0.33 \\
            \multicolumn{1}{l}{\citet{um2024self}} & \bf{19.34} & \underline{8.85} & 0.82 & \underline{0.47} \\
            \midrule[0.001em]
            \multicolumn{1}{l}{Temperature sampling} & 37.78 & 14.49 & 0.79 & 0.38 \\
            \multicolumn{1}{l}{\tbcl Boost-and-Skip (proposed)} & \tbcl \underline{19.79} & \tbcl \textbf{6.87} & \tbcl 0.77 & \tbcl \textbf{0.51} \\
            \\
            \multicolumn{5}{l}{\textbf{ImageNet 64$\times$64}}\\
            \toprule[0.1em]
            \multicolumn{1}{l}{ADM~\citep{dhariwal2021diffusion}} & 18.37 & 5.39 & \underline{0.79} & 0.53 \\
            \multicolumn{1}{l}{EDM~\citep{karras2022elucidating}} & 19.09 & 4.73 & 0.73 & 0.59 \\
            \midrule[0.001em]
            \multicolumn{1}{l}{\citet{sehwag2022generating}} & 11.37 & 4.69 & \textbf{0.80} & 0.52 \\
            \multicolumn{1}{l}{\citet{um2023don}}  & \underline{12.47} & \bf{3.13} & 0.76 & 0.56 \\
            \multicolumn{1}{l}{\citet{um2024self}} & \bf{11.24} & \underline{3.17} & 0.73 & \bf{0.62} \\
            \midrule[0.001em]
            \multicolumn{1}{l}{Temperature sampling} & 16.19 & 4.20 & 0.76 & 0.58 \\
            \multicolumn{1}{l}{\tbcl Boost-and-Skip (proposed)} & \tbcl 12.68 & \tbcl 3.18 & \tbcl 0.74 & \tbcl \underline{0.61} \\
            \\
        \end{tabular}
    \end{subtable}
    \begin{subtable}[t]{0.499\textwidth}
        \centering
        \begin{tabular}{lcccc}
            \toprule[0.1em]
              \multicolumn{1}{l}{Method}  & \multicolumn{1}{c}{cFID $\downarrow$} & \multicolumn{1}{c}{sFID $\downarrow$} & \multicolumn{1}{c}{Prec $\uparrow$} & \multicolumn{1}{c}{Rec $\uparrow$} \\
              \\
            \multicolumn{5}{l}{\textbf{LSUN Bedrooms 256$\times$256}}\\
            \toprule[0.1em]
            \multicolumn{1}{l}{ADM~\citep{dhariwal2021diffusion}} & 63.30 & 8.00 & \underline{0.89} & 0.15 \\
            \multicolumn{1}{l}{LDM~\citep{rombach2022high}} & 63.53 & 7.73 & \textbf{0.90} & 0.13 \\
            \multicolumn{1}{l}{StyleGAN~\citep{karras2019style}} & 57.17 & 7.78 & \underline{0.89} & 0.14 \\
            \midrule[0.001em]
            \multicolumn{1}{l}{\citet{um2023don}} & \underline{41.75} & 7.26 & 0.87 & 0.10 \\
            \multicolumn{1}{l}{\citet{um2024self}} & \bf{36.94} & \bf{5.13} & 0.87 & 0.15 \\
            \midrule[0.001em]
            \multicolumn{1}{l}{Temperature sampling} & 51.81 & 8.22 & 0.82 & \textbf{0.21} \\
            \multicolumn{1}{l}{\tbcl Boost-and-Skip (proposed)} & \tbcl 44.77 & \tbcl \underline{6.70} & \tbcl 0.78 & \tbcl \underline{0.20} \\
            \\
            \multicolumn{5}{l}{\textbf{ImageNet 256$\times$256}}\\
            \toprule[0.1em]
            \multicolumn{1}{l}{ADM~\citep{dhariwal2021diffusion}} & 13.22 & 7.66 & \textbf{0.86} & 0.39 \\
            \multicolumn{1}{l}{DiT~\citep{Peebles2022DiT}} & 21.51 & 6.76 & 0.80 & \underline{0.46} \\
            \multicolumn{1}{l}{CADS~\citep{sadat2023cads}} & 15.95 & 6.18 & 0.81 & \textbf{0.48} \\
            \midrule[0.001em]
            \multicolumn{1}{l}{\citet{sehwag2022generating}} & 10.93 & 6.66 & \underline{0.85} & 0.39 \\
            \multicolumn{1}{l}{\citet{um2023don}}  & 11.44 & 4.63 & \underline{0.85} & 0.42 \\
            \multicolumn{1}{l}{\citet{um2024self}} & \bf{9.98} & \underline{4.35} & 0.83 & 0.45 \\
            \midrule[0.001em]
            \multicolumn{1}{l}{Temperature sampling} & 12.48 & 6.72 & 0.84 & 0.41 \\
            \multicolumn{1}{l}{\tbcl Boost-and-Skip (proposed)} & \tbcl \underline{10.04} & \tbcl \bf{4.33} & \tbcl 0.83 & \tbcl 0.45 \\
            \\
        \end{tabular}
    \end{subtable}
    }
    \vspace{-4mm}
    \caption{\textbf{Quantitative comparisons.}
    ``ADM-ML'' represents a classifier-guided baseline that implements conditional generation on \textbf{M}inority \textbf{L}abels~\cite{um2023don}. For baseline real data to compute the metrics, we employ the most unique samples that yield the highest AvgkNN values, following the previous convention~\citep{um2023don, um2024self}. The best results are marked in \textbf{bold}, and the second bests are \underline{underlined}.}
    \label{tab:main_results}
\end{table*}

\begin{table*}[!t]
    \centering
    \begin{subtable}[t]{0.32\textwidth}
        \centering
        \scalebox{0.75}{
            \begin{tabular}{ccccc}
                \toprule
                $\gamma^2$ & cFID $\downarrow$ & {sFID} $\downarrow$ & Prec $\uparrow$ & Rec $\uparrow$ \\
                \midrule
                    1.0              & 84.98 & 23.07 & \textbf{0.98} & 0.14 \\
                    4.0              & \textbf{81.75} & \textbf{22.71} & 0.97 & 0.17 \\
                    9.0              & 82.10 & 22.78 & \textbf{0.98} & 0.15 \\
                    16.0              & 83.32 & 22.91 & \textbf{0.98} & \textbf{0.18} \\
                \bottomrule
            \end{tabular}
        }
        \caption{Boost-only (\ie, $\Delta_t = 0$)}
        \label{tab:ablation_bo}
    \end{subtable}
    \hfill
    \begin{subtable}[t]{0.32\textwidth}
        \centering
        \scalebox{0.75}{
            \begin{tabular}{ccccc}
                \toprule
                $\Delta_t$ & cFID $\downarrow$ & {sFID} $\downarrow$ & Prec $\uparrow$ & Rec $\uparrow$ \\
                \midrule
                    0                   & 84.98  & 23.07  & \textbf{0.98} & \textbf{0.14} \\
                    10                  & 61.86  & \textbf{19.97}  & 0.96 & 0.13 \\
                    20                  & \textbf{55.15}  & 22.02  & 0.91 & 0.06 \\
                    50                  & 289.70 & 119.59 & 0.29 & 0.00 \\
                \bottomrule
            \end{tabular}
        }
        \caption{Skip-only (\ie, $\gamma = 1.0$)}
        \label{tab:ablation_so}
    \end{subtable}
    \hfill
    \begin{subtable}[t]{0.32\textwidth}
        \centering
        \scalebox{0.75}{
            \begin{tabular}{ccccc}
                \toprule
                $\gamma^2$ & cFID $\downarrow$ & {sFID} $\downarrow$ & Prec $\uparrow$ & Rec $\uparrow$ \\
                \midrule
                1.0                 & 74.41 & 20.50 & \textbf{0.97} & 0.21 \\
                2.0                 & 51.47 & 16.43 & 0.94 & 0.33 \\
                4.0                 & \textbf{23.56} & \textbf{12.17} & 0.77 & 0.50 \\
                9.0                 & 219.21 & 45.59 & 0.05 & \textbf{0.67} \\
                \bottomrule
            \end{tabular}
        }
        \caption{Boost-and-Skip (with $\Delta_t = 3$)}
        \label{tab:ablation_bns}
    \end{subtable}
    \vspace{-2.3mm}
    \caption{
    \textbf{Exploring the design space of Boost-and-Skip.} ``Boost-only'' refers to using boosted initialization alone, while ``Skip-only'' represents configurations that only employ the proposed timestep skipping. $\gamma$ denotes the boosting scale, and $\Delta_t$ indicates the number of timesteps skipped. Using either technique individually results in limited performance improvements.
    }
    \label{tab:ablation}
    \vspace{-3mm}
\end{table*}

\begin{table}[h]
    
    \centering
    \fontsize{8}{8}\selectfont
        {
        \scalebox{0.8}{
    \begin{tabular}{lccccr}
        \toprule
                &       &       & \multicolumn{3}{c}{Complexity}\\
        \cmidrule(lr){4-6}
        Method & cFID $\downarrow$ & sFID $\downarrow$ & Infer $\downarrow$ & \multicolumn{1}{c}{Extra $\downarrow$} & \multicolumn{1}{c}{Memory $\downarrow$} \\
        \midrule
        ADM\tiny{~\citep{dhariwal2021diffusion}} &  18.37 & 5.39 & \udl{0.99} s & \multicolumn{1}{c}{--} & \udl{85.73} MB \\
        EDM\tiny{~\citep{karras2022elucidating}} & 19.09 & 4.73 & 1.87 s & \multicolumn{1}{c}{--} & \textbf{84.48} MB \\
        \citet{sehwag2022generating} & \udl{11.37} & 4.69 & 2.75 s & $>$ 16 d & 187.04 MB \\
        \citet{um2023don} & 12.47 & \textbf{3.13} & 2.27 s & $>$ 16 d & 184.40 MB \\
        \citet{um2024self} & \textbf{11.24} & \udl{3.17} & 2.80 s & \multicolumn{1}{c}{--} & 386.75 MB \\
        Temperature sampling &  16.19 & 4.20 & \udl{0.99} s & \multicolumn{1}{c}{--} & \udl{85.73} MB \\
        \tbcl Boost-and-Skip (ours) & \tbcl 12.68 &\tbcl  3.18 & \tbcl \textbf{0.98} s & \multicolumn{1}{c}{\tbcl --} & \tbcl \udl{85.73} MB \\
        \bottomrule
    \end{tabular}
    }
    }
    \vspace{-2mm}
    \caption{
    \textbf{Comparison of complexity across existing samplers on ImageNet $\mathbf{64 \times 64}$.}
    ``Infer'' represents the inference time measured in seconds/sample.
    ``Extra'' refers to the additional time, in \textbf{days}, required to construct external classifiers used for guided sampling~\citep{sehwag2022generating, um2023don}. 
    ``Memory'' indicates the peak memory usage, measured in MB/sample. 
    All measurements are based on a single NVIDIA A100 GPU.
    }\label{tab:complexity}
    \vspace{-5mm}
\end{table}

\begin{table}[h]
    \centering
    \fontsize{8.5}{8.5}\selectfont
        {
        \scalebox{1.0}{
    \begin{tabular}{lcccc}
        \toprule
          Training data & Acc $\uparrow$ & F1 $\uparrow$ & Prec $\uparrow$ & Rec $\uparrow$ \\
        \midrule
         CelebA trainset & 0.898  & 0.746  &  0.815  & 0.710  \\
         + ADM gens (50K) & 0.897 & 0.742  & 0.808  & 0.711  \\
        + SGMS gens (50K)  & \textbf{0.903} & \textbf{0.757} & \textbf{0.822} & \textbf{0.724} \\
        \tbcl + Ours gens (50K)  & \tbcl \udl{0.902} & \tbcl \udl{0.755} & \tbcl \udl{0.819} & \tbcl \udl{0.723} \\
        \bottomrule
    \end{tabular}
    }}
    \vspace{-2mm}
    \caption{
    \textbf{Data augmentation for downstream classification tasks.}
    ``SGMS'' refers to the approach by~\citet{um2024self}.
    All settings were evaluated on the CelebA testset and averaged over three different runs.
    }
    \label{tab:classification}
    \vspace{-8mm}
\end{table}

\subsection{Results}
\label{subsec:exp_results}

\noindent \textbf{Qualitative comparisons.}~\cref{fig:samples_in256} presents generated samples from three different approaches on ImageNet $256 \times 256$. We observe that our framework consistently produces samples with highly more distinct and intricate visual aspects compared to the baselines, characteristics often associated with low-density instances~\citep{serra2019input, arvinte2023investigating}. We highlight that this contrasts with temperature sampling yielding marginal changes from the baseline ADM, as further confirmed by our quantitative evaluations (see~\cref{tab:main_results}). Additional samples including those from other benchmarks and baselines are provided in~\cref{subsec:additional_samples}.

\noindent \textbf{Quantitative evaluation.}~\cref{tab:main_results} compares performance in terms of quality and diversity. Observe that our sampler achieves high-quality minority generating performance on par with computationally intensive guided samplers across all datasets. We emphasize that these substantial improvements are achievable with minimal computational overhead; see~\cref{tab:complexity} for a complexity analysis on ImageNet $64 \times 64$. In addition to its competitive performance in quality and diversity, our approach also performs well in neighborhood density metrics; see~\cref{subsec:nnd_dist} for explicit details.

\noindent \textbf{Ablation studies.}~\cref{tab:ablation} presents ablation results on our design choices, $\gamma$ and $\Delta_t$. We see that applying either modification individually results in limited improvements (see Tables~\ref{tab:ablation_bo} and~\ref{tab:ablation_so}). In contrast, combining both techniques with properly chosen $\gamma$ values yields significant enhancements in minority generation; see~\cref{tab:ablation_bns} for details. In~\cref{subsec:add_ablations}, we present a further ablation exploring performance with $\gamma < 1.0$, where we demonstrate a quality-improving potential of ours; see~\cref{tab:gamma_lt_1} for details.

\noindent \textbf{Downstream application.} To further highlight the practical importance of Boost-and-Skip, we investigate its potential application in classifier training with synthetically augmented datasets. Specifically, we examine whether our minority-promoted generated samples can enhance classification performance. Following~\citet{um2024self}, we consider the prediction of 40 attributes in CelebA and train ResNet-18 models on four datasets: (i) the CelebA training set; (ii) CelebA augmented with 50K samples from ADM~\cite{dhariwal2021diffusion}; (iii) CelebA augmented with 50K samples from~\citet{um2024self}; and (iv) CelebA augmented with 50K samples from ours. As in~\citet{um2024self}, we use an off-the-shelf classifier to label the generated samples. Our results show classification improvements comparable to~\citet{um2024self}, despite our method being significantly more computationally efficient, further demonstrating its utility in downstream tasks.

\section{Conclusion}

We introduced \emph{Boost-and-Skip}, a simple yet impactful guidance-free approach for minority sample generation. By incorporating variance-\emph{boosted} initialization and timestep \emph{skipping}, our framework promotes the emergence of underrepresented features without relying on additional low-density guidance. Theoretical and empirical results validated its effectiveness, demonstrating competitive performance while significantly reducing computational costs. We further demonstrated its practical utility in downstream tasks such as data augmentation, highlighting its broader applicability in generative modeling. 

A limitation is that as discussed in~\cref{subsec:rationale}, a naive application to deterministic samplers may fail due to the absence of a contracting effect. A promising future direction is to address this limitation, extending the effectiveness of our approach to a wider range of generative processes.

\section*{Impact Statement}

One potential negative impact is the intentional misuse of our sampler to suppress the generation of minority-featured samples. This could be achieved by setting $\gamma < 1.0$ with $\Delta_t > 0$, potentially biasing the initialization toward high-density regions (as explored in~\cref{subsec:add_ablations}). Recognizing and mitigating this risk is essential, emphasizing the importance of responsible deployment to ensure inclusivity in generative modeling.

\section*{Acknowledgments}

This work was supported by the National Research Foundation of Korea under Grant RS-2024-00336454 and the Institute of Information \& Communications Technology Planning \& Evaluation (IITP) grant funded by the Korea government (MSIT) (RS-2025-02304967, AI Star Fellowship (KAIST)).

\bibliography{references}

\begin{thebibliography}{53}
\providecommand{\natexlab}[1]{#1}
\providecommand{\url}[1]{\texttt{#1}}
\expandafter\ifx\csname urlstyle\endcsname\relax
  \providecommand{\doi}[1]{doi: #1}\else
  \providecommand{\doi}{doi: \begingroup \urlstyle{rm}\Url}\fi

\bibitem[Ackley et~al.(1985)Ackley, Hinton, and Sejnowski]{ackley1985learning}
Ackley, D.~H., Hinton, G.~E., and Sejnowski, T.~J.
\newblock A learning algorithm for boltzmann machines.
\newblock \emph{Cognitive science}, 9\penalty0 (1):\penalty0 147--169, 1985.

\bibitem[Arvinte et~al.(2023)Arvinte, Cornelius, Martin, and Himayat]{arvinte2023investigating}
Arvinte, M., Cornelius, C., Martin, J., and Himayat, N.
\newblock Investigating the adversarial robustness of density estimation using the probability flow ode.
\newblock \emph{arXiv preprint arXiv:2310.07084}, 2023.

\bibitem[Breunig et~al.(2000)Breunig, Kriegel, Ng, and Sander]{breunig2000lof}
Breunig, M.~M., Kriegel, H.-P., Ng, R.~T., and Sander, J.
\newblock Lof: identifying density-based local outliers.
\newblock In \emph{Proceedings of the 2000 ACM SIGMOD international conference on Management of data}, pp.\  93--104, 2000.

\bibitem[Brock et~al.(2018)Brock, Donahue, and Simonyan]{brock2018large}
Brock, A., Donahue, J., and Simonyan, K.
\newblock Large scale gan training for high fidelity natural image synthesis.
\newblock \emph{arXiv preprint arXiv:1809.11096}, 2018.

\bibitem[Choi et~al.(2020)Choi, Grover, Singh, Shu, and Ermon]{choi2020fair}
Choi, K., Grover, A., Singh, T., Shu, R., and Ermon, S.
\newblock Fair generative modeling via weak supervision.
\newblock In \emph{Proceedings of the 37th International Conference on Machine Learning (ICML)}. PMLR, 2020.

\bibitem[Chung et~al.(2022)Chung, Sim, and Ye]{chung2022come}
Chung, H., Sim, B., and Ye, J.~C.
\newblock Come-closer-diffuse-faster: Accelerating conditional diffusion models for inverse problems through stochastic contraction.
\newblock In \emph{Proceedings of the IEEE/CVF Conference on Computer Vision and Pattern Recognition}, pp.\  12413--12422, 2022.

\bibitem[Corso et~al.(2023)Corso, Xu, De~Bortoli, Barzilay, and Jaakkola]{corso2023particle}
Corso, G., Xu, Y., De~Bortoli, V., Barzilay, R., and Jaakkola, T.
\newblock Particle guidance: non-iid diverse sampling with diffusion models.
\newblock \emph{arXiv preprint arXiv:2310.13102}, 2023.

\bibitem[Deng et~al.(2009)Deng, Dong, Socher, Li, Li, and Fei-Fei]{deng2009imagenet}
Deng, J., Dong, W., Socher, R., Li, L.-J., Li, K., and Fei-Fei, L.
\newblock Imagenet: A large-scale hierarchical image database.
\newblock In \emph{2009 IEEE conference on computer vision and pattern recognition}, pp.\  248--255. Ieee, 2009.

\bibitem[Dhariwal \& Nichol(2021)Dhariwal and Nichol]{dhariwal2021diffusion}
Dhariwal, P. and Nichol, A.
\newblock Diffusion models beat gans on image synthesis.
\newblock \emph{Advances in neural information processing systems}, 34:\penalty0 8780--8794, 2021.

\bibitem[Du et~al.(2022)Du, Wang, Cai, and Li]{du2022vos}
Du, X., Wang, Z., Cai, M., and Li, Y.
\newblock Vos: Learning what you don't know by virtual outlier synthesis.
\newblock \emph{arXiv preprint arXiv:2202.01197}, 2022.

\bibitem[Du et~al.(2023)Du, Sun, Zhu, and Li]{du2023dream}
Du, X., Sun, Y., Zhu, X., and Li, Y.
\newblock Dream the impossible: Outlier imagination with diffusion models.
\newblock In \emph{Advances in Neural Information Processing Systems}, 2023.

\bibitem[Goodfellow et~al.(2014)Goodfellow, Pouget-Abadie, Mirza, Xu, Warde-Farley, Ozair, Courville, and Bengio]{goodfellow2014generative}
Goodfellow, I., Pouget-Abadie, J., Mirza, M., Xu, B., Warde-Farley, D., Ozair, S., Courville, A., and Bengio, Y.
\newblock Generative adversarial nets.
\newblock \emph{Advances in neural information processing systems}, 27, 2014.

\bibitem[Han et~al.(2022)Han, Choi, Choi, Kim, Ha, and Choi]{han2022rarity}
Han, J., Choi, H., Choi, Y., Kim, J., Ha, J.-W., and Choi, J.
\newblock Rarity score: A new metric to evaluate the uncommonness of synthesized images.
\newblock \emph{arXiv preprint arXiv:2206.08549}, 2022.

\bibitem[Heusel et~al.(2017)Heusel, Ramsauer, Unterthiner, Nessler, and Hochreiter]{heusel2017gans}
Heusel, M., Ramsauer, H., Unterthiner, T., Nessler, B., and Hochreiter, S.
\newblock Gans trained by a two time-scale update rule converge to a local nash equilibrium.
\newblock \emph{Advances in neural information processing systems}, 30, 2017.

\bibitem[Ho \& Salimans(2022)Ho and Salimans]{ho2022classifier}
Ho, J. and Salimans, T.
\newblock Classifier-free diffusion guidance.
\newblock \emph{arXiv preprint arXiv:2207.12598}, 2022.

\bibitem[Ho et~al.(2020)Ho, Jain, and Abbeel]{ho2020denoising}
Ho, J., Jain, A., and Abbeel, P.
\newblock Denoising diffusion probabilistic models.
\newblock \emph{Advances in neural information processing systems}, 33:\penalty0 6840--6851, 2020.

\bibitem[Ho et~al.(2022)Ho, Salimans, Gritsenko, Chan, Norouzi, and Fleet]{ho2022video}
Ho, J., Salimans, T., Gritsenko, A., Chan, W., Norouzi, M., and Fleet, D.~J.
\newblock Video diffusion models.
\newblock \emph{Advances in Neural Information Processing Systems}, 35:\penalty0 8633--8646, 2022.

\bibitem[Hu et~al.(2024)Hu, Zheng, Zheng, Wang, Tao, and Cham]{hu2024one}
Hu, M., Zheng, J., Zheng, C., Wang, C., Tao, D., and Cham, T.-J.
\newblock One more step: A versatile plug-and-play module for rectifying diffusion schedule flaws and enhancing low-frequency controls.
\newblock In \emph{Proceedings of the IEEE/CVF Conference on Computer Vision and Pattern Recognition}, pp.\  7331--7340, 2024.

\bibitem[Huang \& Jafari(2023)Huang and Jafari]{huang2023enhanced}
Huang, G. and Jafari, A.~H.
\newblock Enhanced balancing gan: Minority-class image generation.
\newblock \emph{Neural computing and applications}, 35\penalty0 (7):\penalty0 5145--5154, 2023.

\bibitem[Karras et~al.(2019)Karras, Laine, and Aila]{karras2019style}
Karras, T., Laine, S., and Aila, T.
\newblock A style-based generator architecture for generative adversarial networks.
\newblock In \emph{Proceedings of the IEEE/CVF conference on computer vision and pattern recognition}, pp.\  4401--4410, 2019.

\bibitem[Karras et~al.(2022)Karras, Aittala, Aila, and Laine]{karras2022elucidating}
Karras, T., Aittala, M., Aila, T., and Laine, S.
\newblock Elucidating the design space of diffusion-based generative models.
\newblock \emph{Advances in Neural Information Processing Systems}, 35:\penalty0 26565--26577, 2022.

\bibitem[Kingma \& Dhariwal(2018)Kingma and Dhariwal]{kingma2018glow}
Kingma, D.~P. and Dhariwal, P.
\newblock Glow: Generative flow with invertible 1x1 convolutions.
\newblock \emph{Advances in neural information processing systems}, 31, 2018.

\bibitem[Kynk{\"a}{\"a}nniemi et~al.(2019)Kynk{\"a}{\"a}nniemi, Karras, Laine, Lehtinen, and Aila]{kynkaanniemi2019improved}
Kynk{\"a}{\"a}nniemi, T., Karras, T., Laine, S., Lehtinen, J., and Aila, T.
\newblock Improved precision and recall metric for assessing generative models.
\newblock \emph{Advances in Neural Information Processing Systems}, 32, 2019.

\bibitem[Lin et~al.(2024)Lin, Liu, Li, and Yang]{lin2024common}
Lin, S., Liu, B., Li, J., and Yang, X.
\newblock Common diffusion noise schedules and sample steps are flawed.
\newblock In \emph{Proceedings of the IEEE/CVF winter conference on applications of computer vision}, pp.\  5404--5411, 2024.

\bibitem[Lin et~al.(2022)Lin, Liang, Fanti, Sekar, Sharma, Soltanaghaei, Rowe, Namkung, Liu, Kim, et~al.]{lin2022raregan}
Lin, Z., Liang, H., Fanti, G., Sekar, V., Sharma, R.~A., Soltanaghaei, E., Rowe, A., Namkung, H., Liu, Z., Kim, D., et~al.
\newblock Raregan: Generating samples for rare classes.
\newblock \emph{arXiv preprint arXiv:2203.10674}, 2022.

\bibitem[Liu et~al.(2015)Liu, Luo, Wang, and Tang]{liu2015faceattributes}
Liu, Z., Luo, P., Wang, X., and Tang, X.
\newblock Deep learning face attributes in the wild.
\newblock In \emph{Proceedings of International Conference on Computer Vision (ICCV)}, December 2015.

\bibitem[Lu et~al.(2024)Lu, Teehan, and Ren]{lu2024procreate}
Lu, J., Teehan, R., and Ren, M.
\newblock Procreate, don$\backslash$'t reproduce! propulsive energy diffusion for creative generation.
\newblock \emph{arXiv preprint arXiv:2408.02226}, 2024.

\bibitem[Naeem et~al.(2020)Naeem, Oh, Uh, Choi, and Yoo]{naeem2020reliable}
Naeem, M.~F., Oh, S.~J., Uh, Y., Choi, Y., and Yoo, J.
\newblock Reliable fidelity and diversity metrics for generative models.
\newblock In \emph{International Conference on Machine Learning}, pp.\  7176--7185. PMLR, 2020.

\bibitem[Nash et~al.(2021)Nash, Menick, Dieleman, and Battaglia]{nash2021generating}
Nash, C., Menick, J., Dieleman, S., and Battaglia, P.~W.
\newblock Generating images with sparse representations.
\newblock \emph{arXiv preprint arXiv:2103.03841}, 2021.

\bibitem[Nichol et~al.(2021)Nichol, Dhariwal, Ramesh, Shyam, Mishkin, McGrew, Sutskever, and Chen]{nichol2021glide}
Nichol, A., Dhariwal, P., Ramesh, A., Shyam, P., Mishkin, P., McGrew, B., Sutskever, I., and Chen, M.
\newblock Glide: Towards photorealistic image generation and editing with text-guided diffusion models.
\newblock \emph{arXiv preprint arXiv:2112.10741}, 2021.

\bibitem[Nichol \& Dhariwal(2021)Nichol and Dhariwal]{nichol2021improved}
Nichol, A.~Q. and Dhariwal, P.
\newblock Improved denoising diffusion probabilistic models.
\newblock In \emph{International Conference on Machine Learning}, pp.\  8162--8171. PMLR, 2021.

\bibitem[Parmar et~al.(2022)Parmar, Zhang, and Zhu]{parmar2022aliased}
Parmar, G., Zhang, R., and Zhu, J.-Y.
\newblock On aliased resizing and surprising subtleties in gan evaluation.
\newblock In \emph{Proceedings of the IEEE/CVF Conference on Computer Vision and Pattern Recognition}, pp.\  11410--11420, 2022.

\bibitem[Paszke et~al.(2019)Paszke, Gross, Massa, Lerer, Bradbury, Chanan, Killeen, Lin, Gimelshein, Antiga, et~al.]{paszke2019pytorch}
Paszke, A., Gross, S., Massa, F., Lerer, A., Bradbury, J., Chanan, G., Killeen, T., Lin, Z., Gimelshein, N., Antiga, L., et~al.
\newblock Pytorch: An imperative style, high-performance deep learning library.
\newblock \emph{Advances in neural information processing systems}, 32, 2019.

\bibitem[Peebles \& Xie(2022)Peebles and Xie]{Peebles2022DiT}
Peebles, W. and Xie, S.
\newblock Scalable diffusion models with transformers.
\newblock \emph{arXiv preprint arXiv:2212.09748}, 2022.

\bibitem[Pham(2008)]{pham2008analysis}
Pham, Q.-C.
\newblock Analysis of discrete and hybrid stochastic systems by nonlinear contraction theory.
\newblock In \emph{2008 10th International Conference on Control, Automation, Robotics and Vision}, pp.\  1054--1059. IEEE, 2008.

\bibitem[Pham et~al.(2009)Pham, Tabareau, and Slotine]{pham2009contraction}
Pham, Q.-C., Tabareau, N., and Slotine, J.-J.
\newblock A contraction theory approach to stochastic incremental stability.
\newblock \emph{IEEE Transactions on Automatic Control}, 54\penalty0 (4):\penalty0 816--820, 2009.

\bibitem[Plancherel \& Leffler(1910)Plancherel and Leffler]{plancherel1910contribution}
Plancherel, M. and Leffler, M.
\newblock Contribution {\`a} l'{\'e}tude de la repr{\'e}sentation d’une fonction arbitraire par des int{\'e}grales d{\'e}finies.
\newblock \emph{Rendiconti del Circolo Matematico di Palermo (1884-1940)}, 30\penalty0 (1):\penalty0 289--335, 1910.

\bibitem[Qin et~al.(2023)Qin, Zheng, Yao, Zhou, and Zhang]{qin2023class}
Qin, Y., Zheng, H., Yao, J., Zhou, M., and Zhang, Y.
\newblock Class-balancing diffusion models.
\newblock \emph{arXiv preprint arXiv:2305.00562}, 2023.

\bibitem[Rombach et~al.(2022)Rombach, Blattmann, Lorenz, Esser, and Ommer]{rombach2022high}
Rombach, R., Blattmann, A., Lorenz, D., Esser, P., and Ommer, B.
\newblock High-resolution image synthesis with latent diffusion models.
\newblock In \emph{Proceedings of the IEEE/CVF conference on computer vision and pattern recognition}, pp.\  10684--10695, 2022.

\bibitem[Sadat et~al.(2023)Sadat, Buhmann, Bradely, Hilliges, and Weber]{sadat2023cads}
Sadat, S., Buhmann, J., Bradely, D., Hilliges, O., and Weber, R.~M.
\newblock Cads: Unleashing the diversity of diffusion models through condition-annealed sampling.
\newblock \emph{arXiv preprint arXiv:2310.17347}, 2023.

\bibitem[Samuel et~al.(2023)Samuel, Ben-Ari, Raviv, Darshan, and Chechik]{samuel2023all}
Samuel, D., Ben-Ari, R., Raviv, S., Darshan, N., and Chechik, G.
\newblock It is all about where you start: Text-to-image generation with seed selection.
\newblock \emph{arXiv preprint arXiv:2304.14530}, 2023.

\bibitem[Sehwag et~al.(2022)Sehwag, Hazirbas, Gordo, Ozgenel, and Canton]{sehwag2022generating}
Sehwag, V., Hazirbas, C., Gordo, A., Ozgenel, F., and Canton, C.
\newblock Generating high fidelity data from low-density regions using diffusion models.
\newblock In \emph{Proceedings of the IEEE/CVF Conference on Computer Vision and Pattern Recognition}, pp.\  11492--11501, 2022.

\bibitem[Serr{\`a} et~al.(2019)Serr{\`a}, {\'A}lvarez, G{\'o}mez, Slizovskaia, N{\'u}{\~n}ez, and Luque]{serra2019input}
Serr{\`a}, J., {\'A}lvarez, D., G{\'o}mez, V., Slizovskaia, O., N{\'u}{\~n}ez, J.~F., and Luque, J.
\newblock Input complexity and out-of-distribution detection with likelihood-based generative models.
\newblock \emph{arXiv preprint arXiv:1909.11480}, 2019.

\bibitem[Song et~al.(2020)Song, Sohl-Dickstein, Kingma, Kumar, Ermon, and Poole]{song2020score}
Song, Y., Sohl-Dickstein, J., Kingma, D.~P., Kumar, A., Ermon, S., and Poole, B.
\newblock Score-based generative modeling through stochastic differential equations.
\newblock \emph{arXiv preprint arXiv:2011.13456}, 2020.

\bibitem[Um \& Ye(2024{\natexlab{a}})Um and Ye]{um2024minorityprompt}
Um, S. and Ye, J.~C.
\newblock Minorityprompt: Text to minority image generation via prompt optimization.
\newblock \emph{arXiv preprint arXiv:2410.07838}, 2024{\natexlab{a}}.

\bibitem[Um \& Ye(2024{\natexlab{b}})Um and Ye]{um2024self}
Um, S. and Ye, J.~C.
\newblock Self-guided generation of minority samples using diffusion models.
\newblock \emph{arXiv preprint arXiv:2407.11555}, 2024{\natexlab{b}}.

\bibitem[Um et~al.(2023)Um, Lee, and Ye]{um2023don}
Um, S., Lee, S., and Ye, J.~C.
\newblock Don't play favorites: Minority guidance for diffusion models.
\newblock \emph{arXiv preprint arXiv:2301.12334}, 2023.

\bibitem[Vincent(2011)]{vincent2011connection}
Vincent, P.
\newblock A connection between score matching and denoising autoencoders.
\newblock \emph{Neural computation}, 23\penalty0 (7):\penalty0 1661--1674, 2011.

\bibitem[Xu et~al.(2023)Xu, Deng, Cheng, Tian, Liu, and Jaakkola]{xu2023restart}
Xu, Y., Deng, M., Cheng, X., Tian, Y., Liu, Z., and Jaakkola, T.
\newblock Restart sampling for improving generative processes.
\newblock \emph{Advances in Neural Information Processing Systems}, 36:\penalty0 76806--76838, 2023.

\bibitem[Yu et~al.(2015)Yu, Seff, Zhang, Song, Funkhouser, and Xiao]{yu2015lsun}
Yu, F., Seff, A., Zhang, Y., Song, S., Funkhouser, T., and Xiao, J.
\newblock Lsun: Construction of a large-scale image dataset using deep learning with humans in the loop.
\newblock \emph{arXiv preprint arXiv:1506.03365}, 2015.

\bibitem[Yu et~al.(2020)Yu, Li, Zhou, Malik, Davis, and Fritz]{yu2020inclusive}
Yu, N., Li, K., Zhou, P., Malik, J., Davis, L., and Fritz, M.
\newblock Inclusive gan: Improving data and minority coverage in generative models.
\newblock In \emph{European Conference on Computer Vision}, pp.\  377--393. Springer, 2020.

\bibitem[Zhang et~al.(2023)Zhang, Zhang, Zheng, Zhang, Qamar, Bae, and Kweon]{zhang2023survey}
Zhang, C., Zhang, C., Zheng, S., Zhang, M., Qamar, M., Bae, S.-H., and Kweon, I.~S.
\newblock A survey on audio diffusion models: Text to speech synthesis and enhancement in generative ai.
\newblock \emph{arXiv preprint arXiv:2303.13336}, 2023.

\bibitem[Zhao et~al.(2019)Zhao, Nasrullah, and Li]{zhao2019pyod}
Zhao, Y., Nasrullah, Z., and Li, Z.
\newblock Pyod: A python toolbox for scalable outlier detection.
\newblock \emph{Journal of Machine Learning Research}, 20\penalty0 (96):\penalty0 1--7, 2019.
\newblock URL \url{http://jmlr.org/papers/v20/19-011.html}.

\end{thebibliography}
\bibliographystyle{icml2025}

\newpage
\appendix
\onecolumn

\section{Additional Discussions, Ablations, and Analyses}

\subsection{Related work}
\label{sec:related_work}

\setlength{\columnsep}{7.0pt}%
\begin{wrapfigure}[21]{r}{0.45\linewidth}
    \centering
   \vspace{-3mm}
    \includegraphics[width=\linewidth]{./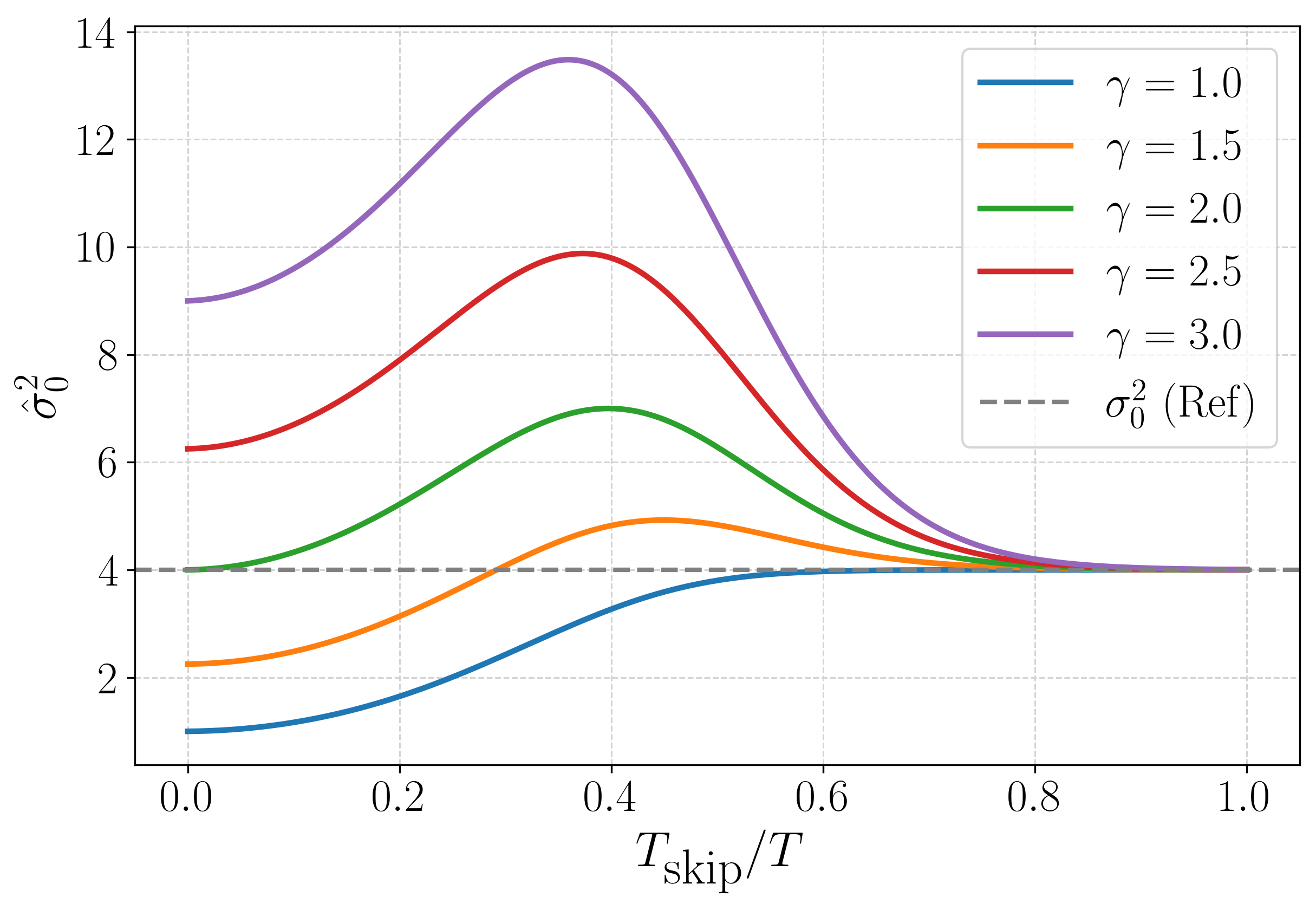}
 \vspace{-7mm}
    \caption{\textbf{Low-density emphasis impact of Boost-and-Skip.} We visualize $\hat{\sigma}_0^2$ (\ie, the scale of $\hat{\bs \Sigma}_0 \coloneqq \hat{\sigma}_0^2 {\bs I}$) across $T_{\text{skip}} / T$ under the settings specified in~\cref{corollary}, with $\sigma_0 = 2$. Observe that the variance of $\hat{\bs \Sigma}_0$ surpasses that of ${\bs \Sigma}_0$ for $\gamma > 1$ and $T_{\text{skip}} < T$, demonstrating the low-density encouraging influence of the Boost-and-Skip approach.}
    \label{fig:hat_sigma_0_vs_T_s}
\end{wrapfigure}

In addition to closely related studies mentioned in~\cref{sec:intro}~\citep{sehwag2022generating, um2023don, um2024self, um2024minorityprompt}, several other works explore distinct conditions and scenarios in the context of minority generation~\cite{yu2020inclusive, lin2022raregan, qin2023class, huang2023enhanced, samuel2023all}. One instance is~\citet{qin2023class} wherein the authors develop a training technique to mitigate a class imbalance issue when constructing class-conditional diffusion models. A distinction w.r.t. ours is that they require a specialized training, and their method is limited to conditional diffusion models. Another notable study is done by~\citet{samuel2023all}. Specifically using text-to-image (T2I) diffusion models~\citep{rombach2022high}, they develop a sampler to faithfully generate samples of unique text-prompts by employing real reference data instances associated with the given prompts. The key distinction to ours is that their method is limited to T2I models and rely upon a set of real reference data. 

A related yet distinct line of research is to improve the diversity of conventional diffusion samplers~\citep{sadat2023cads, corso2023particle, lu2024procreate}. For instance,~\citet{sadat2023cads} propose a simple conditioning technique to boost-up the diversity by introducing time-scheduled noise perturbations in the conditional embedding space. The difference from our study is that their method is confined to conditional diffusion models and not specifically designed for minority generation. The approaches in~\citet{corso2023particle, lu2024procreate} share similar spirits, repelling intermediate latent samples within an inference batch to produce visually distinct outputs. However, as with~\citet{um2024self, um2024minorityprompt}, these methods often introduce computational challenges, \eg, due to the reliance on backpropagation.

\subsection{Illustration of the impact of low-density emphasis}
\label{subsec:ld_emphasis}

We continue from~\cref{subsec:rationale} and provide a visualization on the effect of low-density emphasis of Boost-and-Skip. Let us first recall~\cref{corollary}:
\begin{manualcorollary}{\ref{corollary}}
    Suppose ${\bs \Sigma}_0 = \sigma_0^2 {\bs I}$ and $\hat{\bs \Sigma}_{T_{\text{skip}}} = \gamma^2 {\bs I}$, and define the quantity (if it exists)
    \begin{align*}
        \kappa \coloneqq \sqrt{(\gamma^2 - 1)/(\sigma_0^2 - 1)}.
    \end{align*}
    The variance-amplification effect of $\hat{\bs \Sigma}_0$ occurs iff
    \begin{align*}
    T_{\text{skip}} \in
    \begin{cases}
        \varnothing & \text{if } \gamma \leq 1 \leq \sigma_0, \\
        (\alpha^{-1}(\kappa),\infty) & \text{if } 1 < \gamma,\sigma_0, \\
        [0,\alpha^{-1}(\kappa)) & \text{if } 1 > \gamma,\sigma_0, \\
        [0,\infty) & \text{if } \sigma_0 \leq 1 \leq \gamma \text{ and } (\sigma_0,\gamma) \neq \mathbf{1},
    \end{cases}
    \end{align*}
    where we define $\alpha^{-1}(\kappa) \coloneqq 0$ when $\kappa > 1$.
\end{manualcorollary}

\cref{fig:hat_sigma_0_vs_T_s} illustrates the behavior of $\hat{\bs \Sigma}_0 \coloneqq \hat{\sigma}_0^2 {\bs I}$ under the conditions specified in Corollary~\ref{corollary}, where $\sigma_0 = 2$. We see that the variance scale of $\hat{\bs \Sigma}_0$ exceeds that of ${\bs \Sigma}_0$ for $\gamma > 1$ and $T_{\text{skip}} < T$, demonstrating the variance-boosting effect induced by the proposed modifications. Notably, regardless of the value of $\gamma$, this amplification effect does not occur for $T_{\text{skip}} \approx T$, thus supporting the effectiveness of our time-skipping technique for promoting low-density emphasis.

\subsection{Further results on contraction theory}
\label{subsec:further_ct}

We continue from~\cref{subsec:rationale} and extend our analysis of contraction theory in the context of Boost-and-Skip. To this end, we first explore its behavior in non-stochastic generative processes that lack the contraction effect. Specifically, we consider its application to the probability flow ODE (PF-ODE)~\citep{song2020score} associated with~\eqref{eq:rVPSDE_emp}:
\begin{align}
    \label{eq:rPFODE_emp}
    \dx = \left[ -\frac{1}{2}  \beta(t) \bx  - \frac{1}{2}\beta(t) {\bs s}_{\bth} (\bx, t) \right] \dt.
\end{align}
The following proposition characterizes the density of generated samples $\hat{p}_{\text{ODE}}$ when going through the above ODE under the same settings as~\cref{prop:gaussian_SDE}:

\begin{proposition}
\label{prop:gaussian_ODE}
    Consider the same data distribution $p_0$ and the optimal score function ${\bs s}_{\bth} (\bx, t)$ as~\cref{prop:gaussian_SDE}.  Suppose the PF-ODE in~\eqref{eq:rPFODE_emp} is initialized with $\hat{\bx}(T_{\text{skip}}) \sim {\cal N}(\hat{{\bs \mu}}_{T_{\text{skip}}}, \hat{{\bs \Sigma}}_{T_{\text{skip}}} ) $. Then, the resulting distribution at $t = 0$ is given by $\hat{\bx}_{\text{ODE}}(0) \sim {\cal N} (\hat{{\bs \mu}}_{0, \text{ODE}}, \hat{{\bs \Sigma}}_{0, \text{ODE}})$, where
    \begin{align}
        \hat{{\bs \mu}}_{0, \text{ODE}} &\coloneqq {\bs \mu}_0 + {\bs \Sigma}_0^{1/2} {\bs \Sigma}_{T_{\text{skip}}}^{-1/2} (\hat{{\bs \mu}}_{T_{\text{skip}}} - {\bs \mu}_{T_{\text{skip}}}), \label{eq:prop_gaussian_ODE_mu0hat} \\
        \hat{{\bs \Sigma}}_{0, \text{ODE}} &\coloneqq {\bs \Sigma}_0 {\bs \Sigma}_{T_{\text{skip}}}^{-1}\hat{{\bs \Sigma}}_{T_{\text{skip}}}. \label{eq:prop_gaussian_ODE_Sigma0hat}
    \end{align}
    Here, ${\bs \mu}_{T_{\text{skip}}}$ and ${\bs \Sigma}_{T_{\text{skip}}}$ are defined as in Eqs. (\ref{eq:mu_Ts}) and (\ref{eq:Sigma_Ts}), respectively.
\end{proposition}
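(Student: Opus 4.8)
The plan is to mirror the strategy used for the reverse-SDE case in Proposition~\ref{prop:gaussian_SDE}, but now for the deterministic probability-flow ODE. The crucial structural fact is that when $p_0 = {\cal N}({\bs \mu}_0, {\bs \Sigma}_0)$ and the score is optimal, every marginal $p_t$ is Gaussian with mean ${\bs \mu}_t = \alpha(t){\bs \mu}_0$ and covariance ${\bs \Sigma}_t = {\bs I} + \alpha(t)^2({\bs \Sigma}_0 - {\bs I})$ (this is exactly Eqs.~(\ref{eq:mu_Ts})--(\ref{eq:Sigma_Ts}) evaluated along the whole trajectory, and follows from convolving the Gaussian data law with the VP forward kernel). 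Hence $\nabla_\bx \log p_t(\bx) = -{\bs \Sigma}_t^{-1}(\bx - {\bs \mu}_t)$ is \emph{affine} in $\bx$, so the PF-ODE in~\eqref{eq:rPFODE_emp} is a linear (time-varying, inhomogeneous) ODE. The first step is therefore to substitute this affine score into~\eqref{eq:rPFODE_emp} and write the dynamics as $\dx = A(t)\bx\,\dt + {\bs b}(t)\,\dt$ with $A(t) = -\tfrac12\beta(t)\big({\bs I} - {\bs \Sigma}_t^{-1}\big)$ and ${\bs b}(t) = -\tfrac12\beta(t){\bs \Sigma}_t^{-1}{\bs \mu}_t$, all matrices simultaneously diagonalizable with ${\bs \Sigma}_0$ so that the problem decouples into scalar ODEs along each eigen-direction.

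Second, because the flow is linear, a Gaussian initial law $\hat{\bx}(T_{\text{skip}}) \sim {\cal N}(\hat{\bs \mu}_{T_{\text{skip}}}, \hat{\bs \Sigma}_{T_{\text{skip}}})$ stays Gaussian for all $t \le T_{\text{skip}}$, so it suffices to track the mean and covariance. Rather than solving the matrix ODEs by hand, I would exploit the defining property of the PF-ODE: its flow map transports $p_{T_{\text{skip}}}$ exactly onto $p_0$. Concretely, the deterministic map $\Phi$ sending $\bx(T_{\text{skip}})$ to $\bx(0)$ must push ${\cal N}({\bs \mu}_{T_{\text{skip}}}, {\bs \Sigma}_{T_{\text{skip}}})$ forward to ${\cal N}({\bs \mu}_0, {\bs \Sigma}_0)$; since $\Phi$ is affine (being the flow of a linear ODE), it is forced to have the form $\Phi(\bz) = {\bs \mu}_0 + {\bs \Sigma}_0^{1/2}{\bs \Sigma}_{T_{\text{skip}}}^{-1/2}(\bz - {\bs \mu}_{T_{\text{skip}}})$ — this is the unique affine transport between these two Gaussians that respects the simultaneous diagonalization (one checks the linear part squares correctly on each eigenvalue and that the ODE preserves the eigenbasis, ruling out spurious orthogonal factors). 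Applying this same $\Phi$ to the boosted initial law ${\cal N}(\hat{\bs \mu}_{T_{\text{skip}}}, \hat{\bs \Sigma}_{T_{\text{skip}}})$ then immediately yields $\hat{\bs \mu}_{0,\text{ODE}} = {\bs \mu}_0 + {\bs \Sigma}_0^{1/2}{\bs \Sigma}_{T_{\text{skip}}}^{-1/2}(\hat{\bs \mu}_{T_{\text{skip}}} - {\bs \mu}_{T_{\text{skip}}})$ and $\hat{\bs \Sigma}_{0,\text{ODE}} = {\bs \Sigma}_0^{1/2}{\bs \Sigma}_{T_{\text{skip}}}^{-1/2}\hat{\bs \Sigma}_{T_{\text{skip}}}{\bs \Sigma}_{T_{\text{skip}}}^{-1/2}{\bs \Sigma}_0^{1/2}$, which collapses to ${\bs \Sigma}_0{\bs \Sigma}_{T_{\text{skip}}}^{-1}\hat{\bs \Sigma}_{T_{\text{skip}}}$ under the shared diagonalization, exactly~\eqref{eq:prop_gaussian_ODE_mu0hat}--\eqref{eq:prop_gaussian_ODE_Sigma0hat}.

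Alternatively, if one prefers a self-contained computation, the same conclusion follows by writing the scalar covariance ODE $\dot v_\ell(t) = -\beta(t)(1 - \sigma_{t,\ell}^{-2})v_\ell(t)$ in each eigen-direction $\ell$ (with $\sigma_{t,\ell}^2$ the $\ell$-th eigenvalue of ${\bs \Sigma}_t$), solving it as $v_\ell(0) = v_\ell(T_{\text{skip}})\exp\!\big(\!-\!\int_0^{T_{\text{skip}}}\beta(t)(1 - \sigma_{t,\ell}^{-2})\,\dt\big)$, and identifying the exponential factor with $\sigma_{0,\ell}^2/\sigma_{T_{\text{skip}},\ell}^2$ by noting that the \emph{same} ODE with initialization $v_\ell(T_{\text{skip}}) = \sigma_{T_{\text{skip}},\ell}^2$ must return $v_\ell(0) = \sigma_{0,\ell}^2$ (the consistency of the empirical PF-ODE). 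The mean obeys an analogous first-order linear ODE and is handled identically.

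I expect the main obstacle to be the bookkeeping that justifies the ``unique affine transport'' step — one must verify that the linear ODE genuinely preserves the eigenbasis of ${\bs \Sigma}_0$ (so that ${\bs \Sigma}_t$, $A(t)$, and the flow map all commute) and that no extra orthogonal rotation can sneak in, so that $\Phi$ is pinned down entirely by how it acts on each eigenvalue. Once the simultaneous diagonalization is in place, everything reduces to independent scalar computations and the result is essentially forced; the contrast with the reverse-SDE case (where the injected Wiener noise contributes the extra ${\bs I}$-type terms and the contraction that damps the initialization error) then becomes transparent, supporting the paper's claim that PF-ODE merely propagates the initial distributional discrepancy rather than contracting it.
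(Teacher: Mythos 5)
Your proposal is correct and takes a genuinely different route from the paper's proof. The paper's argument is a direct computation: it substitutes the affine score into the PF-ODE, changes variables $s \leftarrow T-t$, eigen-decomposes $\bs{\Sigma}_0 = \bs{Q}\bs{\Lambda}\bs{Q}^\top$, constructs the scalar transition factor via the antiderivative $\phi(x;a) = \tfrac{1}{2}\log(ax^2+1)$, and then explicitly evaluates the variation-of-constants integral to obtain the flow map and the mean/covariance update in closed form. Your main argument instead avoids integrating at all: you invoke the defining marginal-preservation property of the PF-ODE — that its (deterministic, affine) flow map $\Phi$ pushes $p_{T_{\text{skip}}} = \mathcal{N}(\bs{\mu}_{T_\sk}, \bs{\Sigma}_{T_\sk})$ forward to $p_0 = \mathcal{N}(\bs{\mu}_0, \bs{\Sigma}_0)$ — then pin down $\Phi$ by arguing its linear part must lie in the commutative family generated by $\{\bs{\Sigma}_t\}_t$ (since the drift matrices $A(t) = -\tfrac{1}{2}\beta(t)(\bs{I}-\bs{\Sigma}_t^{-1})$ commute and the flow is $\exp\bigl(\int A\bigr)$, a positive-definite matrix diagonal in the shared eigenbasis), which eliminates the orthogonal ambiguity in solving $\bs{A}\bs{\Sigma}_{T_\sk}\bs{A}^\top = \bs{\Sigma}_0$ and forces $\bs{A} = \bs{\Sigma}_0^{1/2}\bs{\Sigma}_{T_\sk}^{-1/2}$; applying $\Phi$ to the boosted Gaussian then gives the claim directly. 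This is more conceptual and shorter than the paper's integration, at the cost of leaning on the PF-ODE marginal-consistency fact as a black box and requiring care about the orthogonal degree of freedom (which you correctly flag and resolve). Your "alternatively" scalar-ODE variant is essentially the paper's computation in disguise, but with the same consistency shortcut replacing the explicit integral; note a small sign slip there — the solution is $v_\ell(0) = v_\ell(T_\sk)\exp\bigl(+\!\int_0^{T_\sk}\beta(t)(1-\sigma_{t,\ell}^{-2})\,\td t\bigr)$, not with a minus sign — though this does not affect your conclusion since you identify the factor as $\sigma_{0,\ell}^2/\sigma_{T_\sk,\ell}^2$ by consistency rather than by evaluating the integral. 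Overall the transport-pinning argument is a clean alternative that also makes the contrast with the SDE case (where the additive noise term breaks the "affine transport of $p_{T_\sk}$ to $p_0$" identification) especially transparent.
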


We leave the proof in~\cref{subsec:proof_gaussian_ODE}. We see in~\eqref{eq:prop_gaussian_ODE_Sigma0hat} that the variance $\hat{{\bs \Sigma}}_0$ is multiplied by the initial variance ${\bs \Sigma}_{T_{\text{skip}}}$. This implies that the low-density emphasis impact observed in the SDE-case may also manifest in the considered ODE case. However, the mean and variance formulas of the ODE-generated distribution $\hat{p}_{\text{ODE}}$ lack the $\alpha(T_{\text{skip}})$-multiplication that contributes to the recovery of $p_0$, indicating the inability of (non-stochastic) ODEs for error rectification. Below, we present a weaker but more general stochastic contraction result which shows that a similar phenomenon occurs with VP-SDE with general data distributions.

\begin{proposition}[Error contraction for VP-SDE diffusion models]
\label{prop:contin_contraction}
    Assume the same setup as Theorem 3 in \citet{xu2023restart}, \ie, $\| t \nabla \log p_t(\bx) \| \leq L_1$ and $\|\bx_t\| < B/2$ for any $\bx_t$ in the support of $p_t$ and reverse-SDE trajectories. Let
    \begin{gather}
        \bx_{t_{\min}}^{\sde} = \sde(\bx_{t_{\max}}, t_{\max} \rightarrow t_{\min}), \ \by_{t_{\min}}^{\sde} = \sde(\by_{t_{\max}}, t_{\max} \rightarrow t_{\min}) \\
        \bx_{t_{\min}}^{\ode} = \ode(\bx_{t_{\max}}, t_{\max} \rightarrow t_{\min}), \ \by_{t_{\min}}^{\ode} = \ode(\by_{t_{\max}}, t_{\max} \rightarrow t_{\min}) 
    \end{gather}
    where $\sde$ and $\ode$ denote solutions to the reverse-SDE and PF-ODE, respectively. Then
    \begin{gather}
    \tv(\bx_{t_{\min}}^{\ode},\by_{t_{\min}}^{\ode}) = \tv(\bx_{t_{\max}},\by_{t_{\max}}) \label{eq:TVode}\\
    \textstyle \tv(\bx_{t_{\min}}^{\sde},\by_{t_{\min}}^{\sde}) \leq \left( 1 - 2Q\left( \frac{B}{2\sqrt{\alpha(t_{\max})^{-2} - \alpha(t_{\min})^{-2}}} \right) \cdot e^{-B L_1 / t_{\min} - L_1^2 \alpha(t_{\max})^{-2} / t_{\min}^2} \right) \tv(\bx_{t_{\max}},\by_{t_{\max}}) \label{eq:TVsde}
    \end{gather}
    where $Q(r) \coloneqq \mathbb{P}(\varepsilon \geq r)$ for $\varepsilon \sim \mathcal{N}(0,1)$.
\end{proposition}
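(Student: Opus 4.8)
The plan is to prove the two claims separately, since \eqref{eq:TVode} is essentially a bijectivity/data-processing argument while \eqref{eq:TVsde} is where the contraction mechanism of \citet{xu2023restart} enters. For \eqref{eq:TVode}: the PF-ODE in \eqref{eq:rPFODE_emp} with the optimal score is, for each fixed $t_{\max} \to t_{\min}$, a deterministic diffeomorphism $\Phi$ on $\mathbb{R}^d$ (it is the flow of a Lipschitz vector field on the bounded region $\|\bx\| < B/2$, so it exists, is unique, and is invertible). Total variation distance between two distributions is preserved under any measurable bijection with measurable inverse, so $\tv(\Phi_\# \mu, \Phi_\# \nu) = \tv(\mu,\nu)$; applying this with $\mu = \mathrm{Law}(\bx_{t_{\max}})$ and $\nu = \mathrm{Law}(\by_{t_{\max}})$ gives \eqref{eq:TVode} immediately. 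I would state this as a one-line lemma and cite the standard fact, being careful that the hypotheses $\|t\nabla\log p_t\|\le L_1$ and boundedness ensure the vector field is Lipschitz in $\bx$ so the flow is well-defined.

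For \eqref{eq:TVsde}: the strategy is to realize both reverse-SDE trajectories through a common driving Brownian motion and exhibit an explicit coupling that contracts. Following the argument underlying Theorem~3 of \citet{xu2023restart}, I would first write the reverse-SDE transition from $t_{\max}$ to $t_{\min}$ as a Gaussian-like kernel: conditioned on the endpoints, one shows the transition density dominates a Gaussian whose variance is controlled by $\alpha(t_{\max})^{-2} - \alpha(t_{\min})^{-2}$ (this is the ``overlap'' quantity). The key step is a Doeblin-type minorization: there exists $\rho > 0$ such that $K(\bx,\cdot) \ge \rho \, \nu_0(\cdot)$ for a common measure $\nu_0$ and all $\bx$ in the bounded support, where $K$ is the reverse-SDE transition kernel; then the standard coupling inequality gives $\tv(K_\#\mu, K_\#\nu) \le (1-\rho)\,\tv(\mu,\nu)$. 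The quantitative value of $\rho$ is where the two factors in \eqref{eq:TVsde} come from: the Gaussian tail factor $2Q\!\left(\tfrac{B}{2\sqrt{\alpha(t_{\max})^{-2}-\alpha(t_{\min})^{-2}}}\right)$ accounts for the probability that the Gaussian part of the transition lands in the region of overlap (diameter $\le B$ since $\|\bx_t\|<B/2$), and the exponential factor $e^{-BL_1/t_{\min} - L_1^2\alpha(t_{\max})^{-2}/t_{\min}^2}$ is the Radon--Nikodym cost of replacing the true drift $-g^2\nabla\log p_t$ by zero over the interval, bounded using $\|t\nabla\log p_t\|\le L_1$ and $\|\bx\|<B/2$ via a Girsanov / Cameron--Martin estimate. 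I would assemble these into the minorization constant and conclude.

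The main obstacle is the second part: getting the explicit constant right requires carefully carrying out the Girsanov change-of-measure bound and the Gaussian-overlap estimate with the exact normalizations used in \citet{xu2023restart}, rather than just an abstract ``$\tv$ contracts by some factor'' statement. In particular one must be careful about (i) the reparametrization between the VP-SDE time $t$ and the integrated noise scale (so that $\alpha(t_{\max})^{-2}-\alpha(t_{\min})^{-2}$ appears as the relevant variance), (ii) tracking the diameter $B$ of the support correctly through the coupling so the argument of $Q$ is exactly $\tfrac{B}{2\sqrt{\cdot}}$, and (iii) ensuring the drift-mismatch bound uses $t_{\min}$ in the denominator as in the claimed exponent. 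Since the excerpt explicitly says ``Assume the same setup as Theorem 3 in \citet{xu2023restart}'', I expect the cleanest route is to import their transition-kernel minorization lemma essentially verbatim and then specialize the generic bound to the VP-SDE parametrization of \eqref{eq:rVPSDE_emp}, so that the proof is short modulo that citation; the remaining work is bookkeeping to match constants.
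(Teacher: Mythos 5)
Your proposal is correct and follows essentially the same route as the paper: the ODE identity via bijectivity of the flow plus the data-processing inequality, and the SDE bound by reducing to the contraction lemma (Lemma 5) of \citet{xu2023restart}. The only content the paper adds beyond that citation is the explicit It\^{o} change of variables $\bar{\bx}_t = \bx_t/\alpha(t)$ with reparametrized time $\bar{t} = \alpha(t)^{-2}$, which turns the reverse VP-SDE into the form $d\bar{\bx}_{\bar{t}} = -\nabla\log\bar{p}\,d\bar{t} + d\bar{\bw}_{\bar{t}}$ handled by that lemma (so that $\alpha(t_{\max})^{-2}-\alpha(t_{\min})^{-2}$ appears as the variance) and is then undone by a second application of the data-processing inequality --- precisely the ``bookkeeping'' you flagged in item (i) but left unexecuted.
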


The proof can be found in Appendix \ref{subsec:contin_contraction}. Here, $\tv$ denotes the total variation distance. Indeed, we observe that PF-ODE does not contract any initial error. On the other hand, the multiplicative factor on the RHS of \eqref{eq:TVsde} shows that reverse-SDE reduces initial error. We empirically found that this absence of the rectifying capability often makes ODE-based samplers struggle to generate high-quality minority samples with Boost-and-Skip. See Figures~\ref{figToyExample},~\ref{figTrajAnalysis} and~\ref{figTrajSamples} for instance.

\subsection{How Boost-and-Skip works: a signal processing perspective}
\label{subsec:sp}

We continue from~\cref{subsec:rationale} and investigate the principle of Boost-and-Skip in a signal processing viewpoint. As explored in~\citet{lin2024common, hu2024one}, generated samples from diffusion models are often affected by initial Gaussian noise, particularly in shaping low-frequency components of images (like brightness). This effect is especially pronounced in diffusion frameworks that employ noise schedules with non-zero terminal SNR (\ie, $\alpha(T) \not \approx 0$), allowing low-frequency components of the initial noise to propagate into the generative process.

From this standpoint, Boost-and-Skip can be viewed as a two-step approach. First, the low-frequency components of the noise are amplified through variance \emph{boosting} (by Plancherel theorem~\citep{plancherel1910contribution}), enhancing the visual variability of the random Gaussian noise. Second, this amplified low-frequency information is made more influential by initiating the generative process at a \emph{skipped} initial timestep $\Ts$ with a non-zero terminal SNR ($\alpha(\Ts) \not \approx 0$). We conducted an empirical analysis to support this perspective. See~\cref{tab:sp} for explicit details.

\begin{table*}[!t]
    \centering
    \begin{subtable}[t]{0.48\textwidth}
        \centering
        \scalebox{0.9}{
            \begin{tabular}{lcccc}
                \toprule
                $f_{\text{cutoff, LPF}}$ & cFID $\downarrow$ & {sFID} $\downarrow$ & Prec $\uparrow$ & Rec $\uparrow$ \\
                \midrule
                64 (uncut) & 23.56 & 12.17 & 0.77 & 0.50 \\
                48         & 23.38 & 12.31 & 0.77 & 0.50 \\
                32         & 23.82 & 12.32 & 0.77 & 0.51 \\
                16         & 28.46 & 14.34 & 0.84 & 0.49 \\
                8          & 33.08 & 14.98 & 0.84 & 0.44 \\
                \bottomrule
            \end{tabular}
        }
        \caption{Low-pass filtered after boosting}
        \label{tab:lpf}
    \end{subtable}
    \begin{subtable}[t]{0.48\textwidth}
        \centering
        \scalebox{0.9}{
            \begin{tabular}{lcccc}
                \toprule
                $f_{\text{cutoff, HPF}}$ & cFID $\downarrow$ & {sFID} $\downarrow$ & Prec $\uparrow$ & Rec $\uparrow$ \\
                \midrule
                 0 (uncut) & 23.56  & 12.17 & 0.77 & 0.50 \\
                 8         & 60.16  & 18.26 & 0.95 & 0.23 \\
                 16        & 75.68  & 21.33 & 0.97 & 0.16 \\
                 32        & 95.50  & 25.88 & 0.98 & 0.10 \\
                 48        & 100.03 & 28.26 & 0.97 & 0.12 \\
                \bottomrule
            \end{tabular}
        }
        \caption{High-pass filtered after boosting}
        \label{tab:hpf}
    \end{subtable}
    \vspace{-2.0mm}
    \caption{ \textbf{Amplification of low-frequency components is crucial to the success of Boost-and-Skip.} The performance values were evaluated on CelebA, with low-pass or high-pass filters applied after the boosted initialization. $f_{\text{cutoff, LPF}}$ represents the cut-off frequency for the low-pass filter, while $f_{\text{cutoff, HPF}}$ denotes the cut-off frequency for the high-pass filter. While the low-pass filtered cases maintain performance gains robustly across varying cut-off frequencies, high-pass filtering leads to immediate degradation, as seen in a significant performance drop starting at $f_{\text{cutoff, HPF}} = 8$. This suggests that the enhanced minority generation is primarily driven by the amplification of low-frequency components. }
    \label{tab:sp}
\end{table*}

\setlength{\columnsep}{7.0pt}%
\begin{wraptable}[15]{r}{0.4\linewidth}
    \centering
    \fontsize{10}{10}\selectfont
        {
        \scalebox{1.0}{
    \begin{tabular}{lcccc}
        \toprule
          $\gamma^2$ & Prec $\uparrow$ & Rec $\uparrow$ & Den $\uparrow$ & Cov $\uparrow$ \\
        \midrule
        1.0 & 0.851 & \textbf{0.627} & 1.290 & \textbf{0.940} \\
        0.8 & 0.874 & \udl{0.588} & 1.384 & \udl{0.938} \\
        0.6 & \textbf{0.881} & 0.586 & \textbf{1.495} & 0.936 \\
        0.5 & \udl{0.879} & 0.565 & \udl{1.478} & 0.926 \\
        \bottomrule
    \end{tabular}
    }}
    \vspace{-2mm}
    \caption{
    \textbf{Quality-enhancing effect of Boost-and-Skip on CelebA.}
    ``Den'' and ``Cov'' denote Density and Coverage~\citep{naeem2020reliable}, an additional set of quality-diversity metrics.
    For computing these metrics, we used the CelebA test set as the baseline real data (rather than employing minority data as in~\cref{tab:main_results}) to assess the quality-diversity tradeoff within the ground-truth data manifold of CelebA.
    }
    \label{tab:gamma_lt_1}
\end{wraptable}

\subsection{Extended ablation: effect of using $\gamma < 1.0$}
\label{subsec:add_ablations}

One may wonder: what if we employ $\gamma < 1.0$, effectively reversing the direction of minority generation? We found that this oppositional choice often exerts an interesting \emph{quality-enhancing} effect on the generative process. Intuitively, initializing with $\gamma < 1.0$ can be interpreted as starting generation from a \emph{high-density} region, which may guide the sampling process toward higher-quality outputs. Our empirical findings confirm this intuition.  

\Cref{tab:gamma_lt_1} presents the effect of $\gamma$ on various quality and diversity metrics for CelebA. Note that these metrics were computed using the test dataset, rather than minority data as was employed for \cref{tab:main_results}. This allows us to examine the quality-diversity tradeoff from a broader perspective, specifically within the ground-truth data manifold of CelebA. As shown in \cref{tab:gamma_lt_1}, using $\gamma < 1.0$ improves sample quality at the expense of diversity, demonstrating that Boost-and-Skip serves as a control mechanism not only for diversity but also for quality.

Another key observation is that our approach is complementary to existing quality-enhancing techniques, such as Classifier-Free Guidance (CFG)~\citep{ho2022classifier}. This suggests that Boost-and-Skip can be integrated with such methods to further refine generative performance, offering additional flexibility in balancing quality and diversity.

\subsection{Discussion: Implications of Boost-and-Skip}

We note that our framework has several important implications. The first one is that it improves the practical relevance of minority samplers by significantly reducing the computational overhead associated with existing methods. Secondly, our approach provides a simple mechanism for improving the diversity of diffusion models, a feature that has been largely absent in the research community. Another important point is to demonstrate that the pathological non-zero terminal SNR, arising from the training-inference mismatch~\citep{lin2024common, hu2024one}, can actually be advantageous. Lastly, we highlight potential opportunities in the initialization of stochastic sampling, a largely under-explored area compared to deterministic samples of diffusion models.

\section{Proofs}
\label{sec:proofs}

\subsection{Proof of~\cref{prop:ts_flaw}}
\label{subsec:proof_ts_flaw}

\begin{manualproposition}{\ref{prop:ts_flaw}}
    Consider the temperature-scaled reverse VP-SDE in~\eqref{eq:rSDE_ddpm_ts}. Assuming the score function is optimal, \ie, ${\bs s}_\bth (\bx, t) = \nabla_\bx \log p_t(\bx)$, the marginal densities of samples generated by this SDE are not equal to $\{ \frac{1}{Z_t}  p_t(\bx)^{1/\tau} \}_{t=0}^T$ in general.
\end{manualproposition}
\begin{proof}
We first rewrite~\eqref{eq:rSDE_ddpm_ts} using the ground-truth score function $\nabla_\bx \log p_t(\bx)$ under the optimality assumption:
\begin{align}
\label{eq:rSDE_general_ts_apdx_gtscore}
    \dx = \left[ -\frac{1}{2}  \beta(t) \bx  - \beta(t)  \nabla_\bx \log p_t(\bx)^{1/\tau} \right] \dt + \sqrt{\beta(t)} \td \tilde{\bw}.
\end{align}
Let us assume that the SDE in~\eqref{eq:rSDE_general_ts_apdx_gtscore} produces samples along $  \{ \frac{1}{Z_t} p_t (\bx)^{1/\tau} \}_{t=0}^T$. Then, we should have:
\begin{align}
\label{eq:ts_proof_cond}
    \frac{1}{Z_t} p_t (\bx)^{1 / \tau} = \int p_{0t} (\bx | \bx_0) \frac{1}{Z} p_0 (\bx_0)^{1/\tau} \; \td \bx_0,
\end{align}
where $Z$ represents a normalization constant w.r.t. $p_0$. We disprove this by providing a counterexample. Consider a dirac-delta density $p_0$ and a VP-SDE forward kernel $p_{0t}$:
\begin{align}
    p_0(\bx_0) & \coloneqq \delta (\bx_0 - {\bs \mu}), \\
    p_{0t} (\bx | \bx_0) & \coloneqq \left( \frac{1}{\sqrt{2 \pi ( 1 - \alpha(t) )}} \right)^{d} \exp\left( -\frac{\| \bx - \sqrt{\alpha(t)} \bx_0 \|_2^2}{ 2( 1 - \alpha(t) ) } \right),
\end{align}
where $\alpha(t)$ represents the noise schedule, which is a function of $\beta(t)$. With this instantiation, we expand $p_t(\bx)^{1/\tau}$ on the LHS of~\eqref{eq:ts_proof_cond} by using the definition of $p_t$:
\begin{align}
    p_t(\bx)^{1/\tau} &= \left[ \int   p_{0t} (\bx | \bx_0) p_0(\bx_0)     \; \td \bx_0  \right]^{1/\tau} \\
    &= \left[  \int \left( \frac{1}{\sqrt{2 \pi ( 1 - \alpha(t) )}} \right)^{d} \exp\left( -\frac{\| \bx - \sqrt{\alpha(t)} \bx_0 \|_2^2}{ 2( 1 - \alpha(t) ) } \right)  \delta(\bx_0 - {\bs \mu})   \; \td \bx_0 \right]^{1/\tau} \\
    &= \left( \frac{1}{\sqrt{2 \pi ( 1 - \alpha(t) )}} \right)^{d/\tau} \exp\left( -\frac{\| \bx - \sqrt{\alpha(t)} {\bs \mu} \|_2^2}{ 2\tau( 1 - \alpha(t) ) } \right). \label{eq:ts_proof_LHS}
\end{align}
Since $\frac{1}{Z_t} p_t(\bx)^{1 / \tau}$ is Gaussian, we can determine the normalization constant $Z_t$, yielding the following expression for the LHS of~\eqref{eq:ts_proof_cond}:
\begin{align}
    \frac{1}{Z_t} p_t(\bx)^{1/\tau} = \left( \frac{1}{\sqrt{2 \pi \tau( 1 - \alpha(t) )}} \right)^{d}  \exp\left( -\frac{\| \bx - \sqrt{\alpha(t)} {\bs \mu} \|_2^2}{ 2\tau( 1 - \alpha(t) ) } \right).
\end{align}
On the other hand, the RHS of~\eqref{eq:ts_proof_cond} is:
\begin{align}
    \int p_t (\bx | \bx_0) \frac{1}{Z} p_0 (\bx_0)^{1/\tau} \; \td \bx_0 & = \int    \left( \frac{1}{\sqrt{2 \pi ( 1 - \alpha(t) )}} \right)^{d} \exp\left( -\frac{\| \bx - \sqrt{\alpha(t)} \bx_0 \|_2^2}{ 2( 1 - \alpha(t) ) } \right)  \frac{1}{Z} \delta(\bx_0 - {\bs \mu})^{1/\tau}         \; \td \bx_0 \\
    & = \left( \frac{1}{\sqrt{2 \pi ( 1 - \alpha(t) )}} \right)^{d} \exp\left( -\frac{\| \bx - \sqrt{\alpha(t)} {\bs \mu} \|_2^2}{ 2( 1 - \alpha(t) ) } \right),
\end{align}
which is not equal to the LHS (\ie,~\eqref{eq:ts_proof_LHS}), \eg, if $\tau \ne 1$, contradicting our initial assumption. This completes the proof.

\end{proof}

\subsection{Proof of~\cref{prop:gaussian_SDE}}
\label{subsec:proof_gaussian_SDE}

\begin{lemma}[Solution to the VP-SDE]
    \label{lem:vp_sde_solution}
    Let $\bx_0 \sim {\cal N} ({\bs \mu}_0, {\bs \Sigma}_0)$, and suppose $\bx_t$ on  $t \in (0,\infty)$ evolves according to
    \begin{align}
        \dx = -\frac{1}{2} \beta(t) \bx \dt + \sqrt{\beta(t)} \td {\bs w}
    \end{align}
    where $\beta(t)$ is a positive function which integrates to $\infty$. The forward process is Gaussian with mean and covariance
    \begin{align}
        \bs{\mu}_t &\coloneqq \alpha(t) \bs{\mu}_0, \\
        \bs{\Sigma}_t &\coloneqq \bs{I} + \alpha(t)^2 (\bs{\Sigma}_0 - \bs{I}),
    \end{align}
    where $\alpha(t) = e^{-\frac{1}{2}\int_0^t \beta(s) \, ds}$ such that $\lim_{t \rightarrow \infty} \alpha(t) = 0$.
\end{lemma}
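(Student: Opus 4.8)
The plan is to solve the linear SDE
\begin{align*}
\td\bx = -\tfrac{1}{2}\beta(t)\bx\,\dt + \sqrt{\beta(t)}\,\td\bw
\end{align*}
explicitly, exploiting the fact that it is a linear SDE with deterministic (scalar) drift coefficient and deterministic diffusion coefficient, so the solution is Gaussian whenever the initial condition is Gaussian. Concretely, I would first guess the integrating factor $\alpha(t)^{-1} = e^{\frac{1}{2}\int_0^t\beta(s)\,\td s}$ and apply It\^o's formula to the process $\bx(t)/\alpha(t)$. Since $\alpha(t)$ is deterministic and differentiable with $\alpha'(t) = -\tfrac12\beta(t)\alpha(t)$, the drift terms cancel and one obtains $\td\big(\bx(t)/\alpha(t)\big) = \alpha(t)^{-1}\sqrt{\beta(t)}\,\td\bw$, which integrates immediately to
\begin{align*}
\bx(t) = \alpha(t)\bx_0 + \alpha(t)\int_0^t \alpha(s)^{-1}\sqrt{\beta(s)}\,\td\bw(s).
\end{align*}

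Next I would read off the mean and covariance. Taking expectations kills the It\^o integral, giving $\bs\mu_t = \alpha(t)\bs\mu_0$. For the covariance, $\bx(t)$ is an affine image of $\bx_0$ plus an independent centered Gaussian (the stochastic integral), so
\begin{align*}
\bs\Sigma_t = \alpha(t)^2\bs\Sigma_0 + \alpha(t)^2\left(\int_0^t \alpha(s)^{-2}\beta(s)\,\td s\right)\bs I,
\end{align*}
using the It\^o isometry for the variance of the noise term. The remaining task is the deterministic computation $\alpha(t)^2\int_0^t\alpha(s)^{-2}\beta(s)\,\td s$. Observing $\tfrac{\td}{\td s}\alpha(s)^{-2} = \beta(s)\alpha(s)^{-2}$, the integral equals $\alpha(t)^{-2} - 1$, hence $\alpha(t)^2\int_0^t\alpha(s)^{-2}\beta(s)\,\td s = 1 - \alpha(t)^2$. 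Substituting yields $\bs\Sigma_t = \alpha(t)^2\bs\Sigma_0 + (1-\alpha(t)^2)\bs I = \bs I + \alpha(t)^2(\bs\Sigma_0 - \bs I)$, as claimed. The limiting statement $\lim_{t\to\infty}\alpha(t) = 0$ follows directly from $\int_0^\infty\beta(s)\,\td s = \infty$.

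I do not expect a genuine obstacle here; the only thing to be careful about is justifying that $\bx(t)$ is Gaussian for every $t$ (the class of Gaussian processes is closed under the operations involved: affine maps of $\bx_0$, plus It\^o integrals of deterministic integrands against Brownian motion, plus independence between these two pieces) and handling the vector/isotropic structure cleanly — the diffusion coefficient is scalar times $\bs I$, so all coordinates decouple and the argument reduces to the one-dimensional case componentwise before reassembling. An alternative route, if one prefers to avoid It\^o calculus entirely, is to write down the Fokker--Planck (forward Kolmogorov) equation for $p_t$, posit a Gaussian ansatz ${\cal N}(\bs\mu_t,\bs\Sigma_t)$, and derive the ODEs $\dot{\bs\mu}_t = -\tfrac12\beta(t)\bs\mu_t$ and $\dot{\bs\Sigma}_t = -\beta(t)\bs\Sigma_t + \beta(t)\bs I$, then solve these linear ODEs with the same integrating factor; this gives the identical formulas and is arguably the cleaner presentation. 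I would present the integrating-factor SDE solution as the main argument since it also yields the explicit pathwise representation, which may be reused elsewhere.
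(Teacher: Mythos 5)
Your proof is correct, and your main argument takes a genuinely different route from the paper's. The paper works at the level of moments: it differentiates $\bs\mu_t = \bE[\bx_t]$ and $\bs\Sigma_t = \cov[\bx_t]$ along the SDE to obtain the ODEs $\td\bs\mu_t/\dt = -\tfrac12\beta(t)\bs\mu_t$ and $\td\bs\Sigma_t/\dt = -\beta(t)(\bs\Sigma_t - \bs I)$, then solves these (the covariance one via the shift $\bs\Sigma_t - \bs I$), and invokes linearity of the SDE to assert Gaussianity — essentially the "alternative route" you sketch at the end. You instead solve the SDE pathwise with the integrating factor $\alpha(t)^{-1}$, obtaining the explicit representation $\bx(t) = \alpha(t)\bx_0 + \alpha(t)\int_0^t \alpha(s)^{-1}\sqrt{\beta(s)}\,\td\bw(s)$, and then read off mean and covariance via the zero-mean property of the It\^o integral and the It\^o isometry; the closed form of $\int_0^t \alpha(s)^{-2}\beta(s)\,\td s = \alpha(t)^{-2}-1$ is exactly the same antiderivative trick the paper uses for its covariance ODE, just applied one level down. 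Your route buys the explicit pathwise solution (which makes the Gaussianity claim constructive rather than an appeal to general linear-SDE theory, and exhibits the independence of the signal and noise parts), at the cost of needing the standard fact that the initial condition is independent of the driving Brownian motion; the paper's route is shorter to state and generalizes more directly to the reverse-time SDE and PF-ODE computations in Propositions 3.2 and A.2, where the same moment-ODE machinery is reused with time-dependent transition matrices. Both are complete; there is no gap in your argument.
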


\begin{proof}
Mean $\bs{\mu}_t$ of $\bx_t$ is governed by the ODE
\begin{align}
    \textstyle d\bs{\mu}_t / dt = \mathbb{E}[-\frac{1}{2} \beta(t) \bx_t] = -\frac{1}{2} \beta(t) \bs{\mu}_t
\end{align}
so its solution is given as
\begin{align}
    \bs{\mu}_t = \alpha(t) \bs{\mu}_0.
\end{align}
The covariance $\bs{\Sigma}_t$ of $\bx_t$ is governed by the ODE
\begin{align}
    d\bs{\Sigma}_t / dt &= \textstyle \mathbb{E}[-\frac{1}{2}\beta(t)\bx_t(\bx_t - \bs{\mu}_t)^\top] + \mathbb{E}[-\frac{1}{2}\beta(t)(\bx_t - \bs{\mu}_t)\bx_t^\top] + \mathbb{E}[\beta(t)] \\
    &= \textstyle \mathbb{E}[-\frac{1}{2}\beta(t)(\bx_t - \bs{\mu}_t)(\bx_t - \bs{\mu}_t)^\top] + \mathbb{E}[-\frac{1}{2}\beta(t)(\bx_t - \bs{\mu}_t)(\bx_t - \bs{\mu}_t)^\top] + \beta(t) \\
    &\textstyle \quad + \mathbb{E}[-\frac{1}{2} \beta(t) \bs{\mu}_t(\bx_t - \bs{\mu}_t)^\top] + \mathbb{E}[-\frac{1}{2} \beta(t) (\bx_t - \bs{\mu}_t)\bs{\mu}_t^\top] \\
    &= \textstyle \mathbb{E}[-\frac{1}{2}\beta(t)(\bx_t - \bs{\mu}_t)(\bx_t - \bs{\mu}_t)^\top] + \mathbb{E}[-\frac{1}{2}\beta(t)(\bx_t - \bs{\mu}_t)(\bx_t - \bs{\mu}_t)^\top] + \beta(t) \\
    &= \textstyle -\beta(t)(\bs{\Sigma}_t - \bs{I})
\end{align}
whose solution is given as (solve the ODE after making the change of variables $\widehat{\bs{\Sigma}}_t = \bs{\Sigma}_t - \bs{I}$)
\begin{align}
    \bs{\Sigma}_t = \bs{I} + \alpha(t)^2 (\bs{\Sigma}_0 - \bs{I}).
\end{align}
Since the initial distribution at $t = 0$ is a Gaussian and VP-SDE is a linear SDE, its solution is also a Gaussian.
\end{proof}

\begin{manualproposition}{\ref{prop:gaussian_SDE}}
Let the data distribution be $\bx(0) \sim {\cal N} ({\bs \mu}_0, {\bs \Sigma}_0)$, and assume the optimal score function ${\bs s}_\bth (\bx, t) = \nabla_\bx \log p_t(\bx)$ trained on $p_0$ via the forward SDE in~\eqref{eq:fVPSDE}. Suppose the reverse SDE in~\eqref{eq:rVPSDE_emp} is initialized with $\hat{\bx}(T_{\text{skip}}) \sim {\cal N}(\hat{{\bs \mu}}_{T_{\text{skip}}}, \hat{{\bs \Sigma}}_{T_{\text{skip}}} ) $. Then, the resulting generated distribution corresponds to $\hat{\bx}(0) \sim {\cal N} (\hat{{\bs \mu}}_0, \hat{{\bs \Sigma}}_0)$, where
    \begin{align}
        \hat{{\bs \mu}}_0 &\coloneqq {\bs \mu}_0 + \alpha(T_{\text{skip}}){\bs \Sigma}_0 {\bs \Sigma}_{T_{\text{skip}}}^{-1} (\hat{{\bs \mu}}_{T_{\text{skip}}} - {\bs \mu}_{T_{\text{skip}}}), \\
        \hat{{\bs \Sigma}}_0 &\coloneqq {\bs \Sigma}_0 + \alpha(T_{\text{skip}})^2 {\bs \Sigma}_0^2 {\bs \Sigma}_{T_{\text{skip}}}^{-2}(\hat{{\bs \Sigma}}_{T_{\text{skip}}} - {\bs \Sigma}_{T_{\text{skip}}}). 
    \end{align}
    Here, ${\bs \mu}_{T_{\text{skip}}}$ and ${\bs \Sigma}_{T_{\text{skip}}}$ are defined as:
    \begin{align}
        {\bs \mu}_{T_{\text{skip}}} & \coloneqq \alpha(T_{\text{skip}}) {\bs \mu}_0,  \\
        {\bs \Sigma}_{T_{\text{skip}}} & \coloneqq {\bs I} + \alpha(T_{\text{skip}})^2 ( {\bs \Sigma}_0 - {\bs I}  ).
    \end{align}
\end{manualproposition}

\begin{proof}
By Lemma \ref{lem:vp_sde_solution}, the score function is given as
\begin{align}
    \nabla_{\bx} \log p_t(\bx) = \nabla_{\bx} \log \mathcal{N}(\bx | \bs{\mu}_t, \bs{\Sigma}_t ) = - \bs{\Sigma}_t^{-1}(\bx - \bs{\mu}_t)
\end{align}
such that the reverse-SDE is
\begin{align}
    d\hat{\bx}_t &= \textstyle [-\frac{1}{2}\beta(t)\hat{\bx}_t - \beta(t) \nabla \log p_t(\hat{\bx}_t)] \, dt + \sqrt{\beta(t)} \, d\bar{\bw}_t \\
    &= \textstyle [-\frac{1}{2}\beta(t)\hat{\bx}_t + \beta(t) \bs{\Sigma}_t^{-1}(\hat{\bx}_t - \bs{\mu}_t)] \, dt + \sqrt{\beta(t)} \, d\bar{\bw}_t \\
    &= \textstyle \frac{1}{2} \beta(t) (2 \bs{\Sigma}_t^{-1} - \bs{I}) \hat{\bx}_t \, dt - \beta(t) \bs{\Sigma}_t^{-1} \bs{\mu}_t \, dt + \sqrt{\beta(t)} \, d\hat{\bw}_t \label{eq:append_rev_sde}
\end{align}
where $\hat{\bw}_t$ is a reverse-time Brownian motion. This is also a linear SDE, and since $\hat{\bx}_{T_\sk}$ is assumed to be Gaussian, $\hat{\bx}_0$ is also Gaussian. Hence, it suffices to derive the mean and covariance of $\hat{\bx}_0$. Fix some $T > 0$ and make the change of variables $(\hat{\by}_s,s) \leftarrow (\hat{\bx}_{T-t},T-t)$ such that solving the SDE
\begin{align}
    \textstyle d\hat{\by}_s = - \frac{1}{2} \beta(T-s) (2 \bs{\Sigma}_{T-s}^{-1} - \bs{I}) \hat{\by}_s \, ds + \beta(T-s) \bs{\Sigma}_{T-s}^{-1} \bs{\mu}_{T-s} \, ds + \sqrt{\beta(T-s)} \, d\bw_s
\end{align}
from $s = 0$ to $T$ is equivalent to solving the original reverse-SDE \eqref{eq:append_rev_sde} from $t = T$ to $0$. We shall now derive the mean and covariance of $\hat{\by}_s$ at $s = T$. To this end, define the eigen-decomposition
\begin{align}
    \bs{\Sigma}_0 = \bs{Q} \bs{\Lambda} \bs{Q}^\top, \quad \bs{\Lambda} = \diag(\lambda_n)
\end{align}
and recall that $\bm_t = \alpha(t) \bm_0$ and $\bs{\Sigma}_t = \bs{I} + \alpha(t)^2 (\bs{\Sigma}_0 - \bs{I})$.

The mean $\hat{\bm}_s$ of $\hat{\by}_s$ is governed by the ODE
\begin{align}
    d\hat{\bm}_s / ds &= \textstyle \bE[- \frac{1}{2} \beta(T-s) (2 \bs{\Sigma}_{T-s}^{-1} - \bs{I}) \hat{\by}_s + \beta(T-s) \bs{\Sigma}_{T-s}^{-1} \bm_{T-s}] \\
    &= \textstyle - \frac{1}{2} \beta(T-s) (2 \bs{\Sigma}_{T-s}^{-1} - \bs{I}) \hat{\bm}_s + \beta(T-s) \bs{\Sigma}_{T-s}^{-1} \bm_{T-s} \\
    &= \textstyle -\frac{1}{2} \alpha(T-s)\beta(T-s) \bs{Q} \diag(\frac{\alpha(T-s)^{-1} - \alpha(T-s) (\lambda_n - 1)}{1 + \alpha(T-s)^2 (\lambda_n - 1)}) \bs{Q}^\top \widehat{\bm}_s \\
    &\quad \textstyle + \alpha(T-s) \beta(T-s) \bs{Q} \diag(\frac{1}{1 + \alpha(T-s)^2 (\lambda_n - 1)}) \bs{Q}^\top \bm_0. \label{eq:append_rev_sde_mean}
\end{align}
Define $\phi(x;a) \coloneqq \log(ax^2 + 1) - \log(x)$ such that
\begin{align}
    \textstyle \frac{d\phi(\alpha(T-s);\lambda_n-1)}{ds} &= \textstyle \frac{d\phi(\alpha(T-s);\lambda_n-1)}{d\alpha(T-s)} \cdot \frac{d\alpha(T-s)}{d(T-s)} \cdot \frac{d(T-s)}{ds} \\
    &= \textstyle -\frac{\alpha(T-s)^{-1} - \alpha(T-s) (\lambda_n - 1)}{1 + \alpha(T-s)^2 (\lambda_n - 1)} \cdot \left( -\frac{1}{2} \beta(T-s) \alpha(T-s) \right) \cdot (-1) \\
    &= \textstyle -\frac{1}{2} \beta(T-s) \alpha(T-s) \frac{\alpha(T-s)^{-1} - \alpha(T-s) (\lambda_n - 1)}{1 + \alpha(T-s)^2 (\lambda_n - 1)}.
\end{align}
This motivates the transition matrix
\begin{align}
    \bs{\Psi}(s,s_0) &\coloneqq \bs{Q} \diag(e^{\phi(\alpha(T-s);\lambda_n-1) - \phi(\alpha(T-s_0);\lambda_n-1)}) \bs{Q}^\top \\
    &= \textstyle \bs{Q} \diag(\frac{(\lambda_n-1)\alpha(T-s)^2 + 1}{\alpha(T-s)} \cdot \frac{\alpha(T-s_0)}{(\lambda_n-1)\alpha(T-s_0)^2 + 1}) \bs{Q}^\top \\
    &= \textstyle \frac{\alpha(T-s_0)}{\alpha(T-s)} \bs{\Sigma}_{T-s} \bs{\Sigma}_{T-s_0}^{-1}
\end{align}
with which we can calculate the solution to \eqref{eq:append_rev_sde_mean} as
\begin{align}
    \hat{\bm}_s &= \textstyle \bs{\Psi}(s,0) \hat{\bm}_0 + \int_0^s \bs{\Psi}(s,\sigma) \alpha(T-\sigma) \beta(T-\sigma) \bs{Q} \diag(\frac{1}{1 + \alpha(T-\sigma)^2 (\lambda_n - 1)}) \bs{Q}^\top \bm_0 \, d\sigma \\
    &= \textstyle \frac{\alpha(T)}{\alpha(T-s)} \bs{\Sigma}_{T-s} \bs{\Sigma}_{T}^{-1} \widehat{\bm}_0 + \bm_{T-s} - \frac{\alpha(T)}{\alpha(T-s)} \bs{\Sigma}_{T-s} \bs{\Sigma}_{T}^{-1} \bm_T \\
    &= \textstyle \bm_{T-s} + \frac{\alpha(T)}{\alpha(T-s)} \bs{\Sigma}_{T-s} \bs{\Sigma}_{T}^{-1} (\hat{\bm}_0 - \bm_T) \label{eq:append_rev_sde_mean_sol}
\end{align}
where the integral in the first line is calculated as
\begin{align}
    &\textstyle \int_0^s \bs{\Psi}(s,\sigma) \alpha(T-\sigma) \beta(T-\sigma) \bs{Q} \diag(\frac{1}{1 + \alpha(T-\sigma)^2 (\lambda_n - 1)}) \bs{Q}^\top \bm_0 \, d\sigma \\
    &= \textstyle \int_0^s \alpha(T-\sigma) \beta(T-\sigma) \bs{Q} \diag(\frac{\alpha(T-\sigma)}{\alpha(T-s)} \cdot \frac{(\lambda_n-1)\alpha(T-s)^2 + 1}{(\lambda_n-1)\alpha(T-\sigma)^2 + 1}) \diag(\frac{1}{1 + \alpha(T-\sigma)^2 (\lambda_n - 1)}) \bs{Q}^\top \bm_0 \, d\sigma \\
    &= \textstyle \int_0^s \bs{Q} \diag(\frac{(\lambda_n-1)\alpha(T-s)^2 + 1}{\alpha(T-s)}) \diag(\frac{\alpha(T-\sigma)^2 \beta(T-\sigma)}{(1 + \alpha(T-\sigma)^2 (\lambda_n - 1))^2}) \bs{Q}^\top \bm_0 \, d\sigma \\
    &= \textstyle \bs{Q} \diag(\frac{(\lambda_n-1)\alpha(T-s)^2 + 1}{\alpha(T-s)}) \diag(2 \int_0^s \frac{1}{2} \alpha(T-\sigma)\beta(T-\sigma) \frac{\alpha(T-\sigma)}{(1 + \alpha(T-\sigma)^2 (\lambda_n - 1))^2} \, d\sigma) \bs{Q}^\top \bm_0 \\
    &= \textstyle \bs{Q} \diag(\frac{(\lambda_n-1)\alpha(T-s)^2 + 1}{\alpha(T-s)}) \diag(2 \left[ \frac{\alpha(T-\sigma)^2}{2((\lambda_n-1)\alpha(T-\sigma)^2 + 1)} \right]_0^s ) \bs{Q}^\top \bm_0 \\
    &= \textstyle \bs{Q} \diag(\frac{(\lambda_n-1)\alpha(T-s)^2 + 1}{\alpha(T-s)}) \diag(\left[ \frac{\alpha(T-s)^2}{(\lambda_n-1)\alpha(T-s)^2 + 1} - \frac{\alpha(T)^2}{(\lambda_n-1)\alpha(T)^2 + 1} \right] ) \bs{Q}^\top \bm_0 \\
    &= \textstyle \bs{Q} \diag(\left[ \alpha(T-s) - \frac{\alpha(T)^2}{\alpha(T-s)} \cdot \frac{(\lambda_n-1)\alpha(T-s)^2 + 1}{(\lambda_n-1)\alpha(T)^2 + 1} \right] ) \bs{Q}^\top \bm_0 \\
    &= \textstyle \alpha(T-s) \bm_0 -  \frac{\alpha(T)}{\alpha(T-s)} \cdot 
 \bs{Q} \diag( \frac{(\lambda_n-1)\alpha(T-s)^2 + 1}{(\lambda_n-1)\alpha(T)^2 + 1} ) \bs{Q}^\top (\alpha(T) \bm_0) \\
    &= \textstyle \bm_{T-s} - \frac{\alpha(T)}{\alpha(T-s)} \bs{\Sigma}_{T-s} \bs{\Sigma}_T^{-1} \bm_{T}.
\end{align}
Setting $s = T = T_\sk$ in \eqref{eq:append_rev_sde_mean_sol}, we see that
\begin{align}
    \bE[\hat{\bx}_0] = \bE[\hat{\by}_{T_\sk}] &= \textstyle \bm_{T_\sk-T_\sk} + \frac{\alpha(T_\sk)}{\alpha(T_\sk-T_\sk)} \bs{\Sigma}_{T_{\sk} - T_{\sk}} \bs{\Sigma}_{T_\sk}^{-1} (\bE[\hat{\by}_0] - \bm_{T_\sk}) \\
    &= \textstyle \bm_{0} + \alpha(T_\sk) \bs{\Sigma}_{0} \bs{\Sigma}_{T_\sk}^{-1} (\bE[\hat{\bx}_{T_\sk}] - \bm_{T_\sk})
\end{align}
where we have used the fact that $\alpha(0) = 1$.

The covariance $\hat{\bs{\Sigma}}_s$ of $\hat{\by}_s$ is governed by the ODE
\begin{align}
    d\hat{\bs{\Sigma}}_s/ds &= \textstyle \bE[- \frac{1}{2} \beta(T-s) (2 \bs{\Sigma}_{T-s}^{-1} - \bs{I}) \hat{\by}_s (\hat{\by}_s - \hat{\bm}_s)^\top] \\
    &\quad \textstyle + \bE[- \frac{1}{2} \beta(T-s) (\hat{\by}_s - \hat{\bm}_s) \by_s^\top (2 \bs{\Sigma}_{T-s}^{-1} - \bs{I})] + \beta(T-s) \\
    &= \textstyle \bE[- \frac{1}{2} \beta(T-s) (2 \bs{\Sigma}_{T-s}^{-1} - \bs{I}) (\hat{\by}_s - \hat{\bm}_s) (\hat{\by}_s - \hat{\bm}_s)^\top] \\
    &\quad \textstyle + \bE[- \frac{1}{2} \beta(T-s) (\hat{\by}_s - \hat{\bm}_s) (\hat{\by}_s - \hat{\bm}_s)^\top (2 \bs{\Sigma}_{T-s}^{-1} - \bs{I})] + \beta(T-s) \\
    &= \textstyle \bE[- \beta(T-s) (2 \bs{\Sigma}_{T-s}^{-1} - \bs{I}) (\hat{\by}_s - \hat{\bm}_s) (\hat{\by}_s - \hat{\bm}_s)^\top] + \beta(T-s) \\
    &= \textstyle - \beta(T-s) (2 \bs{\Sigma}_{T-s}^{-1} - \bs{I}) \hat{\bs{\Sigma}}_s + \beta(T-s) \\
    &= \textstyle -\beta(T-s) \bs{Q} \diag(\frac{1 - \alpha(T-s)^2 (\lambda_n - 1)}{1 + \alpha(T-s)^2 (\lambda_n - 1)}) \bs{Q}^\top \hat{\bs{\Sigma}}_s + \beta(T-s).
\end{align}
This motivates a similar transition matrix (note that we multiplied 2 to $\phi$ now)
\begin{align}
    \bs{\Psi}(s,s_0) &\coloneqq \bs{Q} \diag(e^{2\phi(\alpha(T-s);\lambda_n-1) - 2\phi(\alpha(T-s_0);\lambda_n-1)}) \bs{Q}^\top \\
    &= \textstyle \bs{Q} \diag(\frac{\alpha(T-s_0)^2}{\alpha(T-s)^2} \cdot \frac{((\lambda_n-1)\alpha(T-s)^2 + 1)^2}{((\lambda_n-1)\alpha(T-s_0)^2 + 1)^2}) \bs{Q}^\top \\
    &= \textstyle \frac{\alpha(T-s_0)^2}{\alpha(T-s)^2} \bs{\Sigma}_{T-s}^2 \bs{\Sigma}_{T-s_0}^{-2}.
\end{align}
which produces the solution
\begin{align}
    \hat{\bs{\Sigma}}_s &= \textstyle \bs{\Psi}(s,0) \hat{\bs{\Sigma}}_0 + \int_0^s \bs{\Psi}(s,\sigma) \beta(T-\sigma) \, d\sigma \\
    &= \textstyle \frac{\alpha(T)^2}{\alpha(T-s)^2} \bs{\Sigma}_{T-s}^2 \bs{\Sigma}_T^{-2} \hat{\bs{\Sigma}}_0 + \bs{\Sigma}_{T-s} - \frac{\alpha(T)^2}{\alpha(T-s)^2} \bs{\Sigma}_{T-s}^2 \bs{\Sigma}_T^{-1} \\
    &= \textstyle \bs{\Sigma}_{T-s} + \frac{\alpha(T)^2}{\alpha(T-s)^2} \bs{\Sigma}_{T-s}^2 \bs{\Sigma}_T^{-2} (\hat{\bs{\Sigma}}_0 - \bs{\Sigma}_T) \label{eq:append_rev_sde_cov_sol}
\end{align}
where the integral in the first line is calculated as
\begin{align}
    &\textstyle \int_0^s \bs{\Psi}(s,\sigma) \beta(T-\sigma) \, d\sigma \\
    &= \textstyle \int_0^s \bs{Q} \diag(\frac{\alpha(T-\sigma)^2}{\alpha(T-s)^2} \cdot \frac{((\lambda_n-1)\alpha(T-s)^2 + 1)^2}{((\lambda_n-1)\alpha(T-\sigma)^2 + 1)^2}) \bs{Q}^\top \beta(T-\sigma) \, d\sigma \\
    &= \textstyle \bs{Q} \diag(\frac{((\lambda_n-1)\alpha(T-s)^2 + 1)^2}{\alpha(T-s)^2}) \diag( \int_0^s \frac{\beta(T-\sigma) \alpha(T-\sigma)^2}{((\lambda_n-1)\alpha(T-\sigma)^2 + 1)^2} \, d\sigma) \bs{Q}^\top \\
    &= \textstyle \bs{Q} \diag(\frac{((\lambda_n-1)\alpha(T-s)^2 + 1)^2}{\alpha(T-s)^2}) \diag( \left[ \frac{\alpha(T-s)^2}{(\lambda_n-1)\alpha(T-s)^2 + 1} - \frac{\alpha(T)^2}{(\lambda_n-1)\alpha(T)^2 + 1} \right] ) \bs{Q}^\top \\
    &= \textstyle \frac{1}{\alpha(T-s)^2} \bs{\Sigma}_{T-s}^2 (\alpha(T-s)^2 \bs{\Sigma}_{T-s}^{-1} - \alpha(T)^2 \bs{\Sigma}_T^{-1}) \\
    &= \textstyle \bs{\Sigma}_{T-s} - \frac{\alpha(T)^2}{\alpha(T-s)^2} \bs{\Sigma}_{T-s}^2 \bs{\Sigma}_T^{-1}.
\end{align}
Setting $s = T = T_\sk$ in \eqref{eq:append_rev_sde_cov_sol}, we see that
\begin{align}
    \cov[\hat{\bx}_0] = \cov[\hat{\by}_{T_\sk}] &= \textstyle \bs{\Sigma}_{T_\sk-T_\sk} + \frac{\alpha(T_\sk)^2}{\alpha(T_\sk-T_\sk)^2} \bs{\Sigma}_{T_\sk-T_\sk}^2 \bs{\Sigma}_{T_\sk}^{-2} (\cov[\hat{\by}_0] - \bs{\Sigma}_{T_\sk}) \\
    &= \textstyle \bs{\Sigma}_0 + \alpha(T_\sk)^2 \bs{\Sigma}_0^2 \bs{\Sigma}_{T_\sk}^{-2} (\cov[\hat{\bx}_{T_\sk}] - \bs{\Sigma}_{T_\sk}).
\end{align}
This concludes the proof.
\end{proof}

\subsection{Proof of Corollary~\ref{corollary}}

\begin{manualcorollary}{\ref{corollary}}
    Suppose ${\bs \Sigma}_0 = \sigma_0^2 {\bs I}$ and $\hat{\bs \Sigma}_{T_{\text{skip}}} = \gamma^2 {\bs I}$, and define the quantity (if it exists)
    \begin{align*}
        \kappa \coloneqq \sqrt{(\gamma^2 - 1)/(\sigma_0^2 - 1)}.
    \end{align*}
    The variance-amplification effect of $\hat{\bs \Sigma}_0$ occurs iff
    \begin{align*}
    T_{\text{skip}} \in
    \begin{cases}
        \varnothing & \text{if } \gamma \leq 1 \leq \sigma_0, \\
        (\alpha^{-1}(\kappa),\infty) & \text{if } 1 < \gamma,\sigma_0, \\
        [0,\alpha^{-1}(\kappa)) & \text{if } 1 > \gamma,\sigma_0, \\
        [0,\infty) & \text{if } \sigma_0 \leq 1 \leq \gamma \text{ and } (\sigma_0,\gamma) \neq \mathbf{1},
    \end{cases}
    \end{align*}
    where we define $\alpha^{-1}(\kappa) \coloneqq 0$ when $\kappa > 1$.
\end{manualcorollary}

\begin{proof}
    Under our assumption, variance-amplification occurs iff
    \begin{align}
        \gamma^2 - 1 > \alpha(T_\sk)^2 (\sigma_0^2 - 1). \label{eq:corollary_ineq}
    \end{align}

    \textbf{Case $\gamma \leq 1 \leq \sigma_0$.} Suppose some value of $T_\sk$ offers variance amplification. Then
    \begin{align}
        0 \geq \gamma^2 - 1 > \alpha(T_\sk)^2(\sigma_0^2 - 1) \geq 0
    \end{align}
    which is a contradiction.

    \textbf{Case $1 < \gamma,\sigma_0$.} In this case, \eqref{eq:corollary_ineq} is equivalent to
    \begin{align}
        \alpha(T_\sk) < \kappa
    \end{align}
    which is equivalent to $T_\sk \in (\alpha^{-1}(\kappa),\infty)$ since $\alpha$ is a monotone decreasing function.

    \textbf{Case $1 > \gamma,\sigma_0$.} In this case, \eqref{eq:corollary_ineq} is equivalent to
    \begin{align}
        \alpha(T_\sk) > \kappa
    \end{align}
    which is equivalent to $T_\sk \in [0,\alpha^{-1}(\kappa))$ since $\alpha$ is a monotone decreasing function.

    \textbf{Case $\sigma_0 \leq 1 \leq \gamma \text{ and } (\sigma_0,\gamma) \neq \mathbf{1}$.} In the case $\sigma_0 < 1 \leq \gamma$,
    \begin{align}
        \gamma^2 - 1 \geq 0 > \alpha(T_\sk) (\sigma_0^2 - 1)
    \end{align}
    for all values of $T_\sk$. Likewise, if $\sigma_0 \leq 1 < \gamma$,
    \begin{align}
        \gamma^2 - 1 > 0 \leq \alpha(T_\sk) (\sigma_0^2 - 1)
    \end{align}
    for all values of $T_\sk$.
\end{proof}

\subsection{Proof of~\cref{prop:ddpm_contraction}}
\label{subsec:proof_ddpm_contraction}

\begin{manualproposition}{\ref{prop:ddpm_contraction}}
Consider an empirical version of the discrete VP-SDE in~\eqref{eq:rSDE_ddpm}:
\begin{align*}
    \bx_{i-1} = \frac{1}{\sqrt{\alpha_i}} \big\{ \bx_i + (1-\alpha_i) s_{\bth}(\bx_i, i) \big\} + \sqrt{1 - \alpha_i} \bz,
\end{align*}
where $i \in \{1, \dots, N\}$. Assume the optimal score function, \ie, $s_{\bth}(\bx, i) = \nabla_{\bx} \log p_i (\bx)$, which is trained on an arbitrary data distribution $p_0$. Consider two sample trajectories $\{ \bx_{i} \}_{i=0}^N$ and $\{ \hat{\bx}_{i} \}_{i=0}^{\Ns}$ where $\Ns < N$, which are initialized with distinct distributions: $\bx_N \sim {\cal N}({\bs 0}, {\bs I})$ and $\hat{\bx}_{\Ns} \sim {\cal N}({\bs 0}, \gamma^2 {\bs I})$. Assuming that $\{ \bx_{i} \}_{i=0}^N$ is a bounded process such that $\| \bx_i \|_2 < B$ (as in~\citet{xu2023restart}), the expected error between samples from these two trajectories at step $i \in \{0, \dots, \Ns - 1\}$ is given by:
\begin{align}
    \mathbb{E}[ \| \bx_i - \hat{\bx}_i \|_2^2] \le \frac{2C}{1 - \lambda^2} + \lambda^{2(\Ns - i)}(B^2 + \gamma^2 d),
\end{align}
where $\lambda$ denotes the contraction rate:
\begin{align}
    \lambda \coloneqq \max_{j \in \{ i+1, \dots, \Ns\}} \sqrt{\alpha_j}\left( \frac{1 - \bar{\alpha}_{j-1}}{1 - \bar{\alpha}_{j}} \right),
\end{align}
and $C \coloneqq d(1 - \bar{\alpha}_{\Ns})$.
\end{manualproposition}
\begin{proof}
The proof is based on the contraction theorem for discrete stochastic processes~\citep{pham2008analysis}. However, unlike the original setup considered in~\citet{pham2008analysis}, our focused trajectories start from distinct initial distributions and terminal timesteps: $\bx_N \sim {\cal N}({\bs 0}, {\bs I})$ and $\hat{\bx}_{\Ns} \sim {\cal N}({\bs 0}, \gamma^2 {\bs I})$. To address this, we leave out $\{ \bx_{i} \}_{i=\Ns + 1}^N$ and focus on errors between $\{\bx_i\}_{i=0}^\Ns$ and $\{ \hat{\bx}_i \}_{i=0}^\Ns$.

Employing $\Ns$ as the terminal timestep, the discrete contraction theorem (\ie, Theorem 1 in~\citet{pham2008analysis}) gives the following error bound between $\bx_i$ and $\hat{\bx}_i$ at time $i \in \{0,\dots, \Ns - 1\}$:
\begin{align}
\label{eq:proof_CT_phamthm1}
    \mathbb{E}[ \| \bx_i - \hat{\bx}_i \|_2^2] \le \frac{2C}{1 - \lambda^2} + \lambda^{2(\Ns - i)}\mathbb{E}[ \| \bx_\Ns - \hat{\bx}_\Ns \|_2^2].
\end{align}
where $C \coloneqq d(1 - \bar{\alpha}_{\Ns})$. Here $\lambda$ indicates the contraction rate that determines the speed of error-decaying:
\begin{align}
    \lambda \coloneqq \max_{j \in \{ i+1, \dots, \Ns\}} \sqrt{\alpha_j}\left( \frac{1 - \bar{\alpha}_{j-1}}{1 - \bar{\alpha}_{j}} \right),
\end{align}
Next, we upper-bound the terminal error $\mathbb{E}[ \| \bx_\Ns - \hat{\bx}_\Ns \|_2^2]$ as:
\begin{align}
    \mathbb{E}[ \| \bx_\Ns - \hat{\bx}_\Ns \|_2^2] & = \mathbb{E}[ \| \bx_\Ns \|_2^2 ] + \mathbb{E}[ \| \hat{\bx}_\Ns \|_2^2] + 2 \mathbb{E}[ \bx_\Ns^{\textsf{T}}\hat{\bx}_\Ns ] \\
    & = \mathbb{E}[ \| \bx_\Ns \|_2^2 ] + \mathbb{E}[ \| \hat{\bx}_\Ns \|_2^2  ] \\ &\le B^2 + \gamma^2 d, \label{eq:proof_CT_initerr}
\end{align}
where the second equality is due to the independent initialization $\hat{\bx}_{\Ns} \sim {\cal N}({\bs 0}, \gamma^2 {\bs I})$. The inequality comes from the bounded assumption $\| \bx_i \|_2 < B$. Now plugging the inequality in~\eqref{eq:proof_CT_initerr} into~\eqref{eq:proof_CT_phamthm1} yields the desired result:
\begin{align}
    \mathbb{E}[ \| \bx_i - \hat{\bx}_i \|_2^2] & \le \frac{2C}{1 - \lambda^2} + \lambda^{2(\Ns - i)}\mathbb{E}[ \| \bx_\Ns - \hat{\bx}_\Ns \|_2^2] \\
    & \le \frac{2C}{1 - \lambda^2} + \lambda^{2(\Ns - i)}(B^2 + \gamma^2 d). \\
\end{align}
This completes the proof.
\end{proof}

\subsection{Proof of~\cref{prop:gaussian_ODE}}
\label{subsec:proof_gaussian_ODE}

\begin{manualproposition}{\ref{prop:gaussian_ODE}}
    Consider the same data distribution $p_0$ and the optimal score function ${\bs s}_{\bth} (\bx, t)$ as~\cref{prop:gaussian_SDE}.  Suppose the PF-ODE in~\eqref{eq:rPFODE_emp} is initialized with $\hat{\bx}(T_{\text{skip}}) \sim {\cal N}(\hat{{\bs \mu}}_{T_{\text{skip}}}, \hat{{\bs \Sigma}}_{T_{\text{skip}}} ) $. Then, the resulting distribution at $t = 0$ is given by $\hat{\bx}_{\text{ODE}}(0) \sim {\cal N} (\hat{{\bs \mu}}_{0, \text{ODE}}, \hat{{\bs \Sigma}}_{0, \text{ODE}})$, where
    \begin{align}
        \hat{{\bs \mu}}_{0, \text{ODE}} &\coloneqq {\bs \mu}_0 + {\bs \Sigma}_0^{1/2} {\bs \Sigma}_{T_{\text{skip}}}^{-1/2} (\hat{{\bs \mu}}_{T_{\text{skip}}} - {\bs \mu}_{T_{\text{skip}}}), \\
        \hat{{\bs \Sigma}}_{0, \text{ODE}} &\coloneqq {\bs \Sigma}_0 {\bs \Sigma}_{T_{\text{skip}}}^{-1}\hat{{\bs \Sigma}}_{T_{\text{skip}}}. 
    \end{align}
    Here, ${\bs \mu}_{T_{\text{skip}}}$ and ${\bs \Sigma}_{T_{\text{skip}}}$ are defined as in Eqs. (\ref{eq:mu_Ts}) and (\ref{eq:Sigma_Ts}), respectively.
\end{manualproposition}

\begin{proof}
By Lemma \ref{lem:vp_sde_solution}, the score function is given as
\begin{align}
    \nabla_{\bx} \log p_t(\bx) = \nabla_{\bx} \log \mathcal{N}(\bx | \bs{\mu}_t, \bs{\Sigma}_t ) = - \bs{\Sigma}_t^{-1}(\bx - \bs{\mu}_t)
\end{align}
such that the PF-ODE is
\begin{align}
    d\hat{\bx}_t &= \textstyle [-\frac{1}{2} \beta(t) \hat{\bx}_t - \frac{1}{2} \beta(t) \nabla \log p_t(\hat{\bx}_t)] \, dt \\
    &= \textstyle [-\frac{1}{2}\beta(t)\hat{\bx}_t + \frac{1}{2} \beta(t) \bs{\Sigma}_t^{-1}(\hat{\bx}_t - \alpha(t) \bm_0)] \, dt \\
    &= \textstyle \frac{1}{2}\beta(t)(\bs{\Sigma}_t^{-1} - \bs{I}) \hat{\bx}_t \, dt - \frac{1}{2} \beta(t) \bs{\Sigma}_t^{-1} \bm_t \, dt.
\end{align}
or with change of variables $(\hat{\by}_s,s) \leftarrow (\hat{\bx}_{T-t},T-t)$,
\begin{align}
    d\hat{\by}_s &= \textstyle -\frac{1}{2}\beta(T-s)(\bs{\Sigma}_{T-s}^{-1} - \bs{I}) \hat{\by}_s \, ds + \frac{1}{2} \beta(T-s) \bs{\Sigma}_{T-s}^{-1} \bm_{T-s} \, ds \\
    &= \textstyle \frac{1}{2} \alpha(T-s) \beta(T-s) \bs{Q} \diag(\frac{\alpha(T-s) (\lambda_n - 1)}{1 + \alpha(T-s)^2 (\lambda_n - 1)}) \bs{Q}^\top \hat{\by}_s \, ds \\
    &\quad \textstyle + \frac{1}{2} \alpha(T-s) \beta(T-s) \bs{Q} \diag(\frac{1}{1 + \alpha(T-s)^2 (\lambda_n - 1)}) \bs{Q}^\top \bm_0 \, ds.
\end{align}
Define
\begin{align}
    \textstyle \phi(x;a) = \frac{1}{2} \log(ax^2 + 1)
\end{align}
such that
\begin{align}
    \textstyle \frac{d\phi(\alpha(T-s);\lambda_n-1)}{ds} &= \textstyle \frac{d\phi(\alpha(T-s);\lambda_n-1)}{d\alpha(T-s)} \cdot \frac{d\alpha(T-s)}{d(T-s)} \cdot \frac{d(T-s)}{ds} \\
    &= \textstyle \frac{\alpha(T-s)(\lambda_n-1)}{1+\alpha(T-s)^2(\lambda_n-1)} \cdot \left( -\frac{1}{2} \beta(T-s) \alpha(T-s) \right) \cdot (-1) \\
    &= \textstyle \frac{1}{2} \alpha(T-s) \beta(T-s) \frac{\alpha(T-s)(\lambda_n-1)}{1+\alpha(T-s)^2(\lambda_n-1)}.
\end{align}
This motivates the transition matrix
\begin{align}
    \bs{\Psi}(s,s_0) &\coloneqq \bs{Q} \diag(e^{\phi(\alpha(T-s);\lambda_n-1) - \phi(\alpha(T-s_0);\lambda_n-1)}) \bs{Q}^\top \\
    &= \textstyle \bs{Q} \diag(\frac{\sqrt{(\lambda_n-1)\alpha(T-s)^2 + 1}}{\sqrt{(\lambda_n-1)\alpha(T-s_0)^2 + 1}}) \bs{Q}^\top \\
    &= \textstyle \bs{\Sigma}_{T-s}^{1/2} \bs{\Sigma}_{T-s_0}^{-1/2}
\end{align}
with which we can calculate the solution as
\begin{align}
    \hat{\by}_s &= \textstyle \bs{\Psi}(s,0) \hat{\by}_0 + \int_0^s \frac{1}{2} \bs{\Psi}(s,\sigma) \alpha(T-\sigma) \beta(T-\sigma) \bs{Q} \diag(\frac{1}{1 + \alpha(T-\sigma)^2 (\lambda_n - 1)}) \bs{Q}^\top \bm_0 \, d\sigma \\
    &= \textstyle \bs{\Sigma}_{T-s}^{1/2} \bs{\Sigma}_{T}^{-1/2} \hat{\by}_0 + \bm_{T-s} - \bs{\Sigma}_{T-s}^{1/2} \bs{\Sigma}_T^{-1/2} \bm_T \\
    &= \textstyle \bm_{T-s} + \bs{\Sigma}_{T-s}^{1/2} \bs{\Sigma}_T^{-1/2}(\hat{\by}_0 - \bm_T) \label{eq:append_pf_ode_sol}
\end{align}
where the integral in the first line is calculated as
\begin{align}
    &\textstyle \int_0^s \frac{1}{2} \bs{\Psi}(s,\sigma) \alpha(T-\sigma) \beta(T-\sigma) \bs{Q} \diag(\frac{1}{1 + \alpha(T-\sigma)^2 (\lambda_n - 1)}) \bs{Q}^\top \bm_0 \, d\sigma \\
    &= \textstyle \bs{Q} \diag(\sqrt{(\lambda_n-1)\alpha(T-s)^2 + 1}) \diag(\int_0^s \frac{1}{2} \alpha(T-\sigma) \beta(T-\sigma) \frac{1}{(1 + \alpha(T-\sigma)^2(\lambda_n-1))^{3/2}} \, d\sigma) \bs{Q}^\top \bm_0 \\
    &= \textstyle \bs{Q} \diag(\sqrt{(\lambda_n-1)\alpha(T-s)^2 + 1}) \diag( \left[ \frac{\alpha(T-\sigma)}{\sqrt{(\lambda_n-1)\alpha(T-\sigma)^2 + 1}} \right]_0^s ) \bs{Q}^\top \bm_0 \\
    &= \textstyle \bs{Q} \diag(\sqrt{(\lambda_n-1)\alpha(T-s)^2 + 1}) \diag( \left[ \frac{\alpha(T-s)}{\sqrt{(\lambda_n-1)\alpha(T-s)^2 + 1}} - \frac{\alpha(T)}{\sqrt{(\lambda_n-1)\alpha(T)^2 + 1}} \right] ) \bs{Q}^\top \bm_0 \\
    &= \textstyle \alpha(T-s) \bm_0 - \alpha(T) \bs{\Sigma}_{T-s}^{1/2}\bs{\Sigma}_T^{-1/2} \bm_0 \\
    &= \textstyle \bm_{T-s} - \bs{\Sigma}_{T-s}^{1/2} \bs{\Sigma}_T^{-1/2} \bm_T.
\end{align}
Setting $s = T = T_\sk$ in \eqref{eq:append_pf_ode_sol},
\begin{align}
    \hat{\bx}_0 = \hat{\by}_{T_\sk} &= \bm_{T_\sk-T_\sk} + \bs{\Sigma}_{T_\sk-T_\sk}^{1/2} \bs{\Sigma}_{T_\sk}^{-1/2}(\hat{\by}_0 - \bm_{T_\sk}) \\
    &= \bm_0 + \bs{\Sigma}_0^{1/2} \bs{\Sigma}_{T_\sk}^{-1/2}(\hat{\bx}_{T_\sk} - \bm_{T_\sk}) \\
    &= \bm_0 + \bs{\Sigma}_0^{1/2} \bs{\Sigma}_{T_\sk}^{-1/2}(\bE[\hat{\bx}_{T_\sk}] - \bm_{T_\sk}) + \bs{\Sigma}_0^{1/2} \bs{\Sigma}_{T_\sk}^{-1/2}(\hat{\bx}_{T_\sk} - \bE[\hat{\bx}_{T_\sk}]).
\end{align}
It follows that if $\hat{\bx}_{T_\sk}$ is Gaussian, $\hat{\bx}_0$ is also Gaussian with
\begin{align}
    \bE[\hat{\bx}_0] &= \bm_0 + \bs{\Sigma}_0^{1/2} \bs{\Sigma}_{T_\sk}^{-1/2}(\bE[\hat{\bx}_{T_\sk}] - \bm_{T_\sk}), \\
    \cov[\hat{\bx}_0] &= \bs{\Sigma}_0 \bs{\Sigma}_{T_\sk}^{-1} \cov[\hat{\bx}_{T_\sk}].
\end{align}
This concludes the proof.
\end{proof}

\subsection{Proof of Proposition \ref{prop:contin_contraction}}
\label{subsec:contin_contraction}

\begin{manualproposition}{\ref{prop:contin_contraction}}
    Assume the same setup as Theorem 3 in \citet{xu2023restart}, \ie, $\| t \nabla \log p_t(\bx) \| \leq L_1$ and $\|\bx_t\| < B/2$ for any $\bx_t$ in the support of $p_t$ and reverse-SDE trajectories. Let
    \begin{gather}
        \bx_{t_{\min}}^{\sde} = \sde(\bx_{t_{\max}}, t_{\max} \rightarrow t_{\min}), \ \by_{t_{\min}}^{\sde} = \sde(\by_{t_{\max}}, t_{\max} \rightarrow t_{\min}) \\
        \bx_{t_{\min}}^{\ode} = \ode(\bx_{t_{\max}}, t_{\max} \rightarrow t_{\min}), \ \by_{t_{\min}}^{\ode} = \ode(\by_{t_{\max}}, t_{\max} \rightarrow t_{\min}) 
    \end{gather}
    where $\sde$ and $\ode$ denote solutions to the reverse-SDE and PF-ODE, respectively. Then
    \begin{gather}
    \tv(\bx_{t_{\min}}^{\ode},\by_{t_{\min}}^{\ode}) = \tv(\bx_{t_{\max}},\by_{t_{\max}}) \\
    \textstyle \tv(\bx_{t_{\min}}^{\sde},\by_{t_{\min}}^{\sde}) \leq \left( 1 - 2Q\left( \frac{B}{2\sqrt{\alpha(t_{\max})^{-2} - \alpha(t_{\min})^{-2}}} \right) \cdot e^{-B L_1 / t_{\min} - L_1^2 \alpha(t_{\max})^{-2} / t_{\min}^2} \right) \tv(\bx_{t_{\max}},\by_{t_{\max}})
    \end{gather}
    where $Q(r) \coloneqq \mathbb{P}(\varepsilon \geq r)$ for $\varepsilon \sim \mathcal{N}(0,1)$.
\end{manualproposition}

\begin{proof}
    The first equality follows by noting that the total variation distance is invariant under bijective maps due to the data processing inequality. We now prove the second inequality. Suppose $\bx_t$ follows the reverse SDE, and consider the change of variables $\bar{\bx}_t = \bx_t / \alpha(t)$. Then, by the It\^{o} formula,
    \begin{align}
        d\bar{\bx}_t &= \textstyle (-\frac{1}{2} \beta(t) \alpha(t)) (-\alpha(t)^{-2})\bx_t \, dt + \alpha(t)^{-1} d\bx_t \\
        &= \textstyle (-\frac{1}{2} \beta(t) \alpha(t)) (-\alpha(t)^{-2})\bx_t \, dt + \alpha(t)^{-1} ([-\frac{1}{2}\beta(t)\bx_t - \beta(t) \nabla \log p_t(\bx_t)] \, dt + \sqrt{\beta(t)} \, d\bar{\bw}_t) \\
        &= \textstyle -\alpha(t)^{-1} \beta(t) \nabla \log p_t(\bx_t) \, dt + \alpha(t)^{-1} \sqrt{\beta(t)} \, d\bar{\bw}_t \\
        &= \textstyle -\alpha(t)^{-2} \beta(t) \nabla \log \bar{p}_t(\bar{\bx}_t) \, dt + \alpha(t)^{-1} \sqrt{\beta(t)} \, d\bar{\bw}_t \\
        &= - (\alpha(t)^{-2})' \nabla \log \bar{p}_t(\bar{\bx}_t) \, dt + \sqrt{(\alpha(t)^{-2})'} \, d\bar{\bw}_t
    \end{align}
    where $\bar{p}_t$ is the density of $\bx_t / \alpha(t)$ for $\bx_t \sim p_t$, and we have used the change-of-variables formula of probability density functions at the fourth line. Next, using the change of time variable $\bar{t} = \alpha(t)^{-2}$, we get
    \begin{align}
        d\bar{\bx}_{\bar{t}} = - \nabla \log \bar{p}_{g^{-1}(\bar{t})}(\bar{\bx}_{\bar{t}}) \, d\bar{t} + d\bar{\bw}_{\bar{t}}.
    \end{align}
    Then following an analogous process as the proof of Lemma 5 in \citet{xu2023restart}, we obtain the inequality
    \begin{align}
    \textstyle \tv(\bar{\bx}_{\bar{t}_{\min}},\bar{\by}_{\bar{t}_{\min}}) \leq \left( 1 - 2Q\left( \frac{B}{2\sqrt{\bar{t}_{\max} - \bar{t}_{\min}}} \right) \cdot e^{-B L_1 / t_{\min} - L_1^2 \bar{t}_{\max} / t_{\min}^2} \right) \tv(\bar{\bx}_{\bar{t}_{\max}},\bar{\by}_{\bar{t}_{\max}}) \label{eq:append_contract_ineq}
    \end{align}
    where $\bar{t}_{\min} = \alpha(t_{\min})^{-2}$, $\bar{t}_{\max} = \alpha(t_{\max})^{-2}$. Since the map $\by_{s} \mapsto \by_{\alpha(s)^{2}} / \alpha(s)$ is bijective, we again have
    \begin{align}
    \tv(\bar{\bx}_{\bar{t}_{\min}},\bar{\by}_{\bar{t}_{\min}}) &= \tv(\bx_{t_{\min}},\by_{t_{\min}}), \\ \tv(\bar{\bx}_{\bar{t}_{\max}},\bar{\by}_{\bar{t}_{\max}}) &= \tv(\bx_{t_{\max}},\by_{t_{\max}}),
    \end{align}
    by the data processing inequality. Plugging this relation into \eqref{eq:append_contract_ineq} yields the contraction result for the reverse SDE.
\end{proof}

\section{Implementation Details}

\noindent \textbf{Pretrained models.} The pretrained model for CelebA was constructed by ourselves by following the settings in~\citet{um2023don}. The models for LSUN-Bedrooms and ImageNet were taken from the checkpoints given by~\citet{dhariwal2021diffusion}. For the results on ImageNet $256 \times 256$. we respect the approaches in~\citet{sehwag2022generating, um2023don, um2024self} and leverage the upscaling model developed in~\citet{dhariwal2021diffusion}.

\noindent \textbf{Baselines.} The BigGAN model is based upon the same architecture used in~\citet{choi2020fair}\footnote{\url{https://github.com/ermongroup/fairgen}}, and we follow the training setup outlined in the official project page of BigGAN\footnote{\url{https://github.com/ajbrock/BigGAN-PyTorch}}. The StyleGAN results were obtained via the pretrained model offered by~\citet{karras2019style}\footnote{\url{https://github.com/NVlabs/stylegan}}. The ADM~\citep{dhariwal2021diffusion} baselines used in our experiments employed the same pretrained models as those used in our framework, which is provided in the authors' codebase\footnote{\url{https://github.com/openai/guided-diffusion}}. The results of LDM~\citep{rombach2022high} were obtained from the checkpoint offered by~\citet{rombach2022high}\footnote{\url{https://github.com/CompVis/latent-diffusion}}. For the EDM~\citep{karras2022elucidating} baseline, we used the checkpoint given in the official project page of~\cite{karras2022elucidating}\footnote{\url{https://github.com/NVlabs/edm}}. The DiT~\cite{Peebles2022DiT} baseline employed the pretrained model provided in the official code repository\footnote{\url{https://github.com/facebookresearch/DiT}}. The authors of CADS~\citep{sadat2023cads} do not provide the official codebase, so we implemented by ourselves based upon the pseudocode provided in the manuscript~\cite{sadat2023cads}. 

The implementation of~\citet{sehwag2022generating} is based upon the descriptions provided in their original paper, employing the same pretrained models as ours (\ie, the ADM checkpoints~\citep{dhariwal2021diffusion}). Specifically on CelebA, we created an out-of-distribution (OOD) classifier to distinguish whether an input belongs to CelebA or other datasets (\eg, ImageNet). We then incorporated the negative log-likelihood gradient of the classifier (targeting the in-distribution class) into ancestral sampling to yield low-density guidance. For implementing~\citet{um2023don}, we followed the settings described in their paper for all datasets under consideration. Specifically, using the same ADM pretrained models as ours, we applied U-Net encoders for minority classifiers and utilized the entire training set to build the classifiers, except for the minority predictor on LSUN-Bedrooms, where only 10\% of the training set was used. For the ImageNet $256 \times 256$ results of~\citet{um2023don}, we used the upscaling model~\citep{dhariwal2021diffusion} as described in~\citet{um2023don}.

In the additional CelebA baseline with classifier guidance targeting minority annotations (\ie, ADM-ML in~\cref{tab:main_results}), we respected the same settings outlined in~\citet{um2023don, um2024self}. In particular, the classifier was trained to predict four minority attributes: (i) ``Bald''; (ii) ``Eyeglasses''; (iii) ``Mustache''; (iv) ``Wearing\_Hat''. During inference, we generated samples by combining random selections of these four attributes (\eg, bald but not wearing glasses) using the classifier guidance. The backbone model for ADM-ML was the same as ours. The results of~\citet{um2024self} is based upon their the official codebase\footnote{\url{https://github.com/soobin-um/sg-minority}}, respecting their recommended settings reported in the original paper~\citep{um2024self}. The temperature sampling results were obtained from the method proposed in~\citep{dhariwal2021diffusion} (in Section G), \ie, by using a $\tau$-scaled score model ${\bs s}_\bth (\bx, t) / \tau$ with $\tau = 1.01$.

\noindent \textbf{Evaluation metrics.} To compute the Local Outlier Factor (LOF)~\citep{breunig2000lof} for the generated samples, we used the PyOD implementation~\citep{zhao2019pyod}\footnote{\url{https://pyod.readthedocs.io/en/latest/}}. The number of nearest neighbors for calculating AvgkNN and LOF was set to 5 and 20, respectively, as these are commonly used values in practice. Following the approach in~\citet{sehwag2022generating, um2023don}, both AvgkNN and LOF were computed in the feature space of ResNet-50. The Rarity Score~\citep{han2022rarity} was computed with $k = 5$ using the implementation available on the official project page\footnote{\url{https://github.com/hichoe95/Rarity-Score}}.

Clean Fréchet Inception Distance (cFID)~\citep{parmar2022aliased} was evaluated using the official implementation\footnote{\url{https://github.com/GaParmar/clean-fid}}. Spatial FID (sFID)~\cite{nash2021generating} was computed based on the official PyTorch FID code~\citep{heusel2017gans}\footnote{\url{https://github.com/mseitzer/pytorch-fid}} with modifications to utilize spatial features (\ie, the first 7 channels from the intermediate \texttt{mixed\_6/conv} feature maps), rather than the standard \texttt{pool\_3} inception features. The results for Improved Precision \& Recall~\citep{kynkaanniemi2019improved} were obtained with $k = 5$ using the official codebase from~\citet{han2022rarity}. To evaluate the proximity to low-likelihood instances at the data tail, we used the least probable instances as baseline real data for computing the quality metrics. Specifically for CelebA, we selected the 10,000 real samples with the highest AvgkNN values. For LSUN-Bedrooms and ImageNet, we used the most unique 50,000 samples, which had the highest AvgkNN values, as baseline real data. All quality and diversity metrics were computed using 30,000 generated samples.

\noindent \textbf{Hyperparameters.} Our hyperparameter selection $(\gamma, \Delta_t)$ followed a two-step approach: first, we determined an appropriate $\Delta_t$ that ensures a non-negligible $\alpha(\Ts)$ (where $\Ts \coloneqq T - \Delta_t$), and then we performed a grid search to select $\gamma$. We empirically found that our framework is not that sensitive to the choice of $\Delta_t$, and in practice, setting $\Delta_t$ such that $\alpha(\Ts) > 0.01$ generally yields strong performance on low-resolution datasets (\eg, CelebA and ImageNet $64 \times 64$). For high-resolution benchmarks (\eg, LSUN-Bedrooms), a lower threshold of $\alpha(\Ts) > 0.005$ was sufficient, as these datasets are more sensitive to noise intensity~\citep{nichol2021improved}.  

For CelebA, we conducted a grid search over $\gamma^2 = \{ 4.0, 8.0, 12.0, 16.0, 18.0, 20.0 \}$, while for LSUN-Bedrooms, we searched over $\gamma^2 = \{ 2.0, 4.0, 6.0, 7.0, 7.5, 8.0 \}$. For the ImageNet results, the search was performed over $\gamma^2 = \{ 2.0, 4.0, 6.0, 6.5, 7.0, 8.0 \}$. Based on this, we selected the following final values: (i) $(\gamma^2, \Delta_t) = (18.0, 3)$ for CelebA; (ii) $(\gamma^2, \Delta_t) = (7.5, 0)$ for LSUN-Bedrooms; and (iii) $(\gamma^2, \Delta_t) = (6.5, 3)$ for the ImageNet cases.

We employed a global setting of 250 timesteps for sampling across all diffusion-based samplers, including both the baseline methods and our approach. For empirical analyses, such as the ablation studies, we reduced the number of timesteps to 100 for efficiency.

\noindent \textbf{Other details.} Our implementation is based on PyTorch~\cite{paszke2019pytorch}, and experiments were performed on twin NVIDIA A100 GPUs. Code is available at \url{https://github.com/soobin-um/BnS}.

\section{Additional Experimental Results}

\begin{figure}[t!]
    \centering
    \begin{subfigure}[h]{0.95\linewidth}
    \includegraphics[width=\linewidth]{./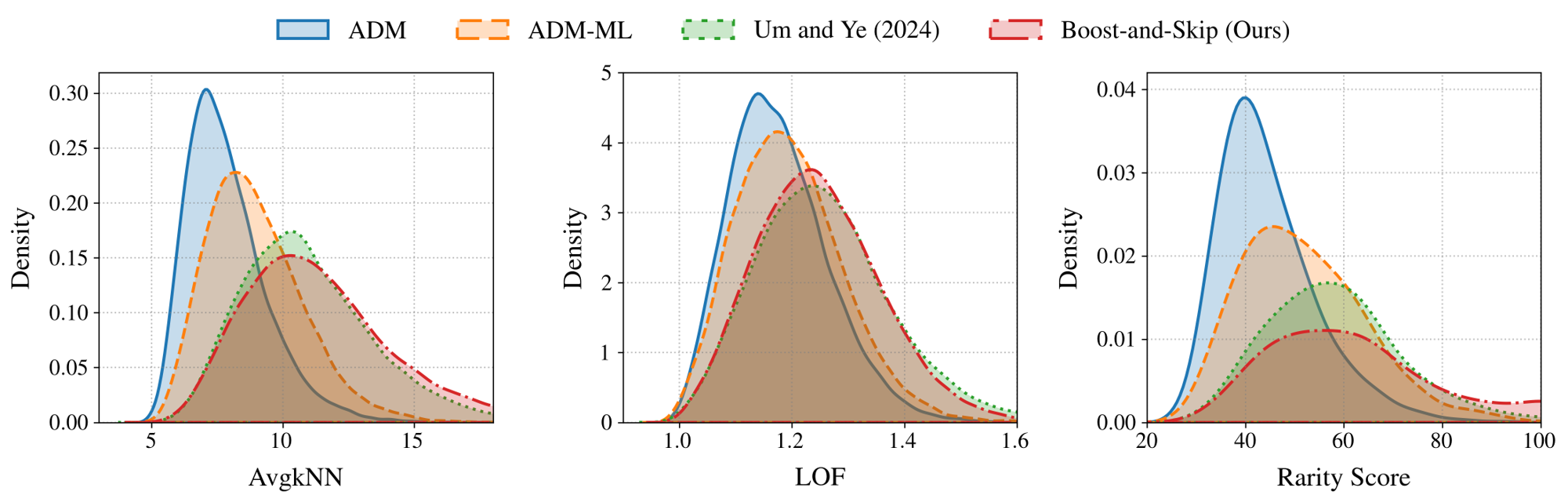}
    \end{subfigure}
    \begin{subfigure}[h]{0.95\linewidth}
    \includegraphics[width=\linewidth]{./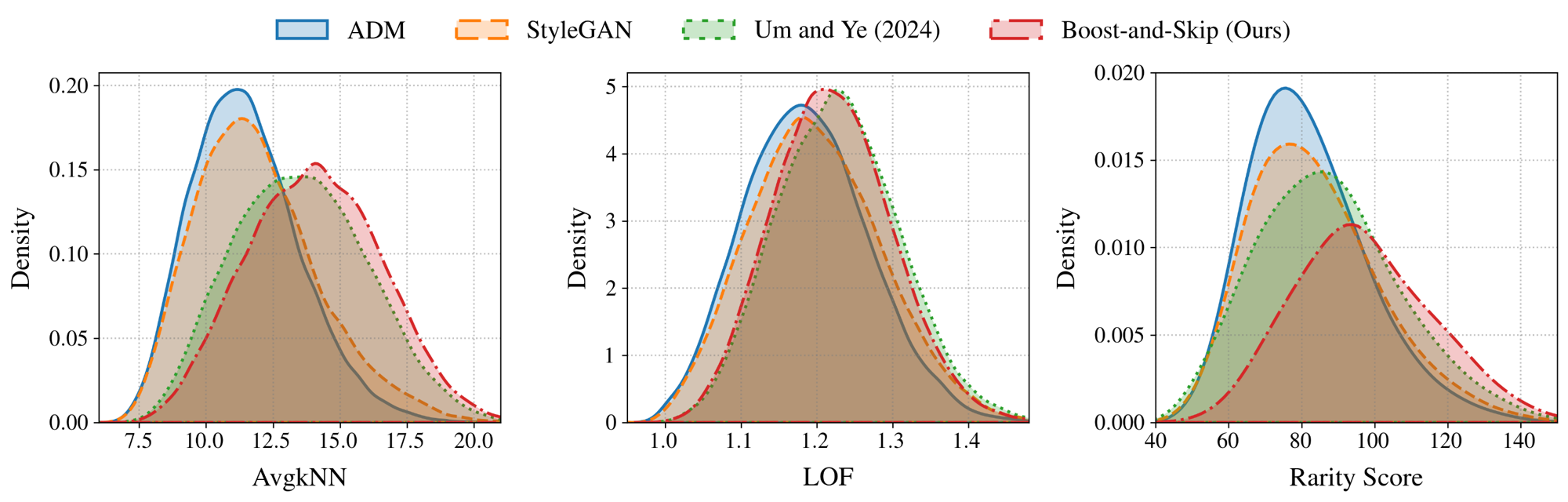}
    \end{subfigure}
    \begin{subfigure}[h]{0.95\linewidth}
    \includegraphics[width=\linewidth]{./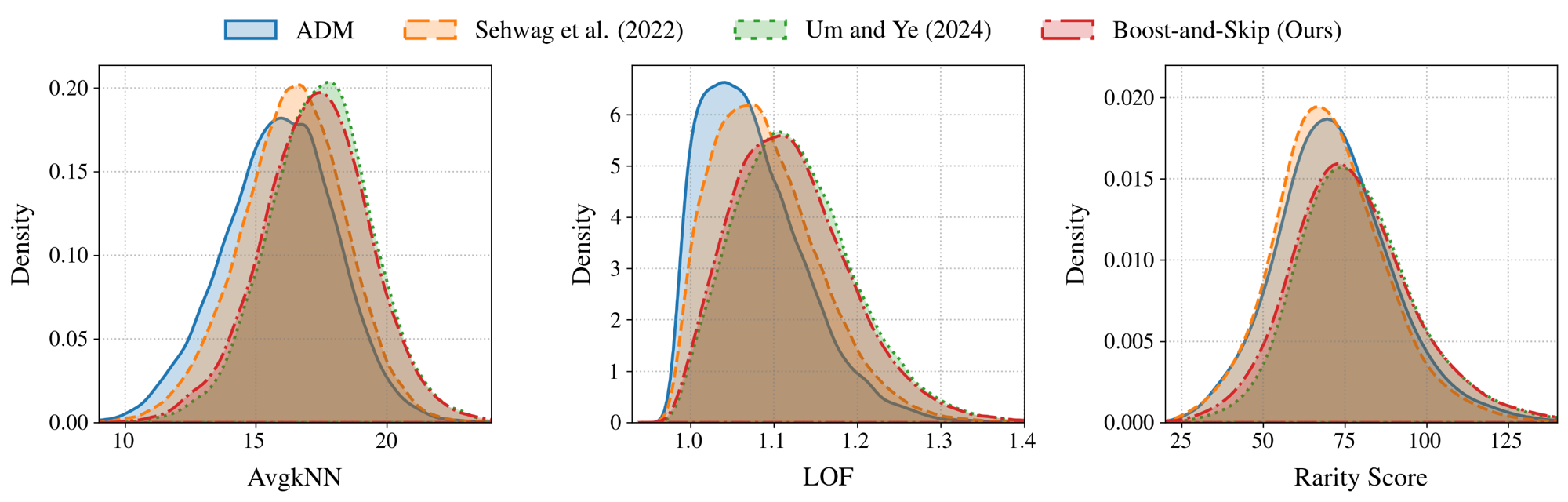}
    \end{subfigure}
    \begin{subfigure}[h]{0.95\linewidth}
    \includegraphics[width=\linewidth]{./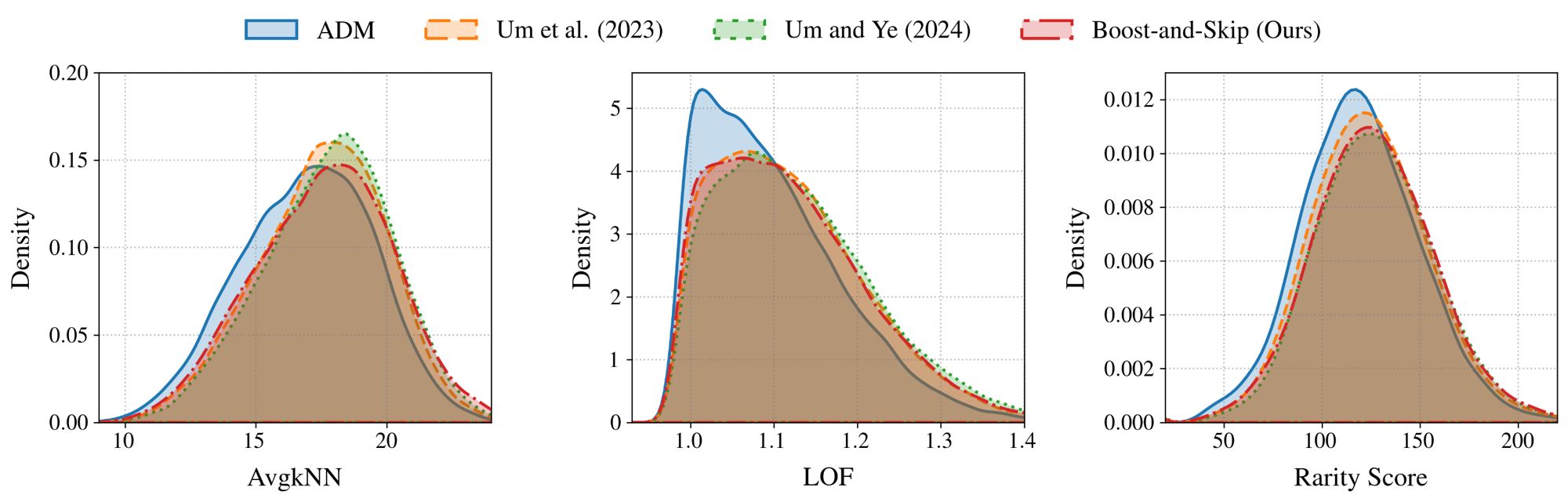}
    \end{subfigure}
    \caption{
    \textbf{Comparison of neighborhood density distributions across four benchmarks.}
    \textbf{(Top row)} CelebA $64 \times 64$.
    \textbf{(Second row)} LSUN-Bedrooms $256 \times 256$.
    \textbf{(Third row)} ImageNet $64 \times 64$.
    \textbf{(Fourth row)} ImageNet $256 \times 256$.
    ``AvgkNN'' refers to Average k-Nearest Neighbor, and ``LOF'' is Local Outlier Factor~\citep{breunig2000lof}.
    ``Rarity Score'' indicates a low-density metric proposed by~\citet{han2022rarity}.
    The higher values, the less likely samples for all three measures.
    }
\label{fig:nn_dist}
\vspace{-5mm}
\end{figure}

\subsection{Neighborhood distance distributions}
\label{subsec:nnd_dist}

\cref{fig:nn_dist} illustrates neighborhood metric results compared on our interested four benchmarks. Observe that for all three metrics, our approach performs consistently well, rivaling to computationally intensive guided minority samplers like~\citet{um2024self}. This highlight the practical significance of Boost-and-Skip, which can achieve great performance improvements in promoting minority features with significantly less computations.

\subsection{Additional generated samples}
\label{subsec:additional_samples}

To facilitate a more comprehensive qualitative comparison among the samplers, we present an extensive set of generated samples across all considered datasets. See Figures~\ref{fig:samples_many_celeba}-\ref{fig:samples_many_in256} for details. Observe that samples generated by our approach tend to capture unique dataset features, comparable to those produced by guided sampling techniques. This further highlights the key advantage of our method -- achieving strong minority generation performance with significantly lower computational costs than existing guidance-based approaches~\citep{sehwag2022generating, um2023don, um2024self}.

\begin{figure}[h]
    \begin{subfigure}[h]{0.321\textwidth}
    \centering
    \includegraphics[width=1.0\columnwidth]{./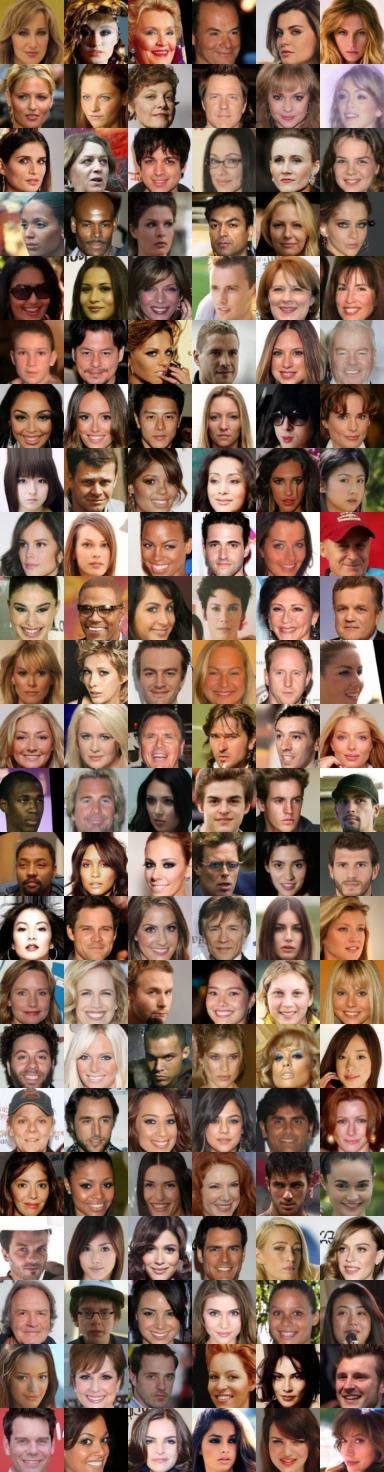}
    \caption{ADM~\citep{dhariwal2021diffusion}}
    \end{subfigure}
    \hfill
    \begin{subfigure}[h]{0.321\textwidth}
    \centering
    \includegraphics[width=1.0\columnwidth]{./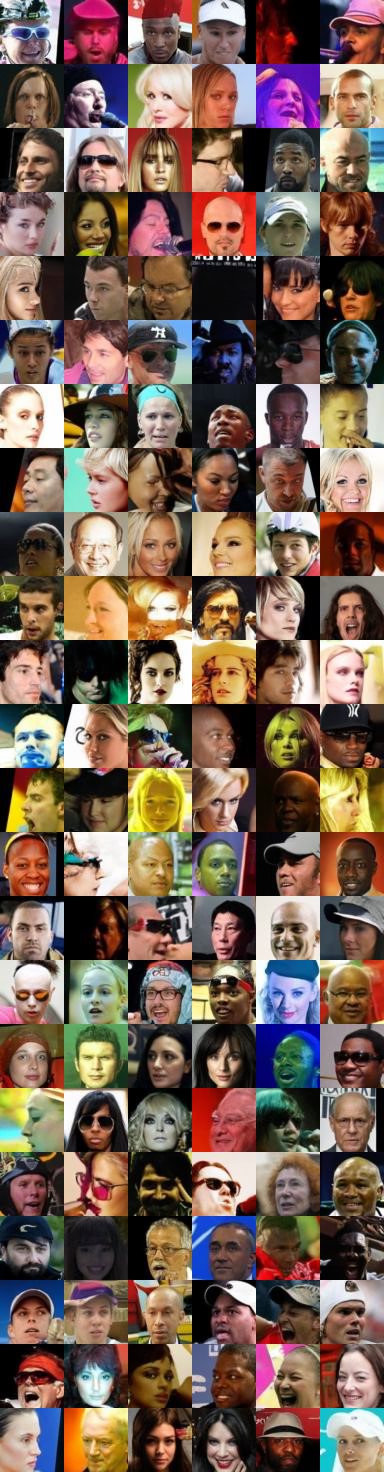}
    \caption{\citet{um2024self}}
    \end{subfigure}
    \hfill
    \begin{subfigure}[h]{0.321\textwidth}
    \centering
    \includegraphics[width=1.0\columnwidth]{./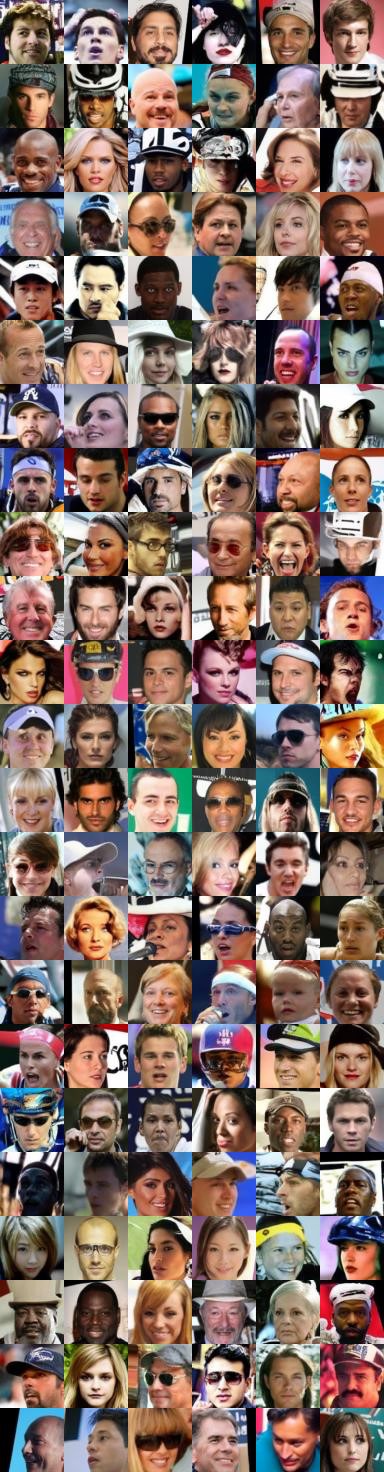}
    \caption{Boost-and-Skip (ours)}
    \end{subfigure}
\caption{Samples comparison on CelebA $64 \times 64$.}
\label{fig:samples_many_celeba}
\end{figure}

\begin{figure}[h]
    \begin{subfigure}[h]{0.321\textwidth}
    \centering
    \includegraphics[width=1.0\columnwidth]{./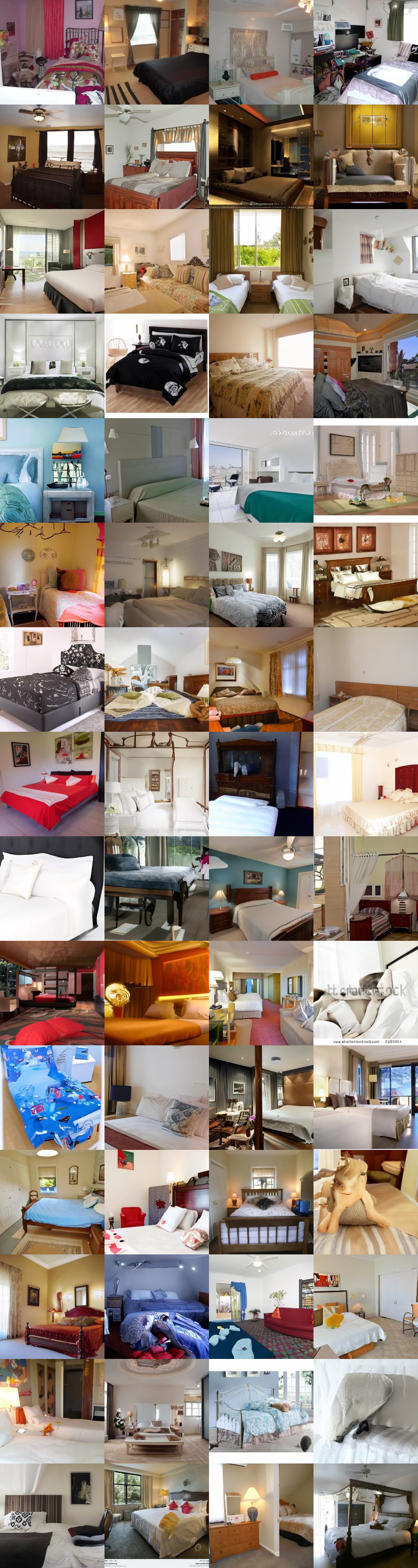}
    \caption{ADM~\citep{dhariwal2021diffusion}}
    \end{subfigure}
    \hfill
    \begin{subfigure}[h]{0.321\textwidth}
    \centering
    \includegraphics[width=1.0\columnwidth]{./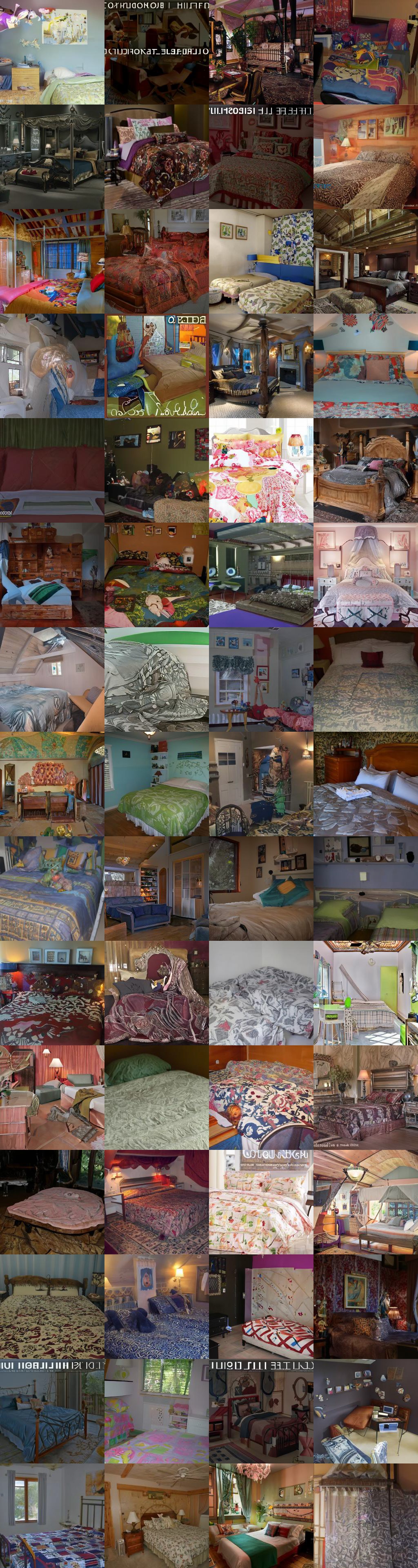}
    \caption{\citet{um2023don}}
    \end{subfigure}
    \hfill
    \begin{subfigure}[h]{0.321\textwidth}
    \centering
    \includegraphics[width=1.0\columnwidth]{./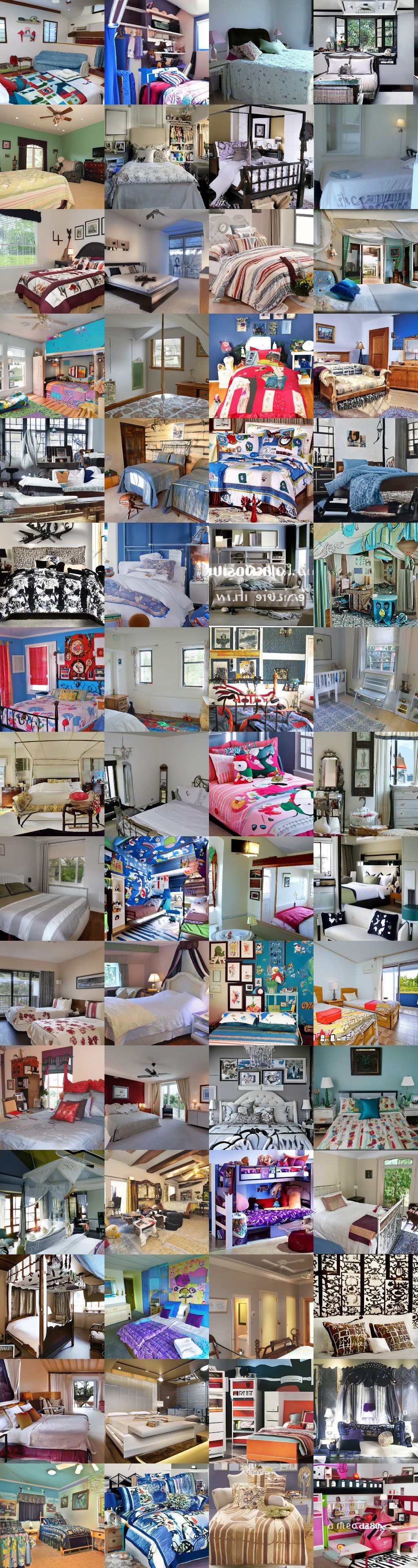}
    \caption{Boost-and-Skip (ours)}
    \end{subfigure}
\caption{Samples comparison on LSUN-Bedrooms $256 \times 256$.}
\label{fig:samples_many_lsun}
\end{figure}

\begin{figure}[h]
    \begin{subfigure}[h]{0.321\textwidth}
    \centering
    \includegraphics[width=0.955\columnwidth]{./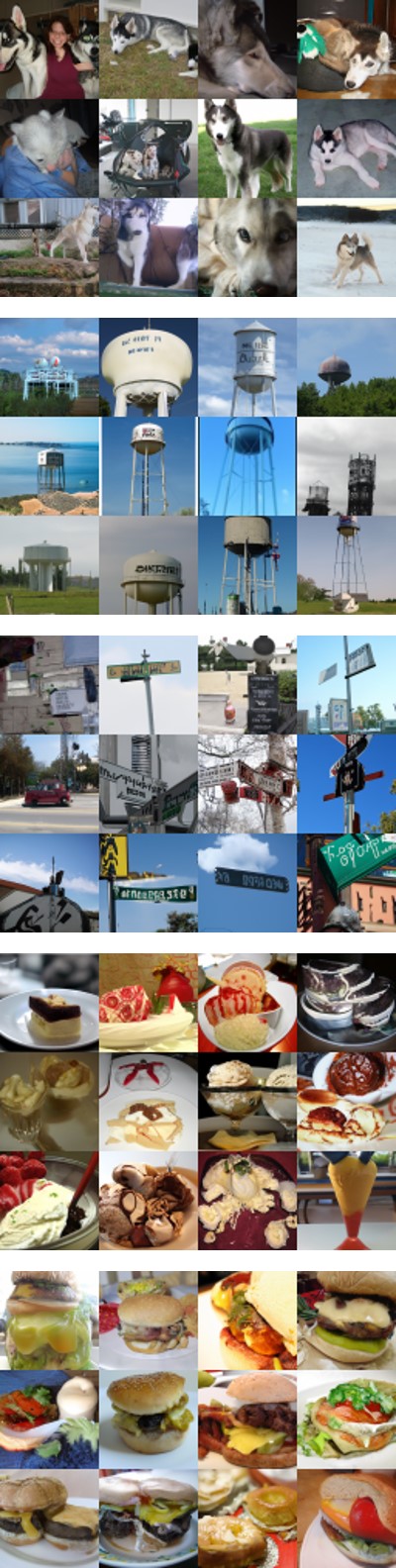}
    \caption{ADM~\citep{dhariwal2021diffusion}}
    \end{subfigure}
    \hfill
    \begin{subfigure}[h]{0.321\textwidth}
    \centering
    \includegraphics[width=0.955\columnwidth]{./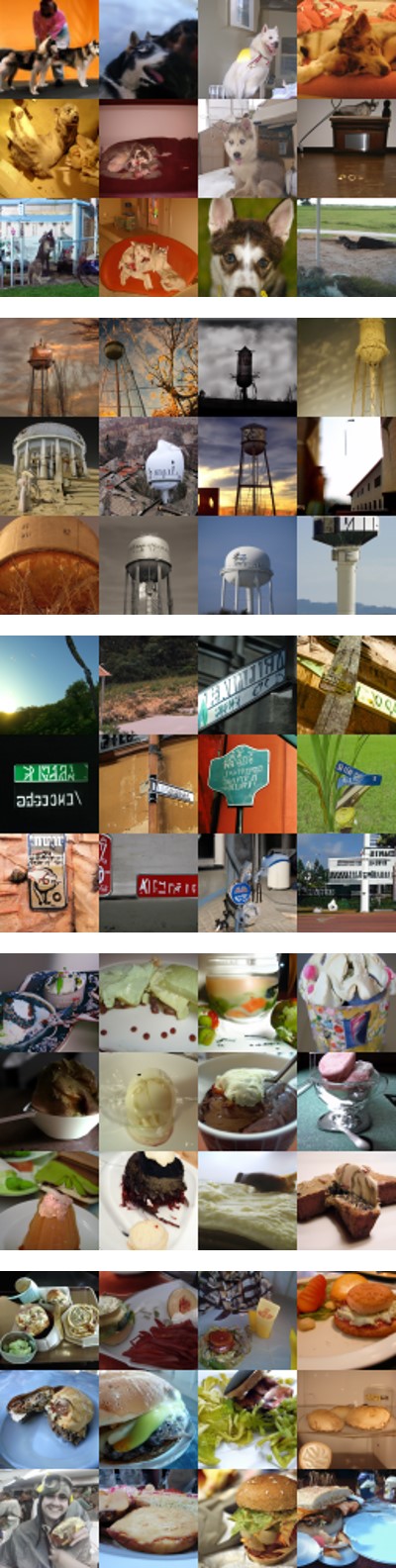}
    \caption{\citet{sehwag2022generating}}
    \end{subfigure}
    \hfill
    \begin{subfigure}[h]{0.321\textwidth}
    \centering
    \includegraphics[width=0.955\columnwidth]{./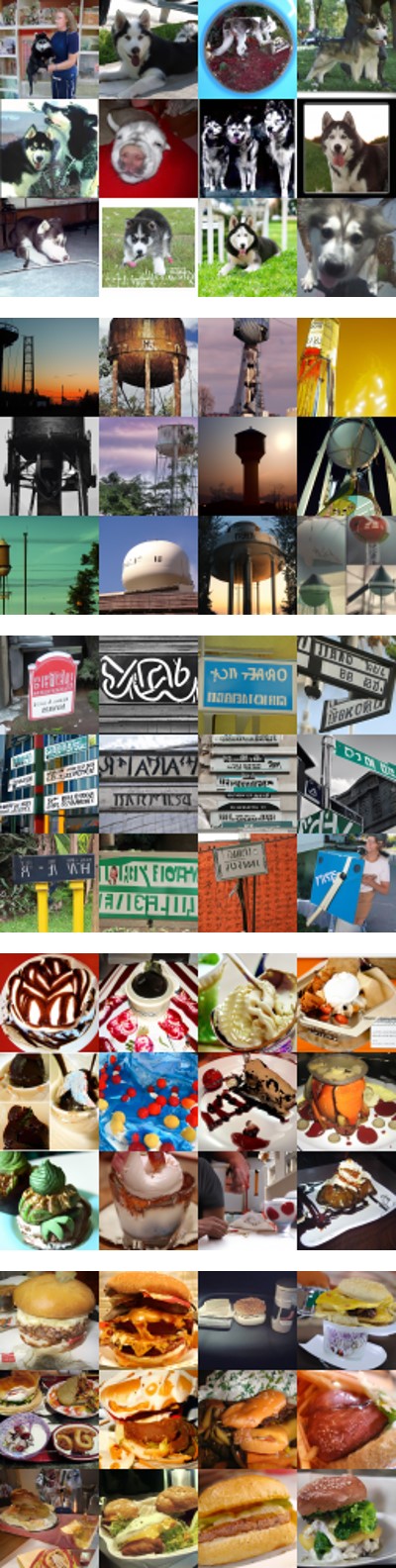}
    \caption{Boost-and-Skip (ours)}
    \end{subfigure}
\caption{Samples comparison on ImageNet $64 \times 64$. Generated samples from five classes are exhibited: (i) ``Siberian husky'' (top row); (ii) ``water tower'' (second row); (iii) ``street sign'' (third row); (iv) ``ice cream'' (fourth row); (v) ``cheeseburger'' (bottom row).}
\label{fig:samples_many_in64}
\end{figure}

\begin{figure}[h]
    \begin{subfigure}[h]{0.321\textwidth}
    \centering
    \includegraphics[width=0.955\columnwidth]{./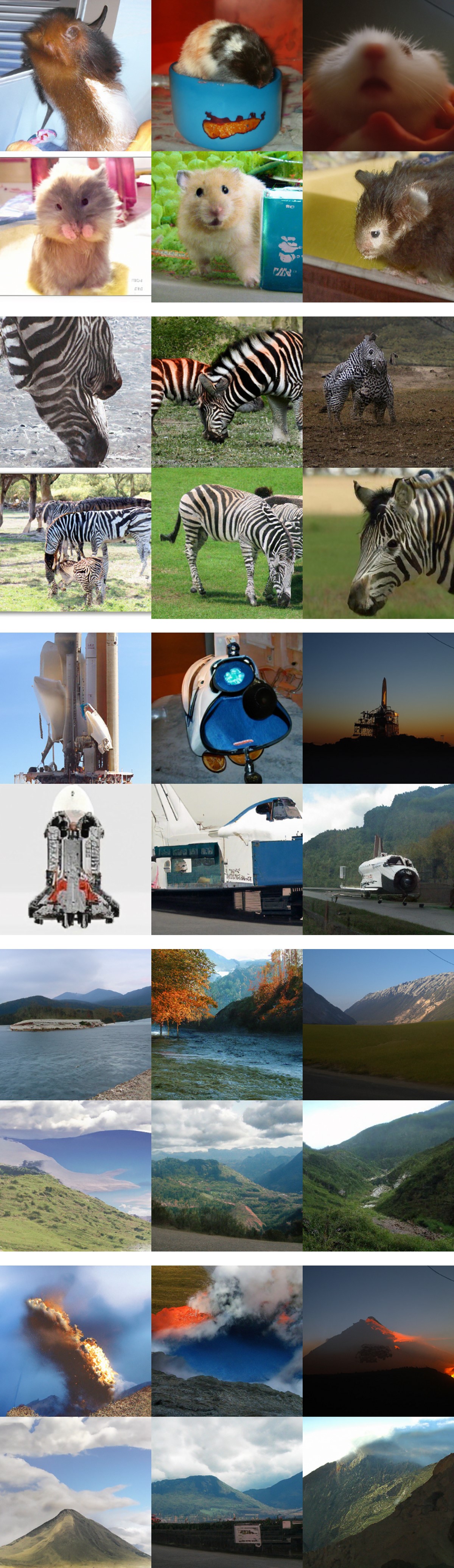}
    \caption{ADM~\citep{dhariwal2021diffusion}}
    \end{subfigure}
    \hfill
    \begin{subfigure}[h]{0.321\textwidth}
    \centering
    \includegraphics[width=0.955\columnwidth]{./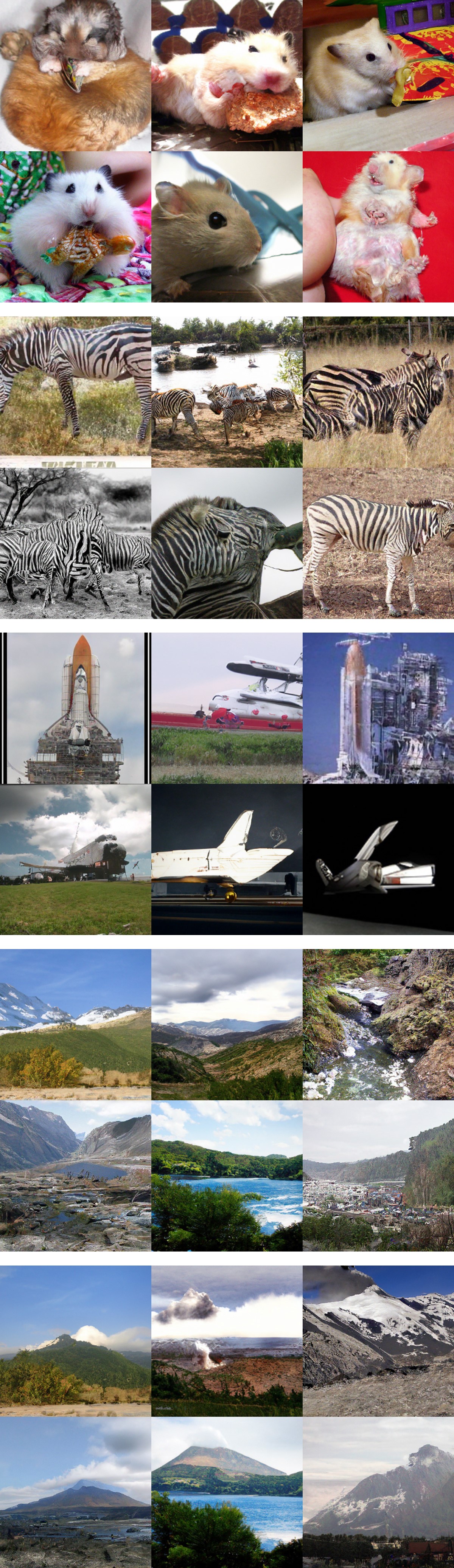}
    \caption{Temperature sampling}
    \end{subfigure}
    \hfill
    \begin{subfigure}[h]{0.321\textwidth}
    \centering
    \includegraphics[width=0.955\columnwidth]{./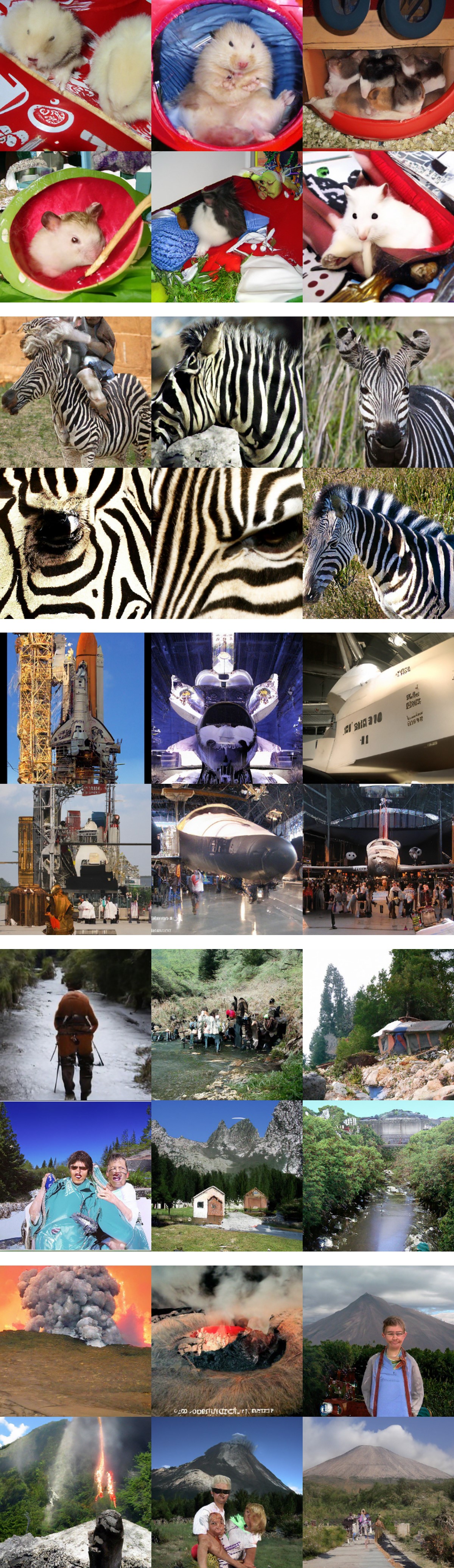}
    \caption{Boost-and-Skip (ours)}
    \end{subfigure}
\caption{Samples comparison on ImageNet $256 \times 256$. Generated samples from five classes are exhibited: (i) ``hamster'' (top row); (ii) ``zebra'' (second row); (iii) ``space shuttle'' (third row); (iv) ``valley'' (fourth row); (v) ``volcano'' (bottom row).}
\label{fig:samples_many_in256}
\end{figure}

\end{document}